\newtheorem{theorem}{Theorem}[section]
\newtheorem{lemma}[theorem]{Lemma}
\newtheorem{assumption}{Assumption}
\newtheorem{define}{Definition}
\DeclareMathOperator*{\KL}{KL}
\title{Artificial Neural Variability for Deep Learning: \\ On Overfitting, Noise Memorization, and Catastrophic Forgetting}
\author[1,2]{Zeke Xie}
\author[3]{Fengxiang He}
\author[3]{Shaopeng Fu}
\author[1,2]{Issei Sato }
\author[3]{Dacheng Tao }
\author[2,1]{Masashi Sugiyama}
\affil[1]{The University of Tokyo}
\affil[2]{RIKEN Center for AIP}
\affil[3]{UBTECH Sydney AI Centre, The University of Sydney}
\affil[ ]{\textit {xie@ms.k.u-tokyo.ac.jp}}
\affil[ ]{ \textit {\{fengxiang.he,shfu7008,dacheng.tao\}@sydney.edu.au}}
\affil[ ]{\textit {\{sato,sugi\}@k.u-tokyo.ac.jp}}
\begin{document}

\maketitle

\begin{abstract}
Deep learning is often criticized by two serious issues which rarely exist in natural nervous systems: overfitting and catastrophic forgetting. It can even memorize randomly labelled data, which has little knowledge behind the instance-label pairs. When a deep network continually learns over time by accommodating new tasks, it usually quickly overwrites the knowledge learned from previous tasks. Referred to as the {\it neural variability}, it is well-known in neuroscience that human brain reactions exhibit substantial variability even in response to the same stimulus. This mechanism balances accuracy and plasticity/flexibility in the motor learning of natural nervous systems. Thus it motivates us to design a similar mechanism named {\it artificial neural variability} (ANV), which helps artificial neural networks learn some advantages from ``natural'' neural networks. We rigorously prove that ANV plays as an implicit regularizer of the mutual information between the training data and the learned model. This result theoretically guarantees ANV a strictly improved generalizability, robustness to label noise, and robustness to catastrophic forgetting. We then devise a {\it neural variable risk minimization} (NVRM) framework and {\it neural variable optimizers} to achieve ANV for conventional network architectures in practice. The empirical studies demonstrate that NVRM can effectively relieve overfitting, label noise memorization, and catastrophic forgetting at negligible costs.
\footnote{Code: \url{https://github.com/zeke-xie/artificial-neural-variability-for-deep-learning}. }
\end{abstract}

\section{Introduction}
\label{sec:intro}

Inspired by natural neural networks, artificial neural networks have achieved comparable performance with humans in a variety of application domains \citep{lecun2015deep, witten2016data, silver2016mastering, he2016deep, litjens2017survey}. Deep neural networks are usually highly overparametrized \citep{keskar2017large, dinh2017sharp, arpit2017closer,kawaguchi2019effect}; the number of weights is usually way larger than the sample size. The extreme overparameterization gives deep neural network excellent approximation \citep{cybenko1989approximation, funahashi1989approximate, hornik1989multilayer, hornik1993some} and optimization \citep{allen2019convergence, arora2018optimization, li2018learning, allen2019learning} abilities as well as a prohibitively large hypothesis capacity. This phenomenon makes almost all capacity-based generalization bounds vacuous. Besides, former empirical results demonstrate that deep neural networks almost surely achieve zero training error even when the training data is randomly labelled \citep{zhang2017understanding}. This memorization of noise suggests that deep learning is “good at” overfitting. 


Deep learning performs poorly at learning multiple tasks from dynamic data distributions \citep{parisi2019continual}. The functionality of artificial neural networks is sensitive to weight perturbations. Thus, continually learning new tasks can quickly overwrite the knowledge learned through previous tasks, which is called catastrophic forgetting \citep{mccloskey1989catastrophic,goodfellow2013empirical}.  
Neuroscience has motivated a few algorithms for overcoming catastrophic forgetting and variations in data distributions \citep{kirkpatrick2017overcoming,zenke2017continual,chen2019improving}. 

Natural neural networks have much better generalizability and robustness. Can we learn from human brains again for more innovations in deep learning? An extensive body of works in neuroscience suggest that {\it neural variability} is essential for learning and proper development of human brains, which refers to the mechanism that human brain reactions exhibit substantial variability even in response to the same stimulus \citep{churchland2006neural, churchland2010stimulus, dinstein2015neural}. 
Neural variability acts as a central role in motor learning, which helps balance the need for accuracy and the need for plasticity/flexibility \citep{fetters2010perspective}. The ever-changing environment requires performers constantly adapt to both external (e.g., slippery surface) and internal (e.g., injured muscle) perturbations. It is also suggested that adult motor control systems can perform better by generating neural variability actively in order to leave room for adaptive plasticity/flexibility \citep{tumer2007performance}. An appropriate degree of neural variability is necessary to studies of early development \citep{hedden2004insights, olveczky2011changes}. A study on Parkinson’s disease suggests that the learning ability to new movements and adaptability to perturbations is dramatically reduced when neural variability is at a low degree \citep{mongeon2013impact}. 

Inspired by the neuroscience knowledge, this paper formally formulates artificial neural variability theory for deep learning. We mathematically prove that ANV plays as an implicit regularizer of the mutual information between the learned model weights and the training data. A beautiful coincidence in neuroscience is that neural variability in the rate of response to a steady stimulus also penalizes the information carried by nerve impulses (spikes) \citep{stein2005neuronal,houghton2019calculating}. Our theoretical analysis guarantees that ANV can strictly relieve overfitting, label noise memorization, and catastrophic forgetting. 

We further propose a {\it neural variable risk minimization} (NVRM) framework, which is an efficient training method to achieve ANV for artificial neural networks. In the NVRM framework, we introduce weight perturbations during inference to simulate neural variability of human brains to relieve overfitting and catastrophic forgetting. The empirical mean of the loss in the presence of weight perturbations is referred to as the {\it neural variable risk} (NVR). Similar to the neural variability, replacing the conventional empirical risk minimization (ERM) by NVRM would balance the accuracy-plasticity tradeoff in deep learning.

The rest of this paper is organized as follows. In Section \ref{sec:theory}, we propose the neural variability theory, and mathematically validate that ANV relieves overfitting, label noise memorization, and catastrophic forgetting. In Section \ref{sec:NVRM}, we propose the NVRM framework and neural variable optimizers, which can achieve ANV efficiently in practice. In Section \ref{sec:empirical}, we conduct extensive experiments to validate the theoretical advantages of NVRM. Particularly, training neural networks via neural variable optimizers can easily achieve remarkable robustness to label noise and weight perturbation. In Section \ref{sec:conclusion}, we conclude our main contribution. 

\section{Neural Variability Theory}
\label{sec:theory}

In this section, we will formally introduce artificial neural variability into deep learning. 
We denote a model with the weights $\theta$ as $\mathcal{M}(\theta)$ and the training dataset as $S= \{(x^{(i)},y^{(i)})\}_{i=1}^{m} $ drawn from the data distribution $\mathcal{S}$. We define the empirical risk over the training dataset $S$ as $\hat{L}(\theta) = L(\theta,S) =  \frac{1}{m} \sum_{i=1}^{m} L(\theta, (x^{(i)}, y^{(i)})) $, and the population risk over the data distribution $\mathcal{S}$ as $L(\theta) = \mathbb{E}_{(x,y)\sim \mathcal{S}} [L(\theta, (x, y))] $. 
We formally define $(b, \delta)$-neural variability ($(b, \delta)$-NV) as Definition \ref{df:nv}.
 \begin{define}[Neural Variability/Regional Flatness]
 \label{df:nv}
Suppose $L(\theta, S)$ is the loss function for the model $\mathcal{M}(\theta)$ on the dataset $S$, $ \hat{\theta}$ obeys a Gaussian distribution centered at $\theta$ as $\hat{\theta} \sim \mathcal{N}(\theta, b^{2}I)$, and
\[  | \mathbb{E}_{\hat{\theta} \sim \mathcal{N}(\theta, b^{2}I)  } [ L(\hat{\theta}, S)]  - L(\theta, S) ] |  \leq  \delta, \]
where $|\cdot|$ denotes the absolute value, and both $\delta$ and $b$ are positive. Then, the model $\mathcal{M}(\theta)$ is said to achieve $(b, \delta)$-neural variability at $\theta$ on the dataset $S$. It can also be said the model achieves $(b, \delta)$-regional flatness at $\theta$ on the dataset $S$.
 \end{define}
The definition has a similar form to $(\mathcal{C}_{\epsilon}, A)$-sharpness defined by \citet{keskar2017large}. 
A model $\mathcal{M}(\theta)$ with $(b, \delta)$-neural variability can work almost equally well when its weights are randomly perturbed as $ \hat{\theta} \sim \mathcal{N}(\theta, b^{2}I)$. This definition mimics the neuroscience mechanism that human brains can work well or even better by actively generating perturbations \citep{tumer2007performance}. The definition of $(b, \delta)$-neural variability is also a measure of robustness to weight perturbations and a measure of weight uncertainty for Bayesian neural networks.

 \subsection{Generalization}

In this section, we will formulate the information theoretical foundation of $(b, \delta)$-neural variability by using the PAC-Bayesian framework \citep{mcallester1999some,mcallester1999pac}. The PAC-Bayesian framework provides guarantees on the expected risk of a randomized predictor (hypothesis) that depends on the training dataset. The hypothesis is drawn from a distribution $Q$ and sometimes referred to as a posterior. We then denote the expected risk with respect to the distribution $Q$ as $L(Q)$ and the empirical risk with respect to the distribution $Q$ as $\hat{L}(Q)$. Suppose $P$ is the prior distribution over the weight space $\Theta$.

 \begin{lemma}[The PAC-Bayesian Generalization Bound \citep{mcallester1999some}]
 \label{pr:pac}
For any real $\Delta \in (0,1)$, with probability at least $1-\Delta$, over the draw of the training dataset $S$, the expected risk for all distributions $Q$ satisfies
\[  L(Q) \leq  \hat{L}(Q) + 4\sqrt{\frac{1}{m} [\KL(Q \| P) + \ln(\frac{2m}{\Delta})]}   , \]
where $\KL(Q \| P)$ denotes the Kullback–Leibler divergence from $P$ to $Q$. 
\end{lemma}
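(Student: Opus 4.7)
The plan is to follow the classical change-of-measure derivation (McAllester; Seeger; Maurer). Four ingredients suffice: a Hoeffding-type moment bound on the per-hypothesis generalization gap, Markov's inequality, the Donsker--Varadhan variational inequality, and Jensen's inequality for convex functions.

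First, I would fix the prior $P$ independently of $S$ and define the per-hypothesis gap $\Delta(\theta,S) = L(\theta) - \hat{L}(\theta)$. Assuming losses bounded in $[0,1]$, a standard Hoeffding/sub-Gaussian estimate gives, for every fixed $\theta$,
\[\mathbb{E}_S\bigl[\exp(2m\Delta(\theta,S)^2)\bigr] \leq 2\sqrt{m}.\]
Taking $\mathbb{E}_{\theta\sim P}$ on both sides and exchanging the order of integration by Fubini yields
\[\mathbb{E}_S\,\mathbb{E}_{\theta\sim P}\bigl[\exp(2m\Delta(\theta,S)^2)\bigr] \leq 2\sqrt{m}.\]
Markov's inequality then implies that, with probability at least $1-\Delta$ over the draw of $S$,
\[\mathbb{E}_{\theta\sim P}\bigl[\exp(2m\Delta(\theta,S)^2)\bigr] \leq \frac{2\sqrt{m}}{\Delta}.\]

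Next, I would invoke the Donsker--Varadhan change of measure: for every posterior $Q$ absolutely continuous with respect to $P$ and every measurable $g$,
\[\mathbb{E}_{\theta\sim Q}[g(\theta)] \leq \KL(Q\|P) + \ln \mathbb{E}_{\theta\sim P}\bigl[\exp(g(\theta))\bigr].\]
Choosing $g(\theta) = 2m\Delta(\theta,S)^2$ and combining with the Markov bound gives
\[\mathbb{E}_{\theta\sim Q}\bigl[2m\Delta(\theta,S)^2\bigr] \leq \KL(Q\|P) + \ln\!\Bigl(\tfrac{2m}{\Delta}\Bigr),\]
after absorbing the $\sqrt{m}$ factor into the logarithm. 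Finally, Jensen's inequality for $x\mapsto x^2$ gives $\bigl(L(Q)-\hat{L}(Q)\bigr)^2 \leq \mathbb{E}_Q[\Delta(\theta,S)^2]$, and rearranging produces a square-root bound of the claimed form. The numerical factor $4$ appears by being lossy with the constants picked up in the moment-generating step so that the expression inside the square root reduces cleanly to $\KL(Q\|P) + \ln(2m/\Delta)$.

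The main obstacle, in my view, is not algebraic but measure-theoretic: the prior $P$ must be chosen without looking at $S$ for Fubini to apply, and the conclusion must hold \emph{simultaneously} for every posterior $Q$. The change-of-measure step is precisely what circumvents the naive (and vacuous) union bound over hypotheses; without it one cannot hope to get a bound valid uniformly in $Q$. A secondary and more cosmetic difficulty is tracking constants to reproduce the advertised factor of $4$; the cleanest route is typically to slacken the logarithm rather than optimize the pre-log constant.
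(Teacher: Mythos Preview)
The paper does not prove this lemma; it is quoted from \cite{mcallester1999some} as a known result and used as a black box in the proof of Theorem~\ref{pr:pacgen}. Your sketch is the standard modern derivation (essentially Maurer's refinement of McAllester via the Donsker--Varadhan change of measure) and is structurally correct; the only slack, which you already flag, is in the constants needed to reproduce the advertised factor $4$ and the $\ln(2m/\Delta)$ term, and loosening $2\sqrt{m}$ to $2m$ inside the logarithm is a legitimate upper-bounding step.
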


 The PAC-Bayesian generalization bound closely depends on the prior $P$ over the model weights. We make a mild Assumption \ref{as:prior}.
 \begin{assumption}
 \label{as:prior}
The prior over model weights is Gaussian, $P = \mathcal{N}(0, \sigma^{2}I)$.
\end{assumption}
Assumption \ref{as:prior} justified as it can be interpreted as weight decay, which is widely used in related papers \citep{graves2011practical,neyshabur2017exploring,he2019control}. We note that $\sigma^{2}$ is very large in practice, as $\sigma^{2}$ is equal to the inverse weight decay strength.

We consider a distribution $Q_{\mathrm{nv}}$ over model weights of the form $\theta+\epsilon$, where $\theta$ is drawn from the distribution $Q$ and $\epsilon \sim \mathcal{N}(0, b^{2}I)$ is a random variable. Following the theoretical analysis, particularly Equation 7 of \citet{neyshabur2017exploring}, we formulate Theorem \ref{pr:pacgen}.

 \begin{theorem}[The Generalization Advantage of ANV]
 \label{pr:pacgen}
Suppose the model $\mathcal{M}(\theta^{\star})$ achieves $(b, \delta)$-neural variability at $\theta^{\star}$, and Assumption \ref{as:prior} holds. Then, for any real $\Delta \in (0,1)$, with probability at least $1-\Delta$, over the draw of the training dataset $S$, the expected risk for all distributions $Q_{\mathrm{nv}}$ satisfies
\[  L(Q_{\mathrm{nv}}) \leq \hat{L}(\theta^{\star}) +  4\sqrt{\frac{1}{m} \left[ \KL(Q_{\mathrm{nv}}\| P) + \ln(\frac{2m}{\Delta})\right]} + \delta,  \]
where $N$ is the number of model weights and  $\KL(Q_{\mathrm{nv}}\| P) = \sum_{i=1}^{N} \left[  \log\left(\frac{\sigma}{b}\right) + \frac{b^{2} + \theta_{i}^{\star 2}}{2\sigma^{2}}  - \frac{1}{2}\right]$
 \end{theorem}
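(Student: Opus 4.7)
The plan is to instantiate the PAC-Bayes bound (Lemma \ref{pr:pac}) at the specific posterior $Q_{\mathrm{nv}} = \mathcal{N}(\theta^\star, b^2 I)$, then replace its empirical risk by the deterministic risk at $\theta^\star$ using the $(b,\delta)$-neural variability hypothesis, and finally evaluate the KL against the Gaussian prior explicitly. These three ingredients already appear in \citet{neyshabur2017exploring}; the argument here just specializes them to the NV setting.

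First, I would apply Lemma \ref{pr:pac} with the distribution $Q_{\mathrm{nv}}$. With probability at least $1-\Delta$ over the draw of $S$,
\[
L(Q_{\mathrm{nv}}) \;\leq\; \hat{L}(Q_{\mathrm{nv}}) \;+\; 4\sqrt{\tfrac{1}{m}\bigl[\KL(Q_{\mathrm{nv}}\|P)+\ln(\tfrac{2m}{\Delta})\bigr]}.
\]
Here $\hat{L}(Q_{\mathrm{nv}}) = \mathbb{E}_{\hat{\theta}\sim\mathcal{N}(\theta^\star,b^2 I)}[\hat{L}(\hat{\theta})]$ by definition of the randomized empirical risk. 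By Definition \ref{df:nv} applied at $\theta^\star$ (so that $Q_{\mathrm{nv}}$ is exactly the Gaussian perturbation prescribed there), this expectation differs from $\hat{L}(\theta^\star)=L(\theta^\star,S)$ by at most $\delta$ in absolute value, so in particular $\hat{L}(Q_{\mathrm{nv}}) \leq \hat{L}(\theta^\star) + \delta$. Substituting this inequality into the PAC-Bayes bound yields the claimed inequality modulo the KL expression.

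Second, I would compute $\KL(Q_{\mathrm{nv}}\|P)$ in closed form. Both $Q_{\mathrm{nv}} = \mathcal{N}(\theta^\star, b^2 I)$ and $P = \mathcal{N}(0,\sigma^2 I)$ factorize across the $N$ coordinates, so the KL decomposes as a sum of $N$ univariate KLs. Using the standard identity
\[
\KL\!\bigl(\mathcal{N}(\mu_1,\sigma_1^2)\,\|\,\mathcal{N}(\mu_2,\sigma_2^2)\bigr) \;=\; \log\tfrac{\sigma_2}{\sigma_1} + \tfrac{\sigma_1^2+(\mu_1-\mu_2)^2}{2\sigma_2^2} - \tfrac{1}{2},
\]
with $\mu_1=\theta_i^\star$, $\sigma_1=b$, $\mu_2=0$, $\sigma_2=\sigma$, gives exactly the stated formula $\sum_{i=1}^N[\log(\sigma/b) + (b^2+\theta_i^{\star 2})/(2\sigma^2) - 1/2]$. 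Plugging this back in completes the proof.

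I do not expect a genuine obstacle: the conceptual steps (choosing $Q_{\mathrm{nv}}$ as a Gaussian centered at the learned $\theta^\star$, and reading the NV condition as an additive-$\delta$ control on the smoothed loss) are exactly what the NV definition was tailored for. The only point requiring mild care is that the PAC-Bayes lemma as stated bounds $L(Q)$ by $\hat{L}(Q)$ (the \emph{randomized} empirical risk), so the $\delta$ slack must be introduced precisely at the transition from $\hat{L}(Q_{\mathrm{nv}})$ to $\hat{L}(\theta^\star)$, and not earlier; otherwise one would incur a double penalty or misalign the quantifiers.
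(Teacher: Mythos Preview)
Your proposal is correct and follows essentially the same route as the paper: apply the PAC-Bayes lemma with the Gaussian posterior $Q_{\mathrm{nv}}=\mathcal{N}(\theta^\star,b^2 I)$, use the $(b,\delta)$-NV condition to replace $\hat{L}(Q_{\mathrm{nv}})$ by $\hat{L}(\theta^\star)+\delta$, and evaluate the Gaussian-Gaussian KL in closed form. The paper frames the first step slightly more generally (a convolution $Q\ast\mathcal{N}(0,b^2I)$ before specializing $Q$ to the point mass at $\theta^\star$), but this is cosmetic and your direct instantiation is equivalent.
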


We leave all proofs in Appendix \ref{app:proofs}. We note that $\KL(Q_{\mathrm{nv}}\| P) $ as the function of $b$ decreases with $b$ for $b \in (0,\sigma)$, and reaches the global minimum at $b=\sigma$. As $\sigma$ is much larger than $1$ and $b$ in practice, the PAC-Bayesian bound monotonically decreases with $b$ given $\delta$.  The bound is tighter than the bound in Lemma \ref{pr:pac} when the model has strong ANV, which means $b$ is large given a small $\delta$.

It is known that the information in the model weights relates to overfitting \citep{hinton1993keeping} and flat minima \citep{hochreiter1997flat}. \citet{achille2019information} argued that the information in the weights controls the PAC-Bayesian bound. We show that the generalization bound in Theorem \ref{pr:pacgen} positively correlates with the mutual information of learned model weights and training data. Given two random variables $\theta$ and $S$, their Shannon mutual information is defined as $I(\theta; S) = \mathbb{E}_{S\sim \mathcal{S}}[\KL(p(\theta|S) \| p(\theta))] $ which is the expected Kullback-Leibler divergence from the prior distribution $p(\theta)$ of $\theta$ to the distribution $p(\theta|S)$ after an observation of $S$ \citep{cover2012elements}. In the case of Theorem \ref{pr:pacgen}, we have
\begin{align}
\label{eq:expectedkl}
\mathbb{E}_{S\sim \mathcal{S}}[\KL( Q_{\mathrm{nv}} \| P)]  = I( \theta; S),
\end{align}
where $\theta \sim Q_{\mathrm{nv}}$. It indicates that penalizing the mutual information of the learned model weights and training data, $I( \theta; S)$, is equivalent to decreasing the expected $\KL( Q_{\mathrm{nv}} \| P)$, which may improve generalization.
As $S \rightarrow \theta \rightarrow \theta + \epsilon$ is a Markov process, we have the data processing mutual information inequality $I(\theta+\epsilon;S) < I(\theta;S)$. It indicates that ANV regularizes the mutual information between the learned model weights and training data. This theoretical evidence is quite close to the neuroscience mechanism of penalizing the information carried by nerve impulses \citep{stein2005neuronal}. 

Different from the PAC-Bayesian approach, another theoretical framework for the generalization bound based on mutual information was proposed by \citet{xu2017information}. Following \citet{xu2017information}, we formulate an alternative mutual-information-based generalization bound in Appendix \ref{app:mibound}.

\subsection{The Robustness to Label Noise}
Noisy labels can remarkably damage the generalization of deep networks, because deep networks can completely memorize corrupted label information \citep{zhang2017understanding}. Memorizing noisy labels is one of the most serious overfitting issues in deep learning. We will show that ANV relieves deep networks from memorizing noisy labels by penalizing the mutual information of the model weights $\theta$ and the labels $y$ conditioned on the inputs $x$. 

In Section 4 of \citet{achille2018emergence}, the expected cross entropy loss can be decomposed into several terms to describe it. If the data distribution $\mathcal{S}$ is fixed, the expected cross entropy loss for the training performance can be decomposed into three terms:
\begin{align}
\label{eq:decomp}
\mathcal{H}_{f}(y|x, \theta) =&\mathbb{E}_{S}  \mathbb{E}_{\theta\sim Q(\theta|S)} \left[\sum_{i=1}^{m} - \log f(y^{(i)}|x^{(i)}, \theta) \right] \\
= & \mathcal{H}(y|x) + \mathbb{E}_{x,\theta \sim Q(\theta|S)} \KL[ p(y|x) \| f(y|x,\theta)] - I(\theta; y|x),
\end{align}
where $f$ denotes the model's map from an input $x$ to a class distribution, $\mathcal{H}(\cdot)$ denotes the entropy $\mathbb{E}[-\log(\cdot)]$. The meaning of each term has been interpreted by \citet{achille2018emergence} in details.
The first term relates to the intrinsic error that we would commit in predicting the labels even if we knew the underlying data distribution.
The second term relates to the efficiency of the model and the class of functions $f$ with respect to which the loss is optimized.
Here we focus on the last term: the label memorization can be given by the mutual information between the model weights and the labels conditioned on inputs, namely $I (\theta;y|x)$. In the traditional paradigm of deep learning, minimizing $- I(\theta; y|x)$ is expected. Thus, deep learning easily overfits noisy labels. Noisy labels as outliers of the data distribution implies a positive value of $I(\theta;y|x)$, which requires more information to be memorized. We need to reduce $ I(\theta; y|x)$ effectively to prevent deep networks from overfitting noisy labels. With this approach, Theorem 2.1 of \citet{harutyunyan2020improving} also supported that memorization of noisy labels is prevented by decreasing $I (\theta;y|x)$. 

Suppose a model $\mathcal{M}(\theta)$ achieves $(b, \delta)$-NV, $\delta$ is small, and we inject weight noise $\epsilon$ to this model. We have a Markov process as $y|x \rightarrow \theta \rightarrow \hat{\theta}$, where we denote $\theta + \epsilon$ as $\hat{\theta}$. Based on Equation \eqref{eq:decomp}, we have 
\begin{align}
\label{eq:nvdecomp}
\mathcal{H}_{f}(y|x, \hat{\theta}) =\mathcal{H}(y|x) + \mathbb{E}_{x,\theta \sim Q_{\mathrm{nv}}(\theta|S)} \KL[ p(y|x) \| f(y|x,\theta)] - I(\hat{\theta}; y|x),
\end{align}
According to Definition \ref{df:nv}, the model $\mathcal{M}(\hat{\theta})$ may achieve nearly equal training performance to $\mathcal{M}(\theta)$ given small $\delta$. At the same time, obviously, $I (\hat{\theta};y|x)$ is smaller than $I (\theta;y|x)$ as $\epsilon$ penalizes the mutual information. This suggests that increasing $b$ given $\delta$ for a $(b, \delta)$-neural variable model can penalize the memorization of noisy labels by regularizing the mutual information of learned model weights and training data.

\subsection{The Robustness to Catastrophic Forgetting}

The ability to continually learn over time by accommodating new tasks while retaining previously learned tasks is referred to as continual or lifelong learning \citep{parisi2019continual}. However, the main issue of continual learning is that artificial neural networks are prone to catastrophic forgetting. In natural neural systems, neural variability leaves room for the excellent plasticity and continual learning ability \citep{tumer2007performance}. It is natural to conjecture that ANV can help relieve catastrophic forgetting and enhance continual learning. 

We take regularization-based continual learning \citep{kirkpatrick2017overcoming,zenke2017continual,aljundi2018memory} as an example. 
The basic idea of regularization-based continual learning is to strongly regularize weights most relevant to previous tasks. Usually, the regularization is strong enough to fix learning near the solution learned from previous tasks. In a way, the model tends to learn in the overlapping region of optimal solutions for multiple tasks \citep{doan2021theoretical}. The intuitional explanation behind ANV is clear: if a model is more robust to weight perturbation, it will have a wider optimal region shared by multiple tasks.

Suppose a model $\mathcal{M}(\theta)$ continually learns Task A and Task B, where the learned solutions are, respectively, $\theta^{\star}_{A}\sim Q_{A}$ and $\theta^{\star}_{B} \sim Q_{B}$. The distribution $Q_{B}$ describes models weights of the form $\theta^{\star}_{A}+\epsilon$, where $\epsilon$ is a random variable. For any real $\Delta \in (0,1)$, with probability at least $1-\Delta$, over the draw of the training dataset $S$ for Task A, the expected risk for all distributions $Q_{B}$ satisfies
\begin{align}
\label{eq:pcacf}
  L(Q_{B}) \leq  \hat{L}(Q_{B}) +  4\sqrt{\frac{1}{m} \left[ \KL( Q_{B} \| P)+ \ln(\frac{2m}{\Delta})\right]} .
\end{align}
For $\theta^{\star}_{B} = \theta^{\star}_{A} + \epsilon$, as $S \rightarrow \theta^{\star}_{A} \rightarrow \theta^{\star}_{B} $ is a Markov process and the weight perturbation $\epsilon$ is learned from the training dataset $S_{B}$ (of Task B) only, the the data processing mutual information inequality $I(\theta^{\star}_{B}; S) <  I(\theta^{\star}_{A}; S) $ still holds. We recall the mutual information analysis above and have $I(\theta^{\star}_{B}; S) = \mathbb{E}_{S \sim \mathcal{S}}[\KL(Q_{B} \| P)]  $. Thus, we have
\begin{align}
 L(Q_{B}) - \hat{L}(Q_{B}) & = [L(Q_{B}) - \hat{L}(Q_{A}) ] - [ \hat{L}(Q_{B})  - \hat{L}(Q_{A}) ] \\
 & < 4\sqrt{\frac{1}{m} \left[ \KL( Q_{B} \| P)+ \ln(\frac{2m}{\Delta})\right]} 
 \end{align}
Considering the expectation with respect to the data distribution, we obtain
\begin{align}
E_{S}[ (L(Q_{B}) - \hat{L}(Q_{A}) - \delta_{AB} )^{2} ]  < \frac{16}{m} \left[  I(\theta^{\star}_{B}; S)  + \ln(\frac{2m}{\Delta})\right] ,
\label{eq:anvcf}
 \end{align}
where $\delta_{AB} = \hat{L}(Q_{B})  - \hat{L}(Q_{A})$ and $I(\theta^{\star}_{B}; S) <  I(\theta^{\star}_{A}; S) $. So the population risk $L(Q_{B})$ (for Task A) can be well bounded by the empirical risk increasing $\delta_{AB}$ and the mutual information $I(\theta^{\star}_{A}; S)$. Here $\delta_{AB}$, the empirical risk increasing due to weight perturbation, is a kind of measure of robustness to weight perturbation. It suggests that increasing robustness to weight perturbation can help relieve catastrophic forgetting.

\section{Neural Variable Risk Minimization}
\label{sec:NVRM}

In this section, we aim to learn empirical minimizers with ANV, given a conventional network architecture. Our method is to minimize the empirical risk in a certain region rather than the empirical loss at a single point: 
\begin{align}
L_{\mathrm{NV}}(\theta, S)= \mathbb{E}_{\hat{\theta} \sim \mathcal{N}(\theta,\, b^{2}I)} [ L(\hat{\theta}, S) ],
\end{align}
where $b$ is the variability hyperparameter. We call the risk the Neural Variable Risk (NVR), and call optimizing the NVR Neural Variable Risk Minimization (NVRM).  The model $\mathcal{M}(\theta^{\star})$ learned by NVRM can naturally achieve $(b, \delta)$-neural variability, where $\delta = |L_{\mathrm{NV}}(\theta^{\star}, S) - L(\theta^{\star}, S)| $. In this paper, we usually let $\epsilon$ obey a Gaussian distribution, because Gaussian noise is the noise type which penalizes information most effectively given a certain variance. But it is easy to generalize our framework to other noise types, such as Laplace noise and uniform noise. The noise type can be regarded as a hyperparameter.

The next question is: how to perform NVRM? Unfortunately, NVR is intractable in practice. But it is possible to approximately estimate the NVR and its gradient by sampling $\hat{L}_{\mathrm{NV}}(\theta, S)=  L(\theta + \epsilon, S)$, where $\epsilon \sim \mathcal{N}(0,\, b^{2}I) $. This unbiased estimation method is also used in variational inference \citep{graves2011practical}.

We propose a class of novel optimization algorithms to employ NVRM in practice. We can write NVRM update as
\begin{align}
\theta_{t} = \theta_{t-1} - \eta \frac{\partial L(\theta_{t-1}+\epsilon_{t-1}, (x,y))}{\partial \theta}.
\end{align}
NVRM is mimicking the neural variability of human brains in response to the same stimulus. NVRM exhibits the variability of predictions and back-propagation even in response to the same inputs. This update cannot be implemented inside an optimizer. But if we introduce $\hat{\theta}_{t} = \theta_{t} + \epsilon_{t}$ into the NVRM update, then we obtain a novel updating rule:
\begin{align}
\hat{\theta}_{t} = \hat{\theta}_{t-1} - \eta \frac{\partial L(\hat{\theta}_{t-1}, (x,y))}{\partial \hat{\theta}} + \epsilon_{t} - \epsilon_{t-1}.
\end{align}
We can combine this updating rule with popular optimizers, such as SGD\citep{bottou1998online, sutskever2013importance}, then we easily get a class of novel optimization algorithms, such as NVRM-SGD. We call this class of optimization algorithms {\it neural variable optimizers}. The pseudocode of NVRM-SGD is displayed in Algorithm \ref{algo:nvrm}. Similarly, we can also easily obtain NVRM-Adam by adding the four colored lines of Algorithm \ref{algo:nvrm} into Adaptive Momentum Estimation (Adam) \citep{kingma2014adam}. The sourcecode is available at \url{https://github.com/zeke-xie/artificial-neural-variability-for-deep-learning}. We note that it is necessary to apply the de-noising step before we evaluate the model learned by NVRM on test datasets, as $\hat{\theta_{t}} = \theta_{t} + \epsilon_{t}$. We also call such weight perturbations Virtual Perturbations, which need be applied before inference and removed after back-propagation. We can easily empower neural networks with ANV by importing a neural variable optimizer to train them. 

\begin{algorithm}[H]
 \label{algo:nvrm}
 \caption{NVRM-SGD}
  \KwIn{Training data $S$, the variability hyperparameter $b$, the number of iterations $T$, learning rate $\eta$, initialized weights $\theta_{0}$}
  \KwOut{The model weights $\theta$}
  $ \color{blue} \epsilon_{0} = 0$\;
  \Repeat{stopping criterion is not met}{
    \For{$(x,y) \in S$}
    {
      $\color{blue}  \epsilon_{t} \sim \mathcal{N}(0,b^{2}I)$\;
      $g_{t} = \frac{\partial L(\theta_{t-1} , (x,y))}{\partial \theta}$\;
      $\theta_{t} = \theta_{t-1} - \eta g_{t}  $\;
      $\color{blue} \theta_{t} = \theta_{t}  + \epsilon_{t} - \epsilon_{t-1} $\;
    }
  }
$\color{blue} \theta_{T} = \theta_{T} - \epsilon_{T}$
\end{algorithm}

\paragraph{A Deep Learning Dynamical Perspective} We also note that it is possible to theoretically analyze NVRM from a deep learning dynamical perspective. NVRM actually introduces Hessian-dependent gradient noise into learning dynamics instead of injected white Gaussian noise in conventional noise injection methods, as the second-order Taylor approximation $\nabla L(\theta + \epsilon) \approx \nabla L(\theta ) + \nabla^{2} L(\theta) \epsilon$ holds for small weight perturbation. \citet{zhu2019anisotropic} argued that anisotropic gradient noise is often beneficial for escaping sharp minima. \citet{xie2020diffusion,xie2020adai} further quantitatively proved that Hessian-dependent gradient noise is exponentially helpful for learning flat minima. Again, flat minima \citep{hochreiter1997flat} are closely related to overfitting and the information in the model weights. This can mathematically explain the advantage of NVRM from a different perspective. We leave the diffusion-based approach as future work.

\paragraph{Related Work}
One related line of research is injecting weight noise into deep networks during training \citep{an1996effects,neelakantan2015adding,zhou2019toward}. For example, Perturbed Stochastic Gradient Descent (PSGD) is SGD with a conventional weight noise injection method, which is displayed in Algorithm \ref{algo:psgd}. Another famous example is Stochastic Gradient Langevin Dynamics (SGLD) \citep{welling2011bayesian} which differs from PSGD only in the magnitude of injected Gaussian noise. However, this conventional line does not remove the injected weight noise after each iteration, which makes it essentially different from our method. In Section \ref{sec:empirical}, we will empirically verify that the de-noising step is significantly helpful for preventing overfitting.

Variational Inference (VI) for Bayesian neural networks \citep{graves2011practical,blundell2015weight,khan2018fast} aims at estimating the posterior distribution of model weights given training data. VI requires expensive costs to update the posterior distribution (model uncertainty) during training. This line believes estimating the exact posterior is important but ignores the importance of enhancing model certainty. In contrast, our method is the first to actively encourage model uncertainty for multiple benefits by choosing the variability hyperparameter $b$. ANV may be regarded as applying a neuroscience-inspired hyperprior over model uncertainty. Inspired by recent works on Bayesian neural networks, we conjecture that the NVRM framework could help improve adversarial robustness \citep{carbone2020robustness} and fix overconfidence problems \citep{kristiadi2020being}

Another related line of research is Randomized Smoothing \citep{duchi2012randomized,nesterov2017random}. A recent paper \citep{wen2018smoothout} applied the idea of Randomized Smoothing in training of deep networks and proposed the so-called SmoothOut method to optimize a weight-perturbed loss. This is also what the proposed NVRM does. We note that the original SmoothOut is actually a different implementation of NVRM with uniform noise, which both belong to Randomized Smoothing. However, this line of research \citep{duchi2012randomized,wen2018smoothout} only focused on improving performance on clean datasets by escaping from sharp minima. To the best of our knowledge, our work is the first along this line to theoretically and empirically analyze label noise memorization and catastrophic forgetting.

In summary, our paper further made two important contributions beyond the existing related work: (1) we discovered that NVRM can play a very important role in regularizing mutual information, which helps relieve label noise memorization and catastrophic forgetting, and (2) we implemented the weight-perturbed gradient estimation as a simple and effective optimization framework. NVRM as an optimizer is more elegant and easy-to-use than the existing methods like SmoothOut, which need to update the weights to calculate a perturbed loss before each back propagation.

\section{Empirical Analysis}
\label{sec:empirical}

We conducted systematic comparison experiments to evaluate the proposed NVRM framework. To secure a fair comparison, every experimental setting was repeatedly trialled by 10 times while all irrelevant variables were strictly controlled. We evaluated NVRM by the mean performance and the standard deviations over 10 trials. We leave implementation details in Appendix \ref{app:exp}.


\paragraph{1. Robustness to weight perturbation.} For ResNet-34 \citep{he2016deep} trained on clean data, CIFAR-10 and CIFAR-100 \citep{krizhevsky2009learning}, we perturbed the weights by isotropic Gaussian noise of different noise scales to evaluate the test accuracy to weight perturbation. Figure \ref{fig:wn} demonstrates that the models trained by NVRM are significantly more robust to weight perturbation and have lower expected minima sharpness defined by \citet{neyshabur2017exploring}. This empirically verifies that the conventional neural networks trained via NVRM indeed learn strong ANV.

\begin{figure}[t!]
    \begin{subfigure}{0.99\linewidth}
    \centering
    \includegraphics[width=0.32\linewidth]{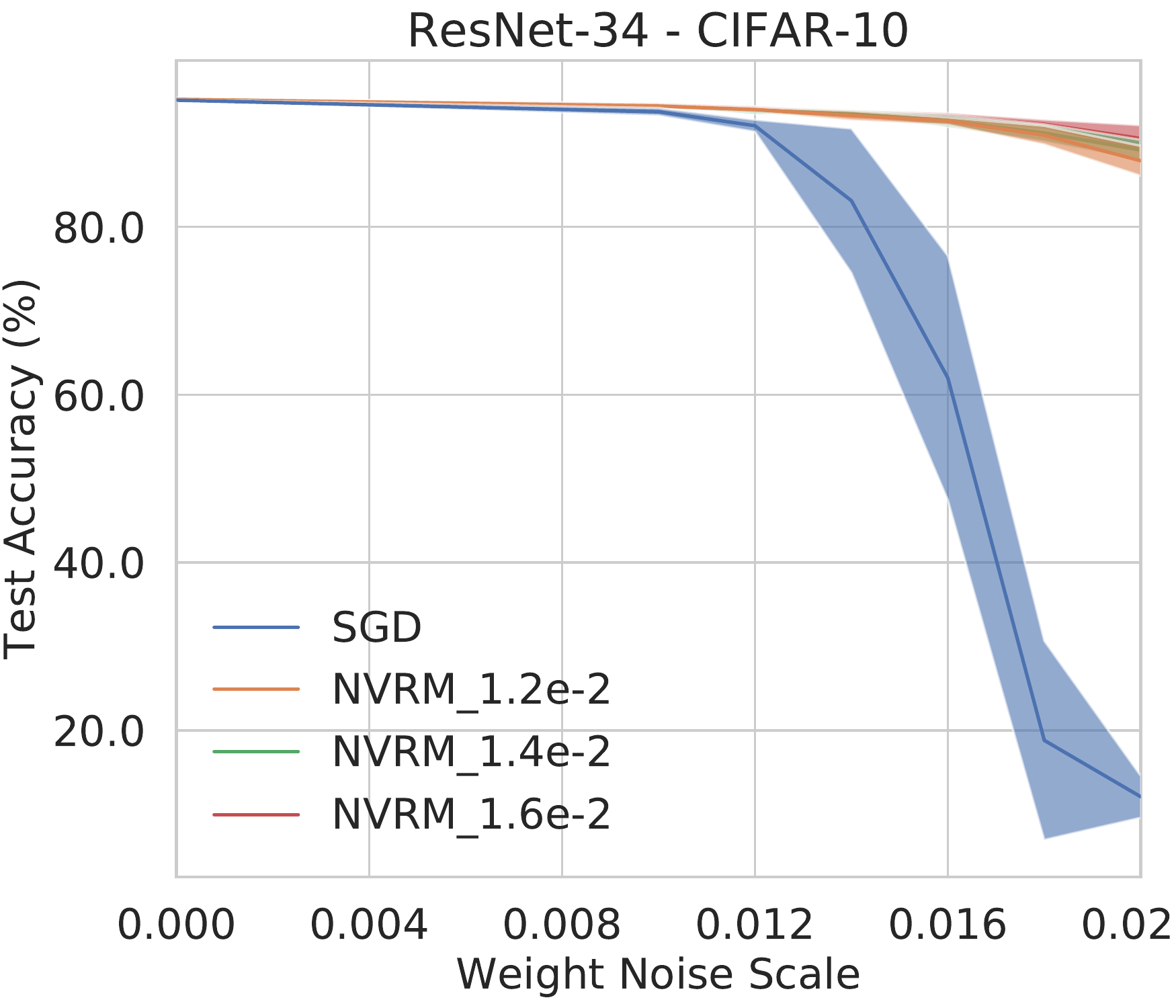}
    \includegraphics[width=0.32\linewidth]{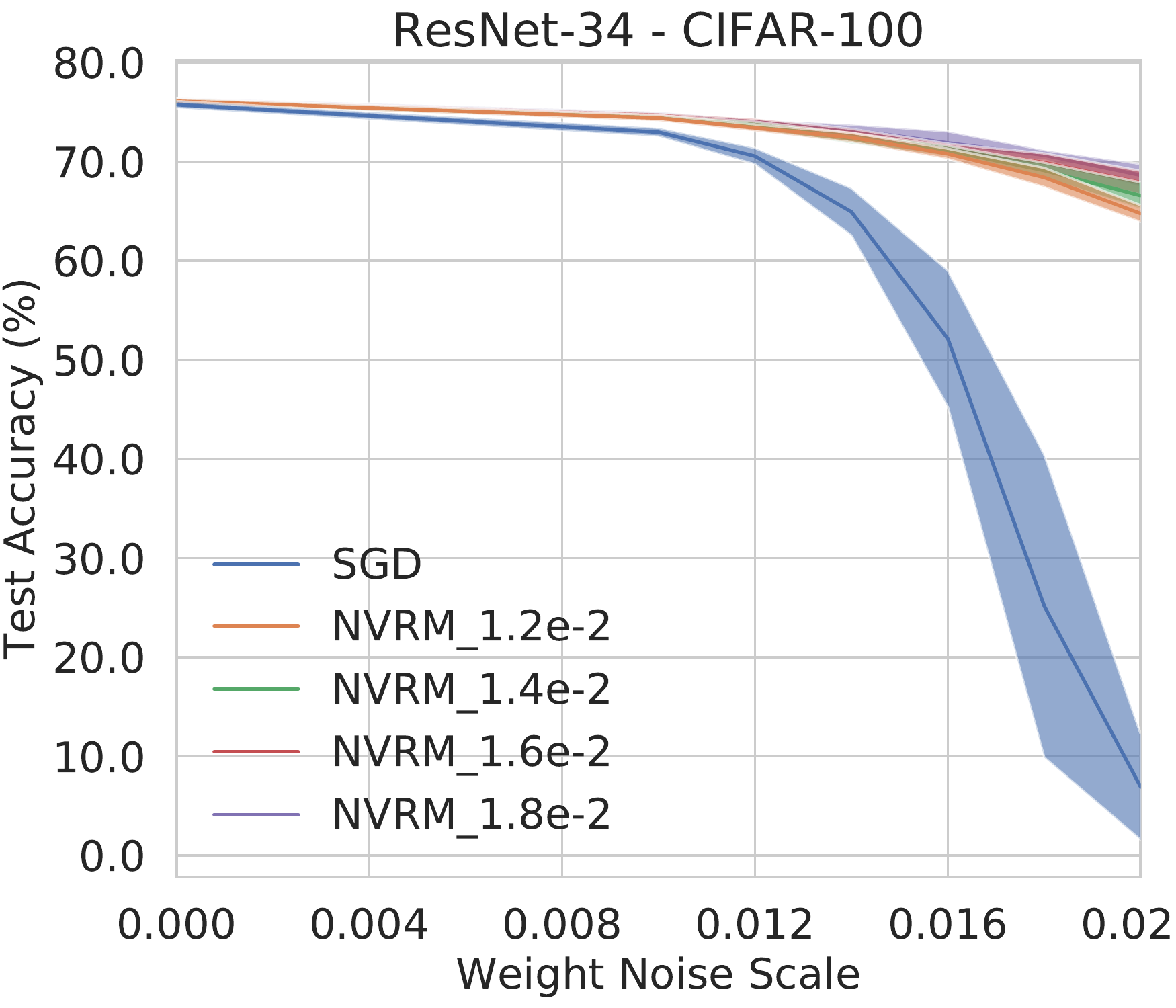}
  \end{subfigure}
  
  \caption{Curves of test accuracy to weight noise scale. The NVRM-trained network can almost retain reasonably well performance, while the SGD-trained network has nearly lost all learned knowledge due to relatively large weight noise. }
  \label{fig:wn}
\end{figure}
 
\paragraph{2. Improved generalization.} Model: VGG-16 \citep{Simonyan15} and MobileNetV2 \citep{sandler2018mobilenetv2}. Dataset: CIFAR-10 and CIFAR-100. We evaluated the test accuracy and the generalization gap, which is defined as the difference between the training accuracy and the test accuracy. The results in Figure \ref{fig:generalization} clearly demonstrate that NVRM can significantly narrow the generalization gap while slightly improve the test accuracy, which was also supported by \citet{nesterov2017random,wen2018smoothout}. 

\begin{figure}
\includegraphics[width=0.2425\linewidth]{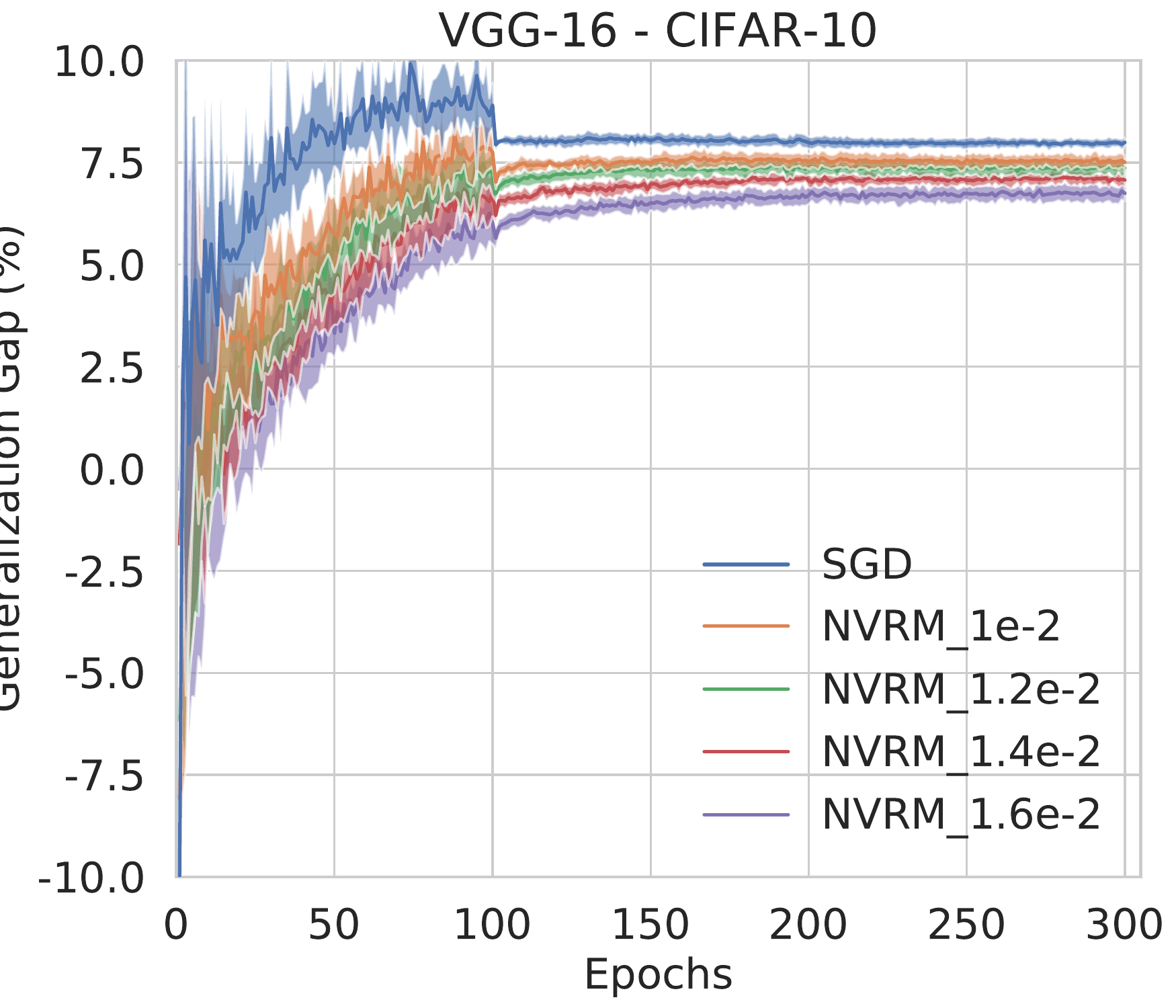}
\includegraphics[width=0.2425\linewidth]{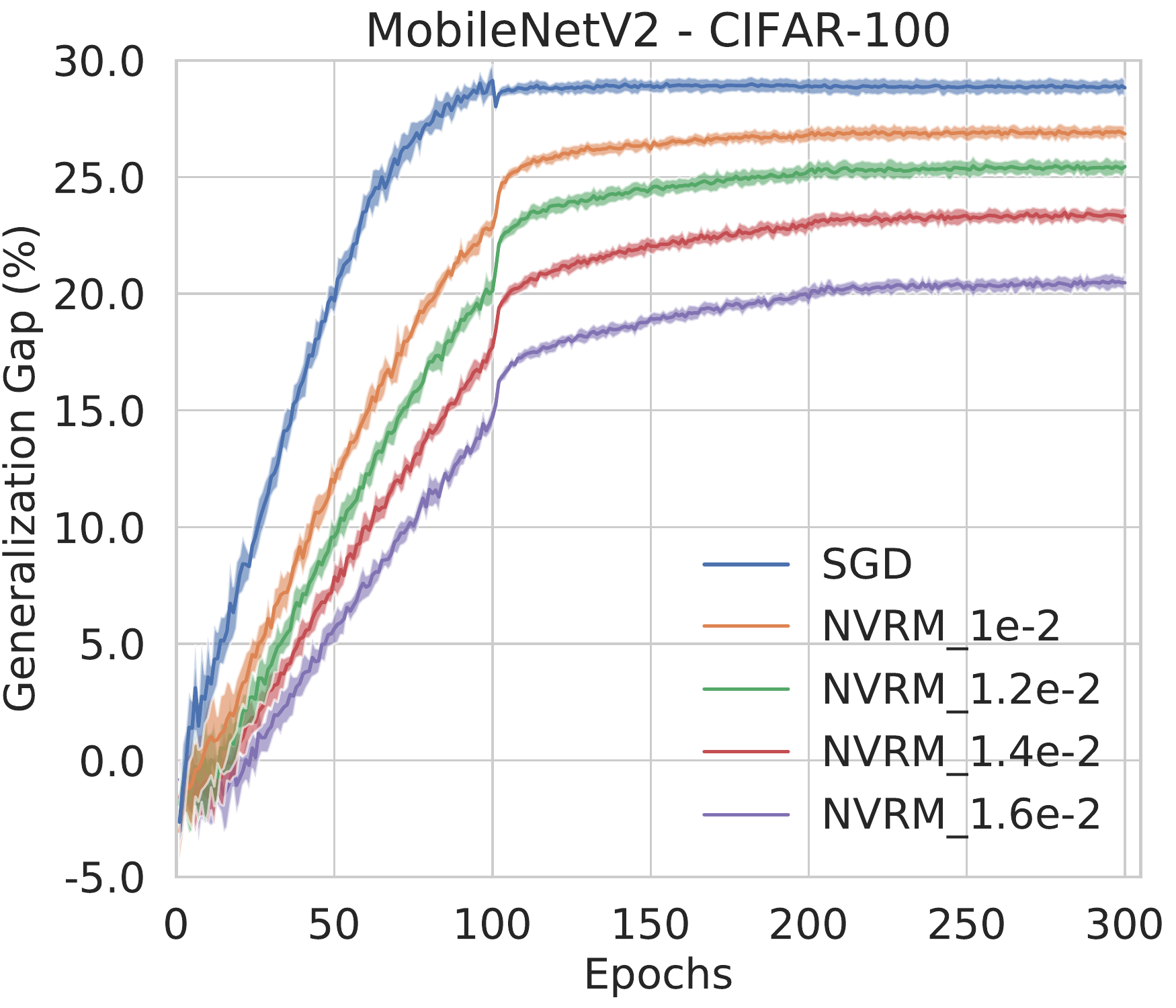}
\includegraphics[width=0.2425\linewidth]{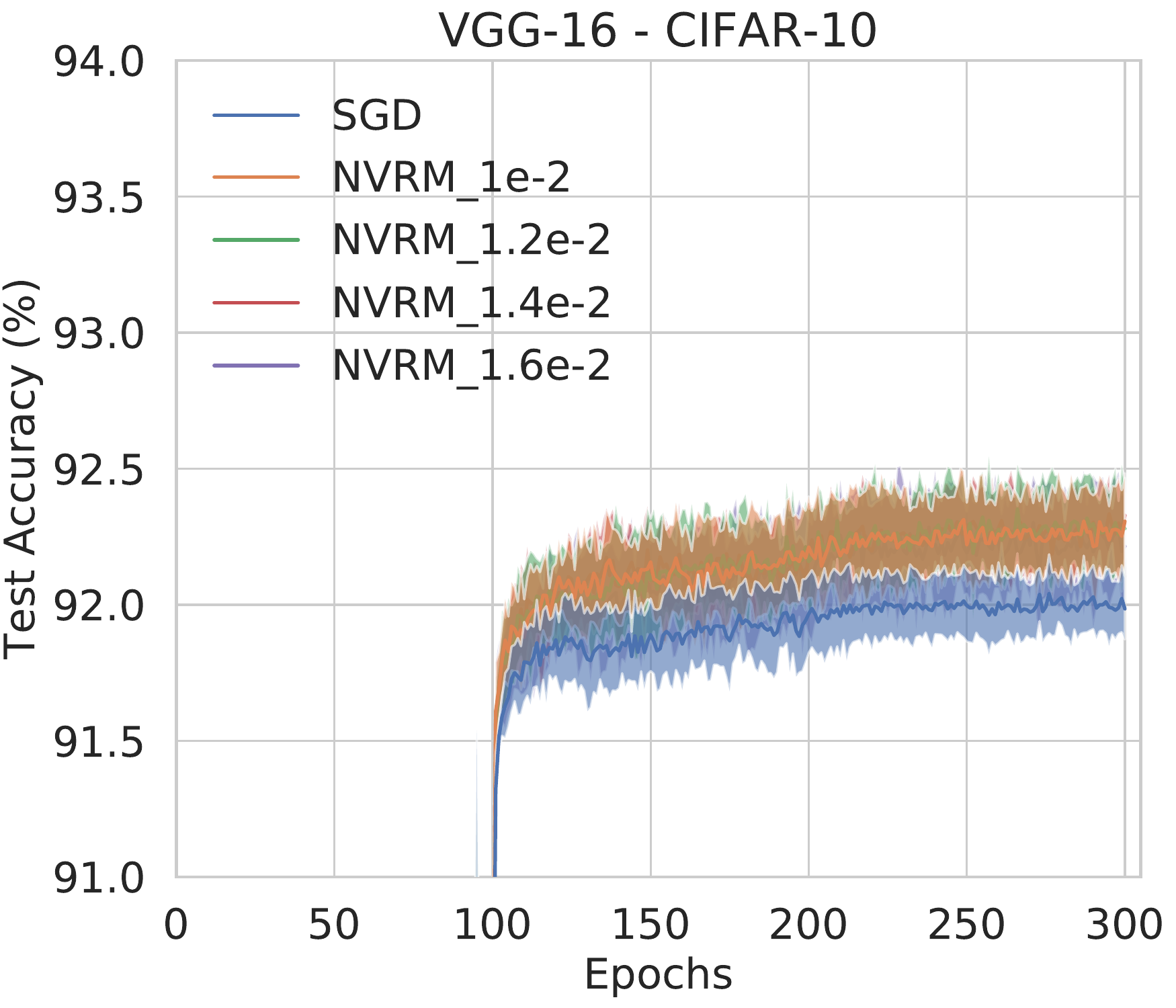}
\includegraphics[width=0.2425\linewidth]{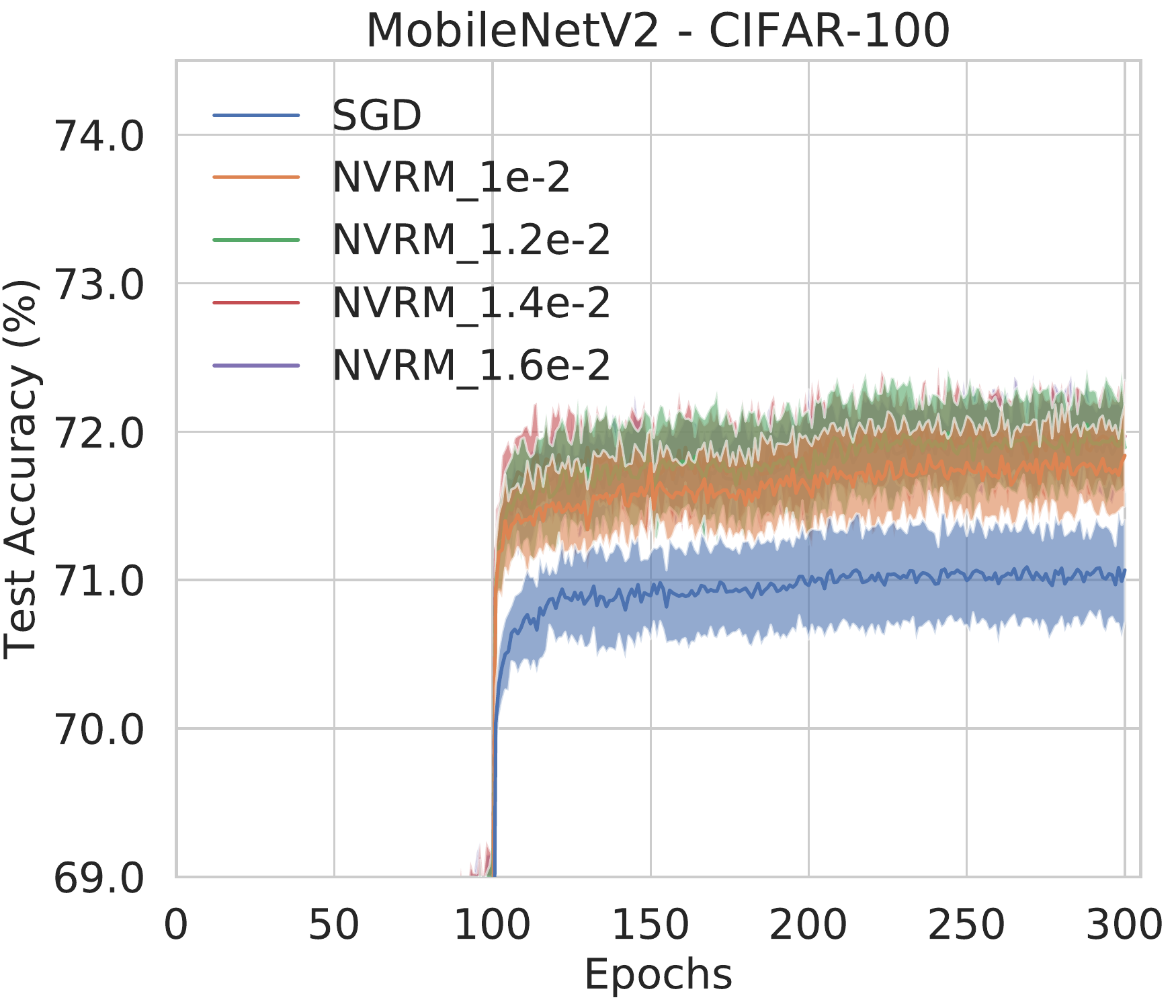}
\caption{NVRM with various variability scales $b$ can consistently improve generalization. Left two subfigures: Curves of generalization gap. Right two subfigures: Curves of test accuracy. We train VGG-16 on CIFAR-10 and MobileNetV2 on CIFAR-100. More results of VGG-16 on CIFAR-100 and MobileNetV2 on CIFAR-10 can be found in Appendix \ref{app:results}.}
 \label{fig:generalization}
\end{figure}

\paragraph{3. Robustness to noisy labels.} Model: ResNet-34. Dataset: CIFAR-10 and CIFAR-100. We inserted two classes of label noise into the datasets: the uniform flip label noise (symmetric label noise) and the pair-wise flip label noise (asymmetric label noise). Figure \ref{fig:resnet_asym} demonstrates that SGD seriously overfits noisy labels; meanwhile, NVRM can avoid memorizing noisy labels effectively. The experimental results of symmetric label noise in Appendix \ref{app:results} also support our theory.

\begin{figure}
  \begin{subfigure}{0.97\linewidth}
    \includegraphics[width=0.2425\linewidth]{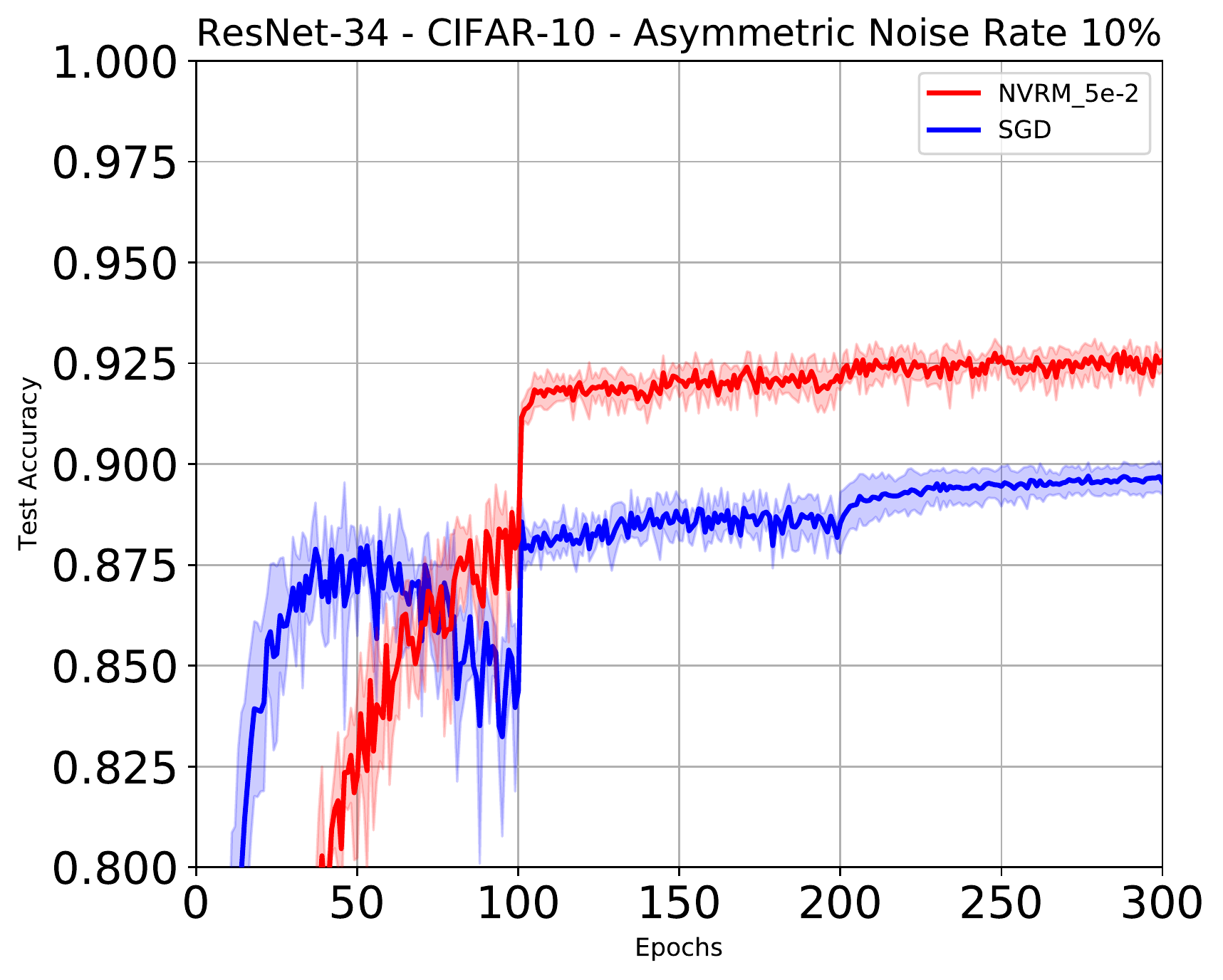}
    \includegraphics[width=0.2425\linewidth]{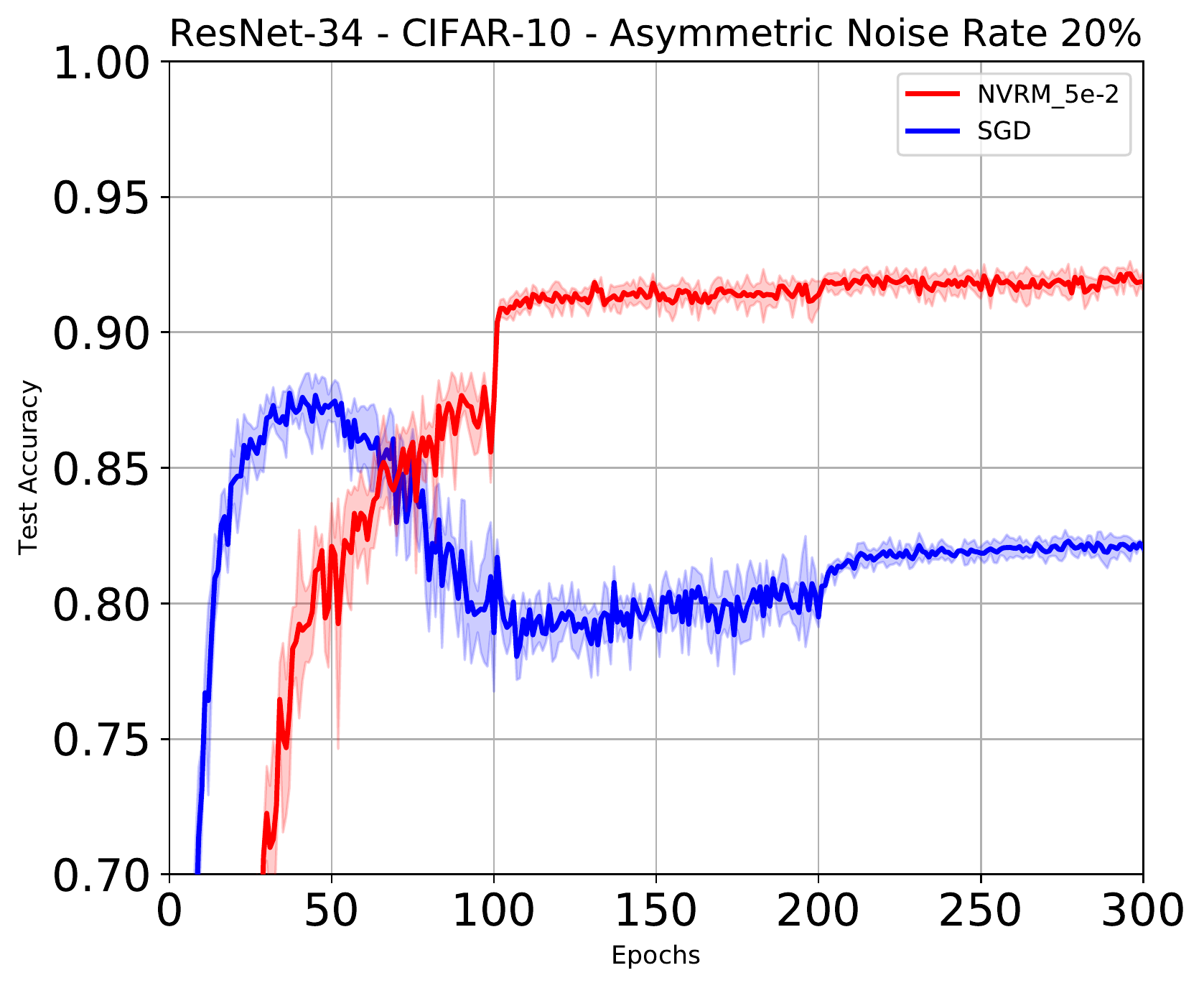}
    \includegraphics[width=0.2425\linewidth]{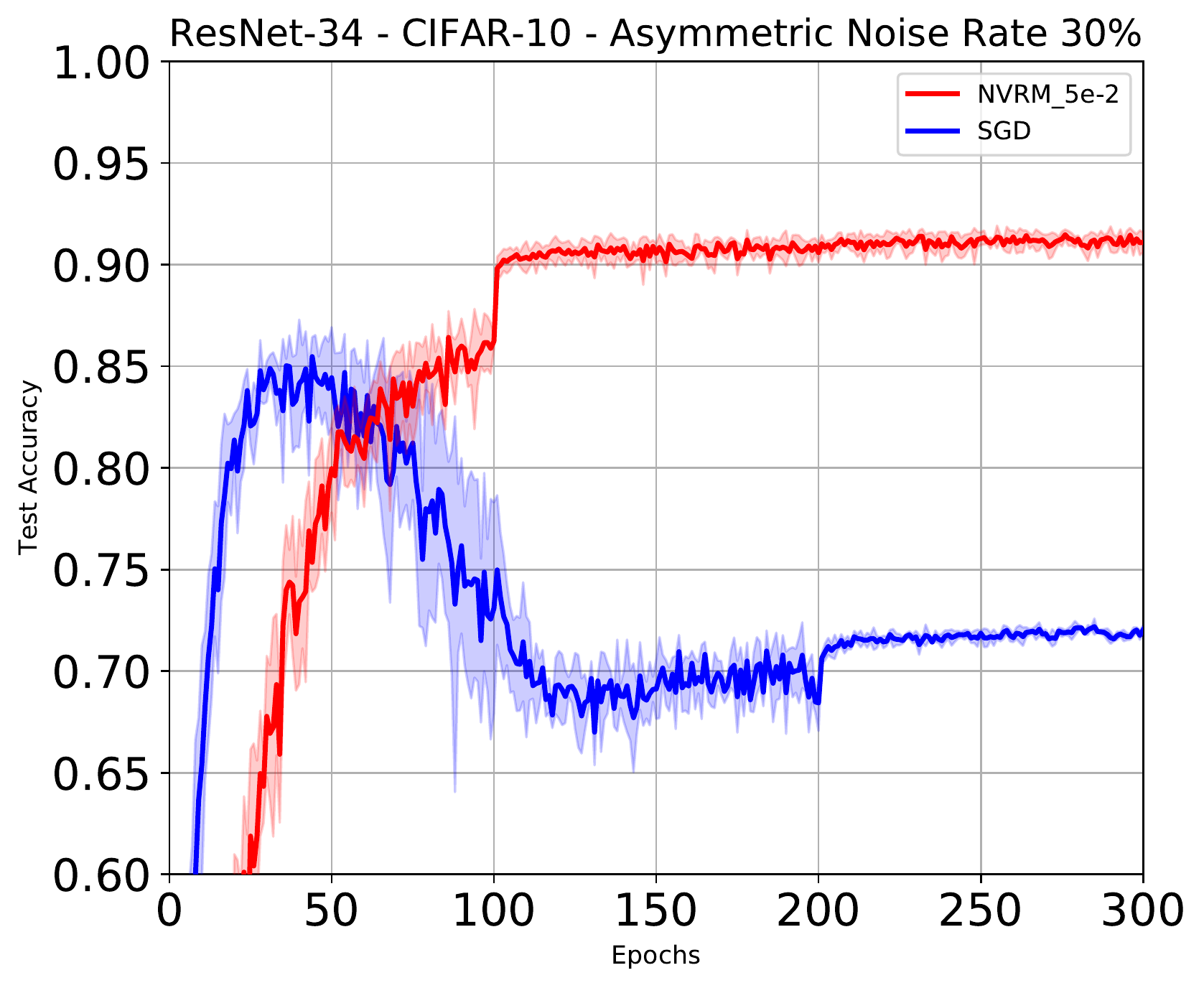}
    \includegraphics[width=0.2425\linewidth]{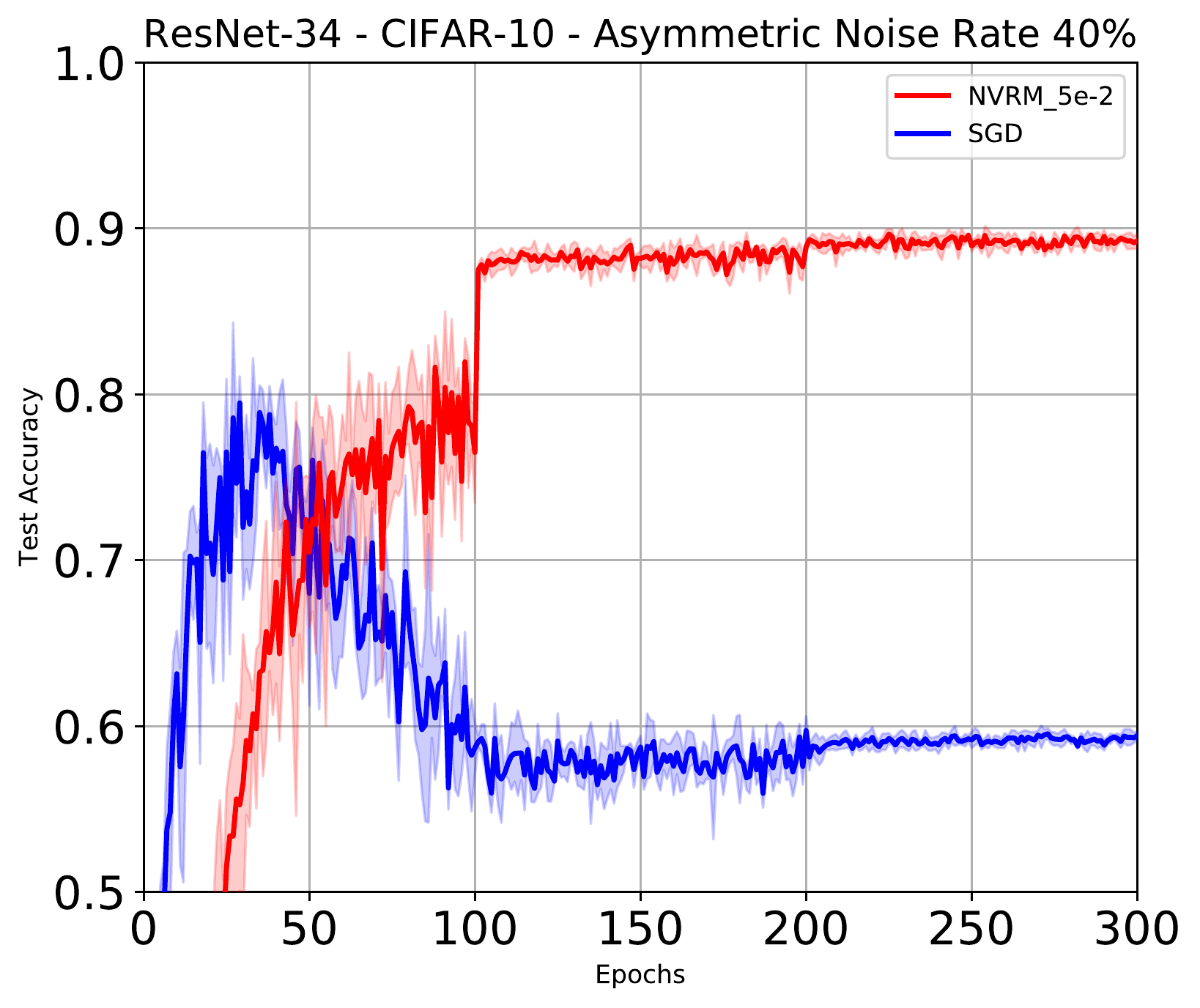}
    \caption{Test accuracy to epochs of ResNet-34 on CIFAR-10 with noisy labels.}
    \label{fig:C10_asymnoise}
  \end{subfigure} \\[0.5em]
  
    \begin{subfigure}{0.97\linewidth}
    \includegraphics[width=0.2425\linewidth]{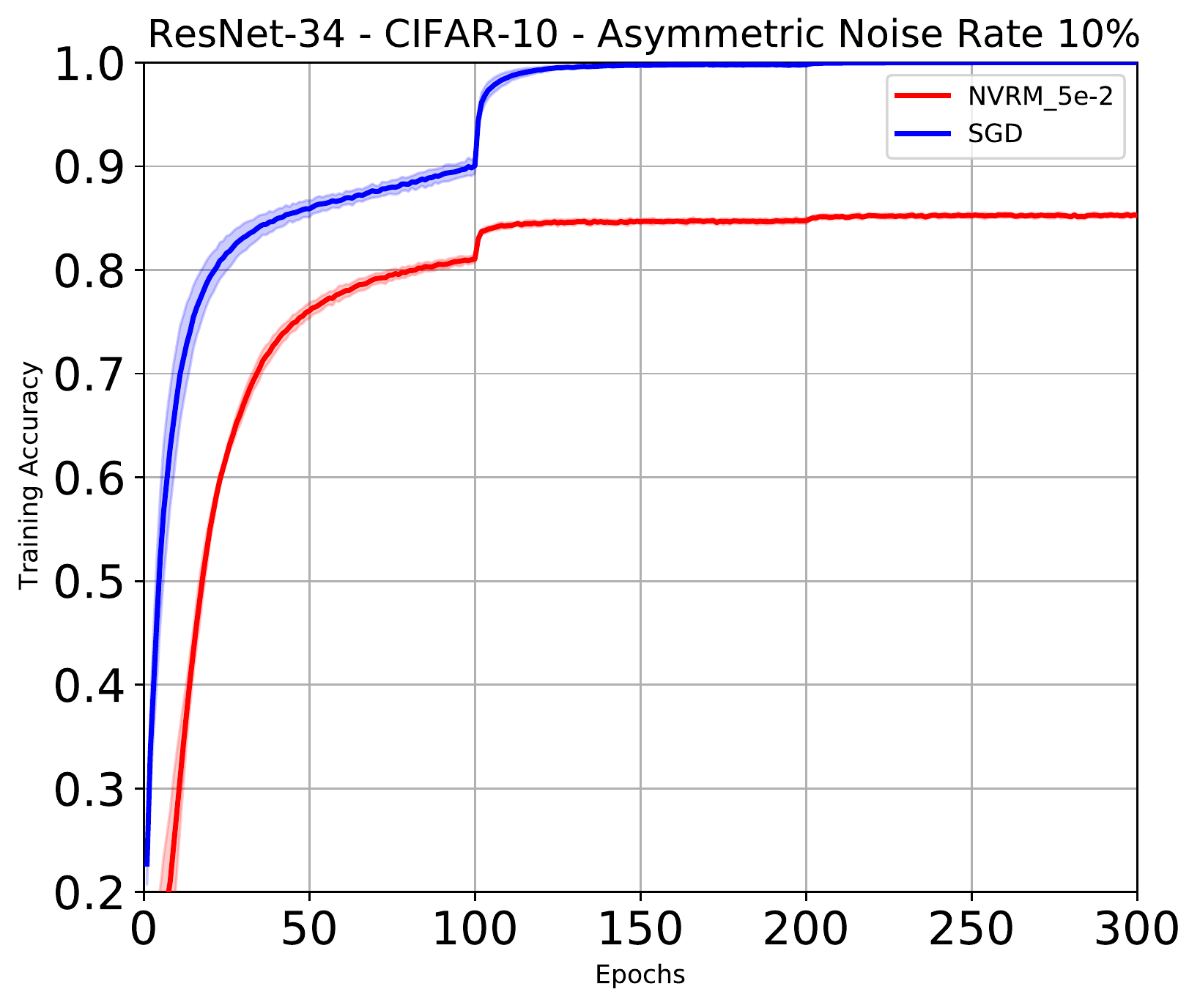}
    \includegraphics[width=0.2425\linewidth]{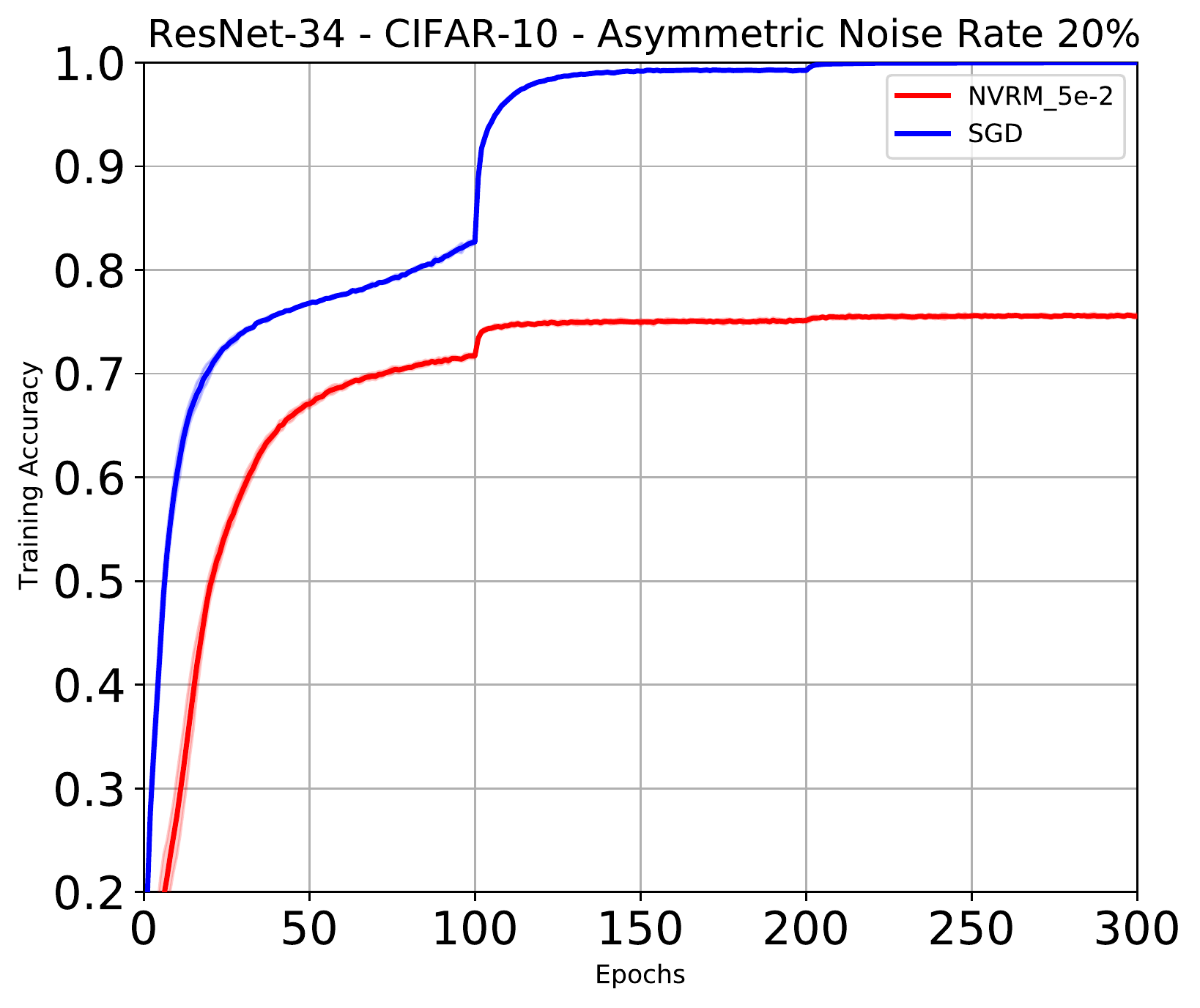}
    \includegraphics[width=0.2425\linewidth]{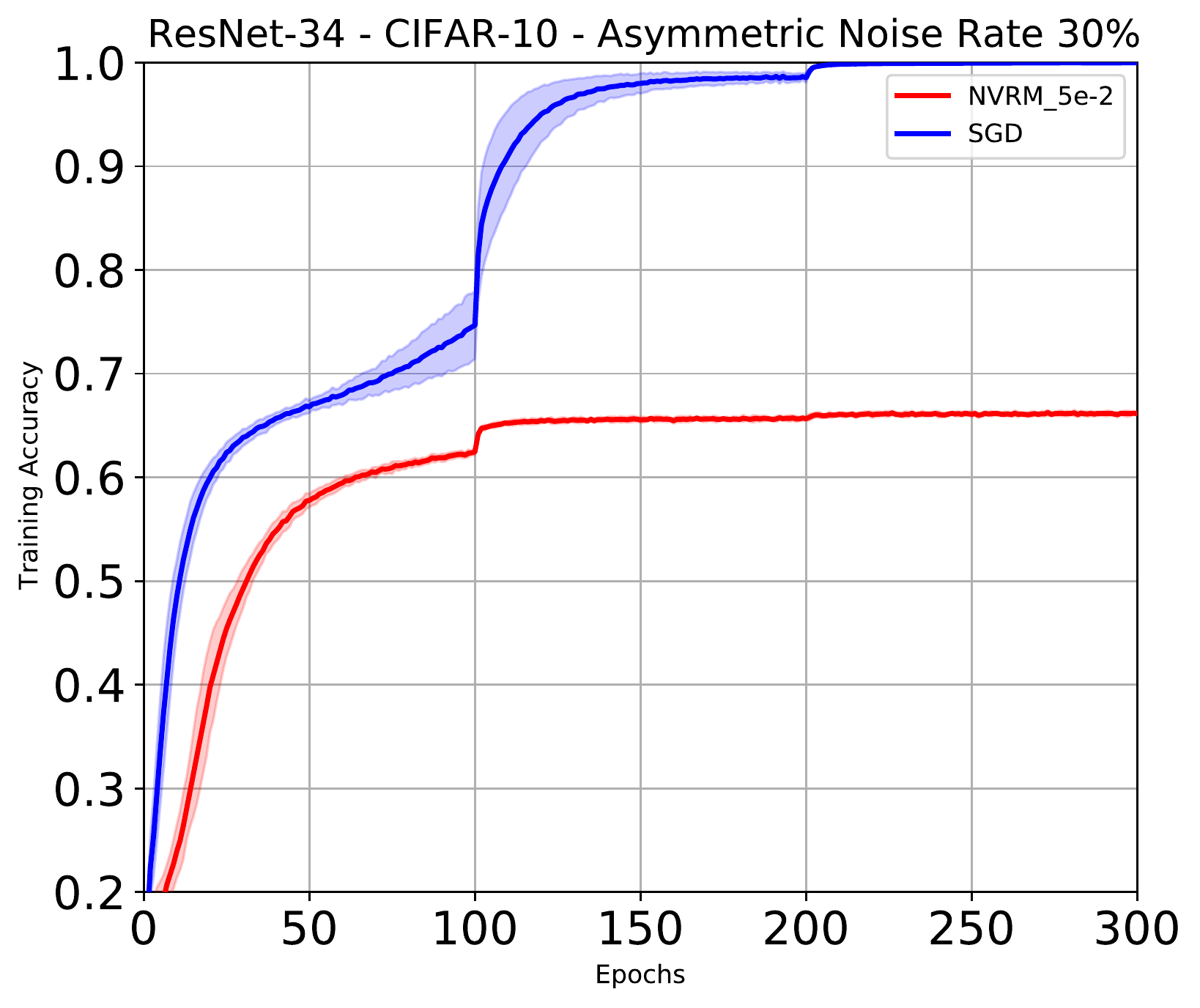}
    \includegraphics[width=0.2425\linewidth]{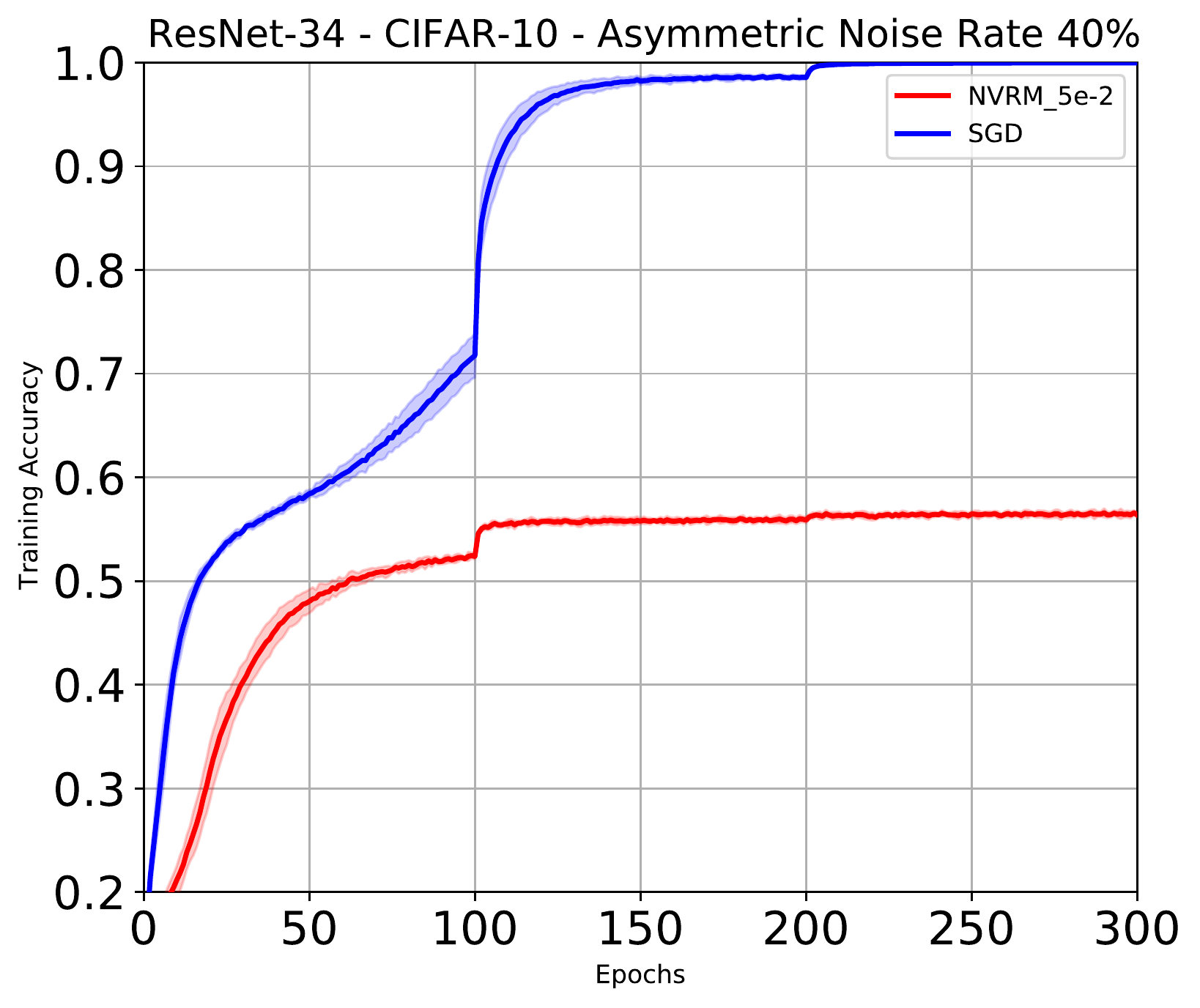}
    \caption{Training accuracy to epochs of ResNet-34 on CIFAR-10 with noisy labels.}
    \label{fig:C10_asymnoise}
  \end{subfigure} \\[0.5em]

  \begin{subfigure}{0.97\linewidth}
    \includegraphics[width=0.2425\linewidth]{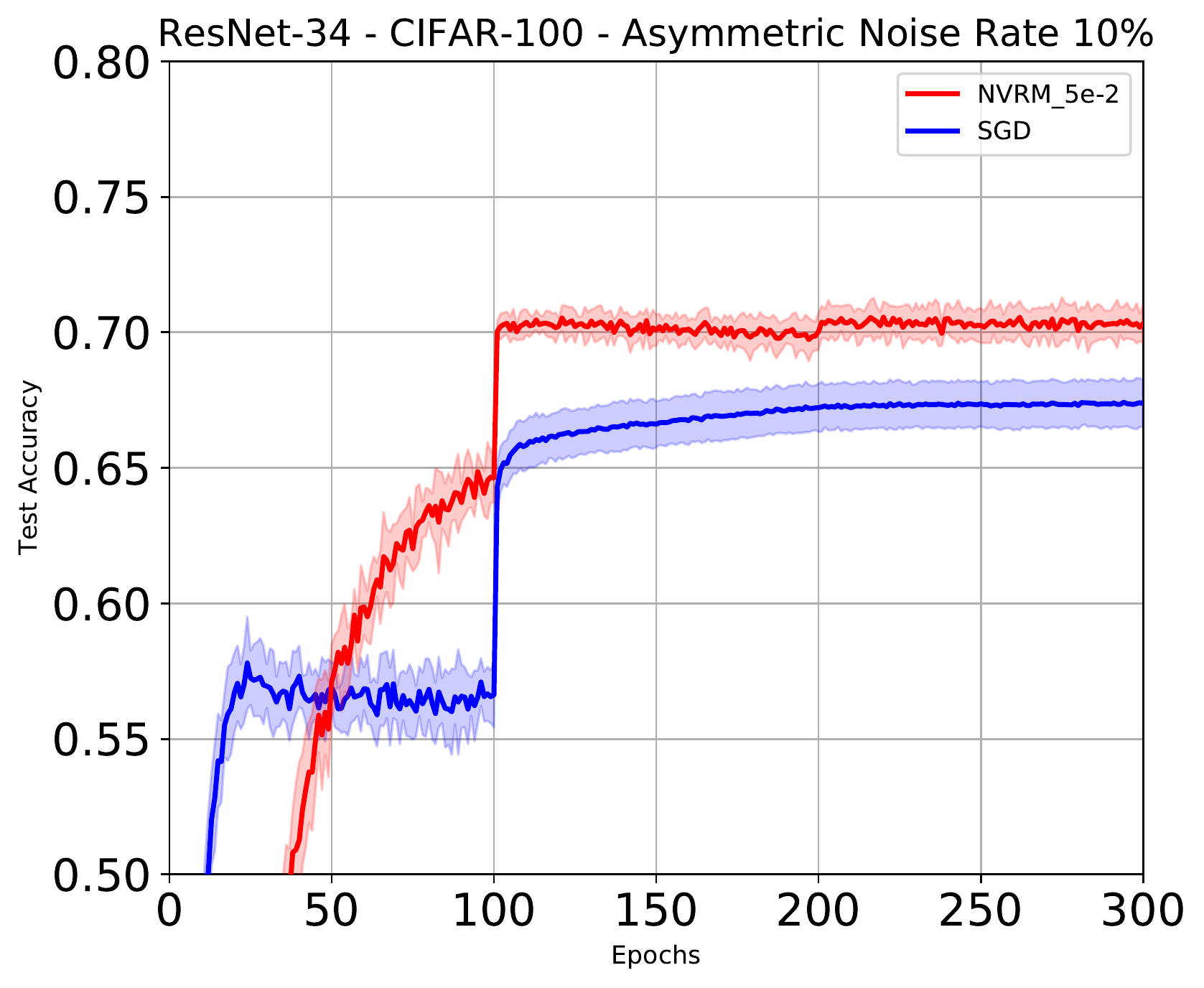}
    \includegraphics[width=0.2425\linewidth]{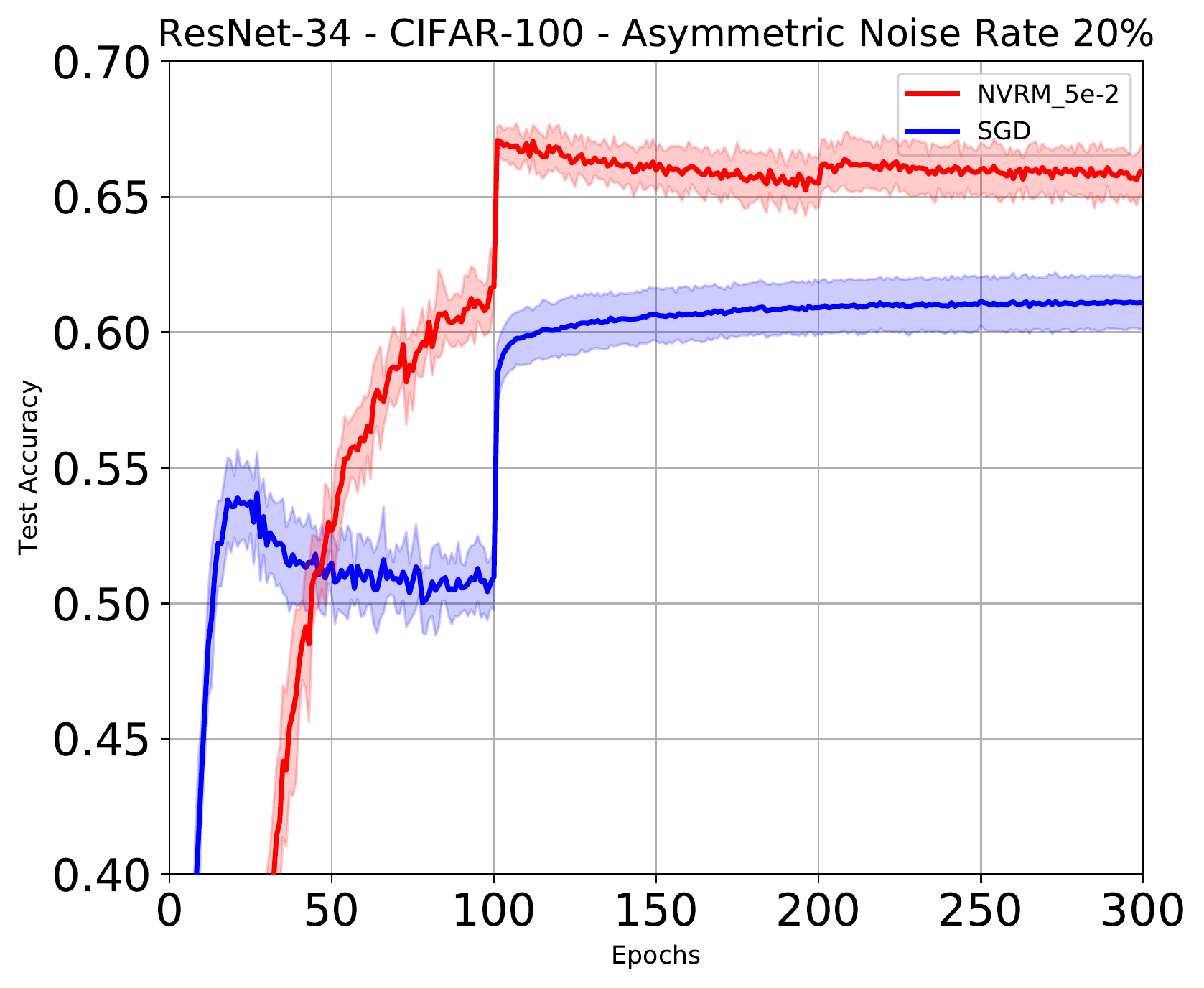}
    \includegraphics[width=0.2425\linewidth]{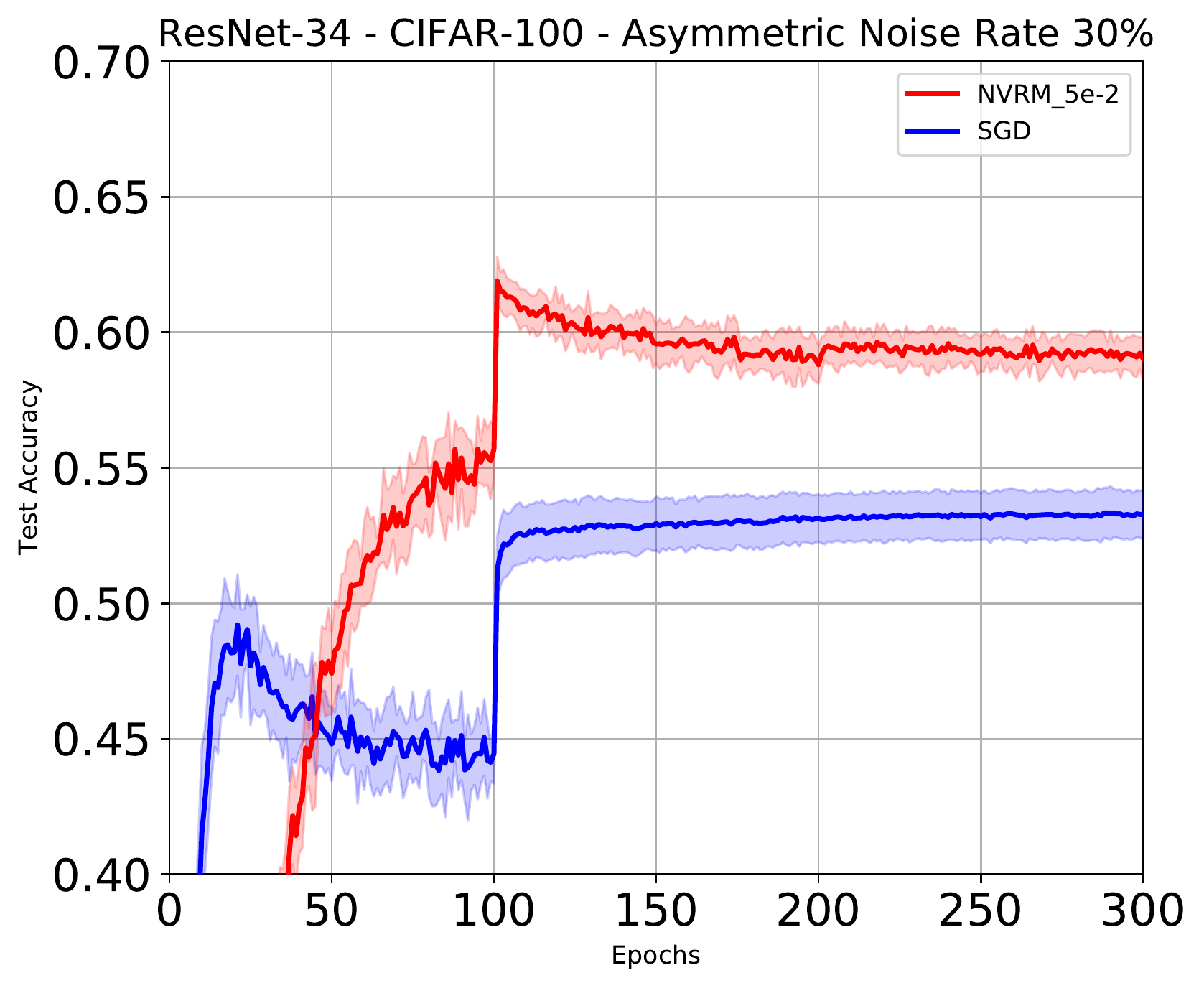}
    \includegraphics[width=0.2425\linewidth]{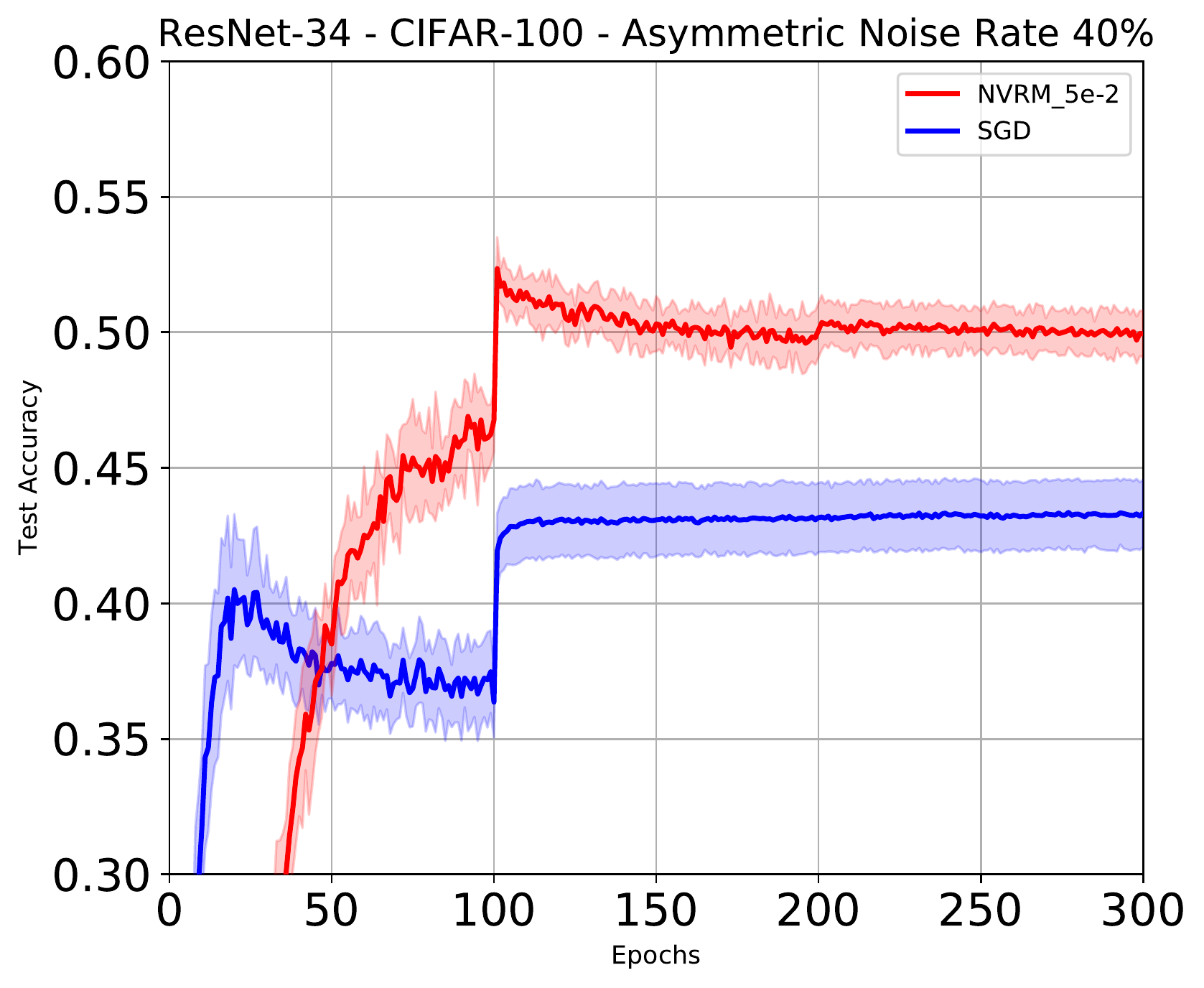}
    \caption{Test accuracy to epochs of ResNet-34 on CIFAR-100 with noisy labels.}
    \label{fig:C100_asymnoise}
  \end{subfigure}
  
    \begin{subfigure}{0.97\linewidth}
    \includegraphics[width=0.2425\linewidth]{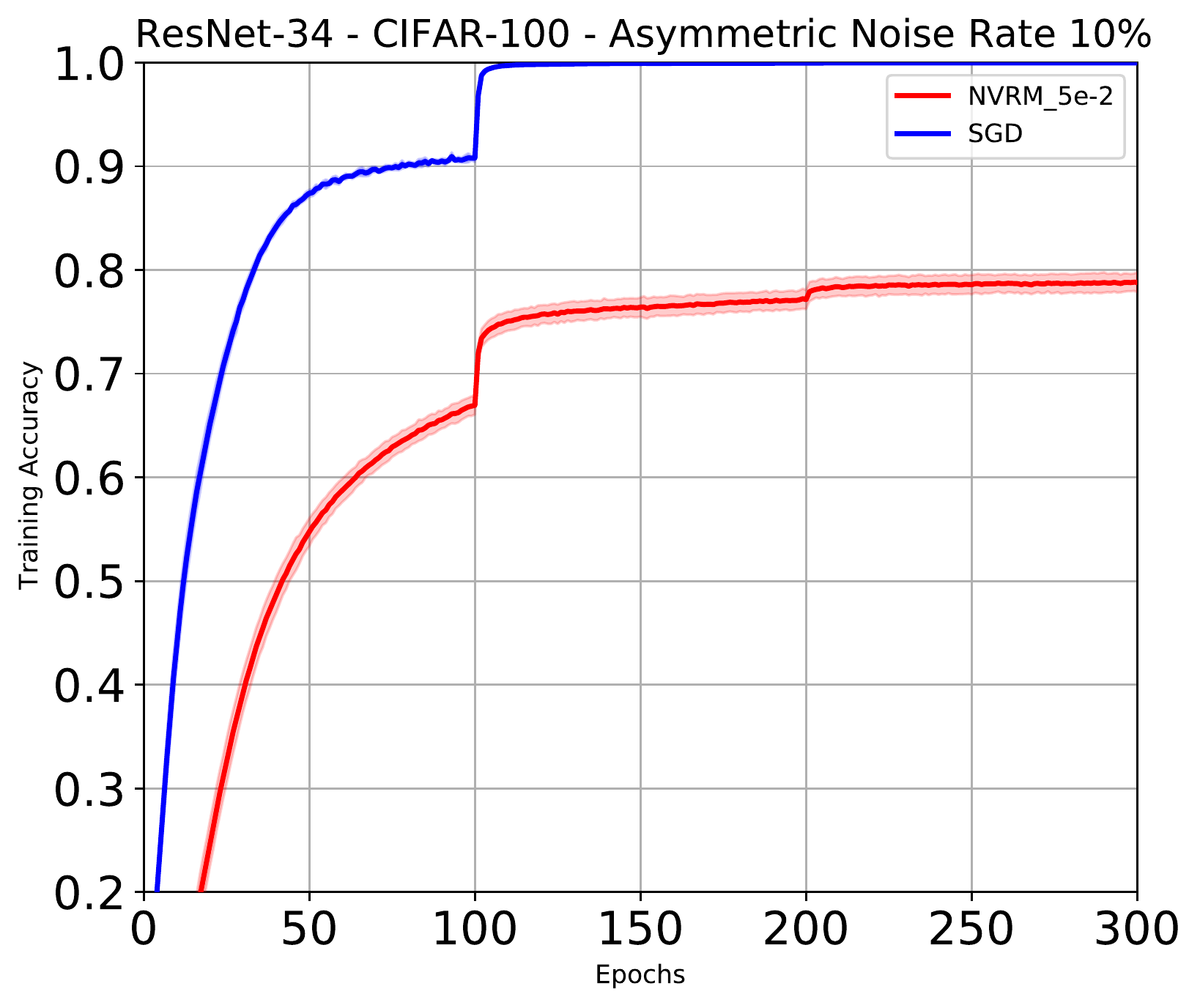}
    \includegraphics[width=0.2425\linewidth]{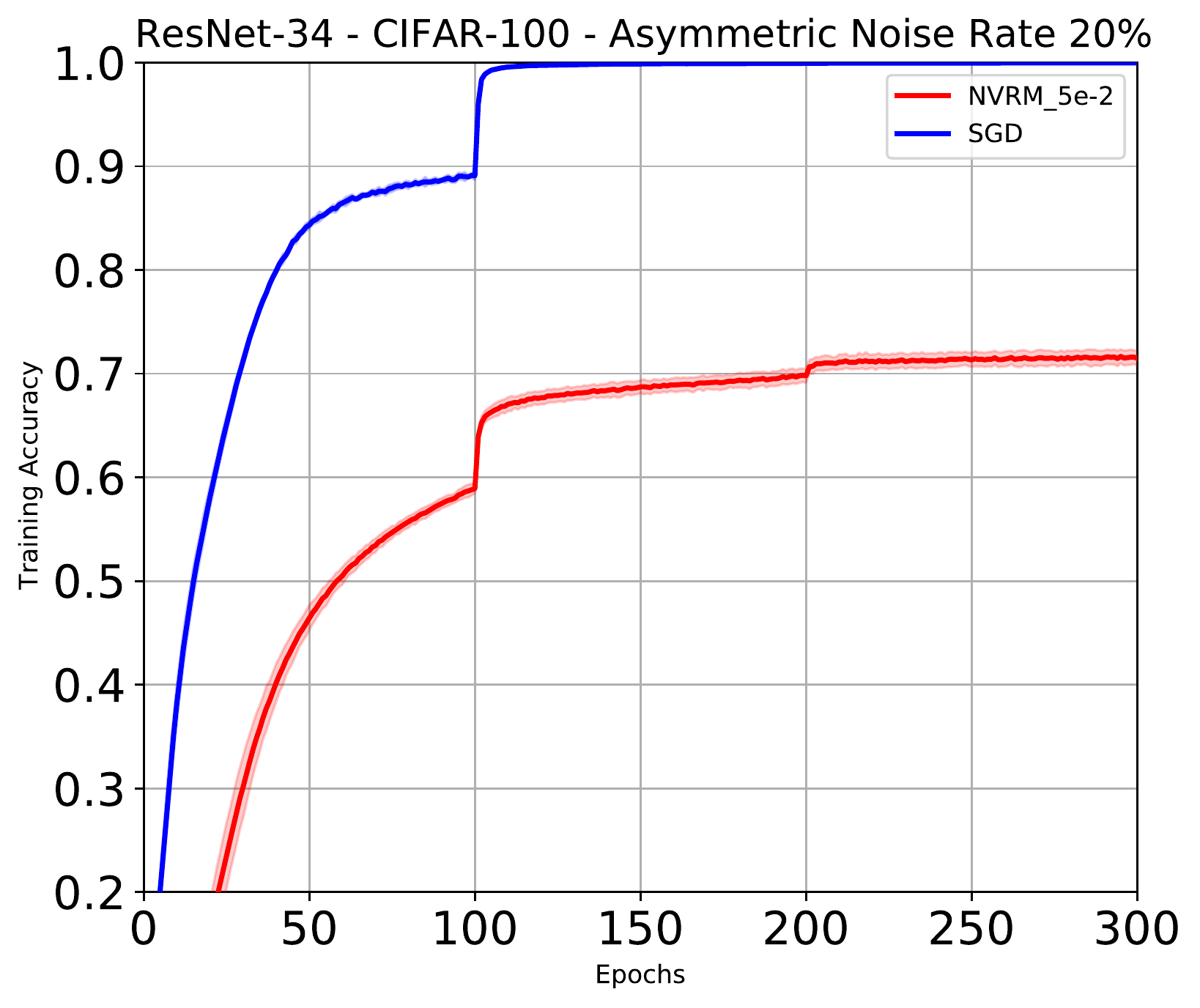}
    \includegraphics[width=0.2425\linewidth]{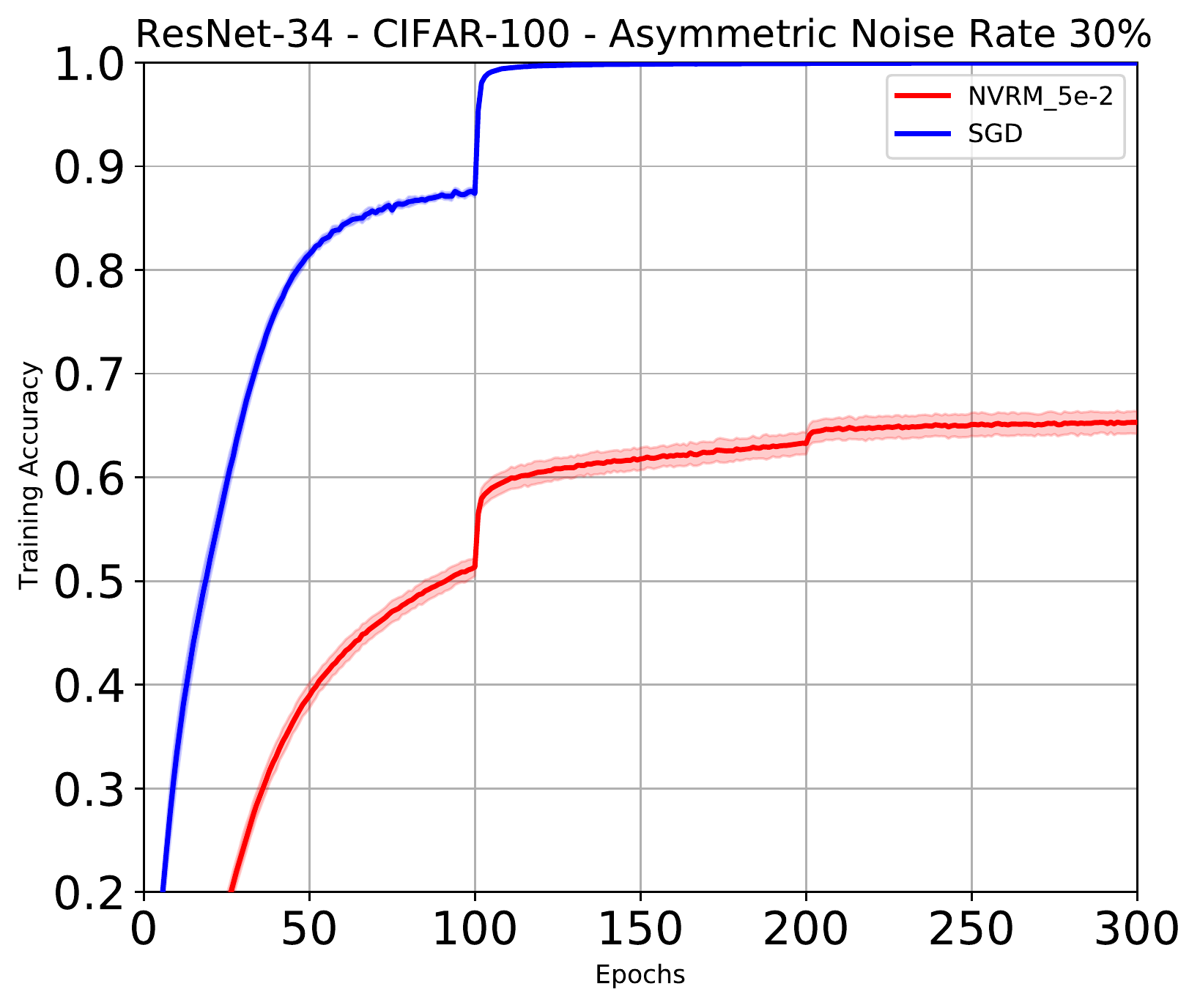}
    \includegraphics[width=0.2425\linewidth]{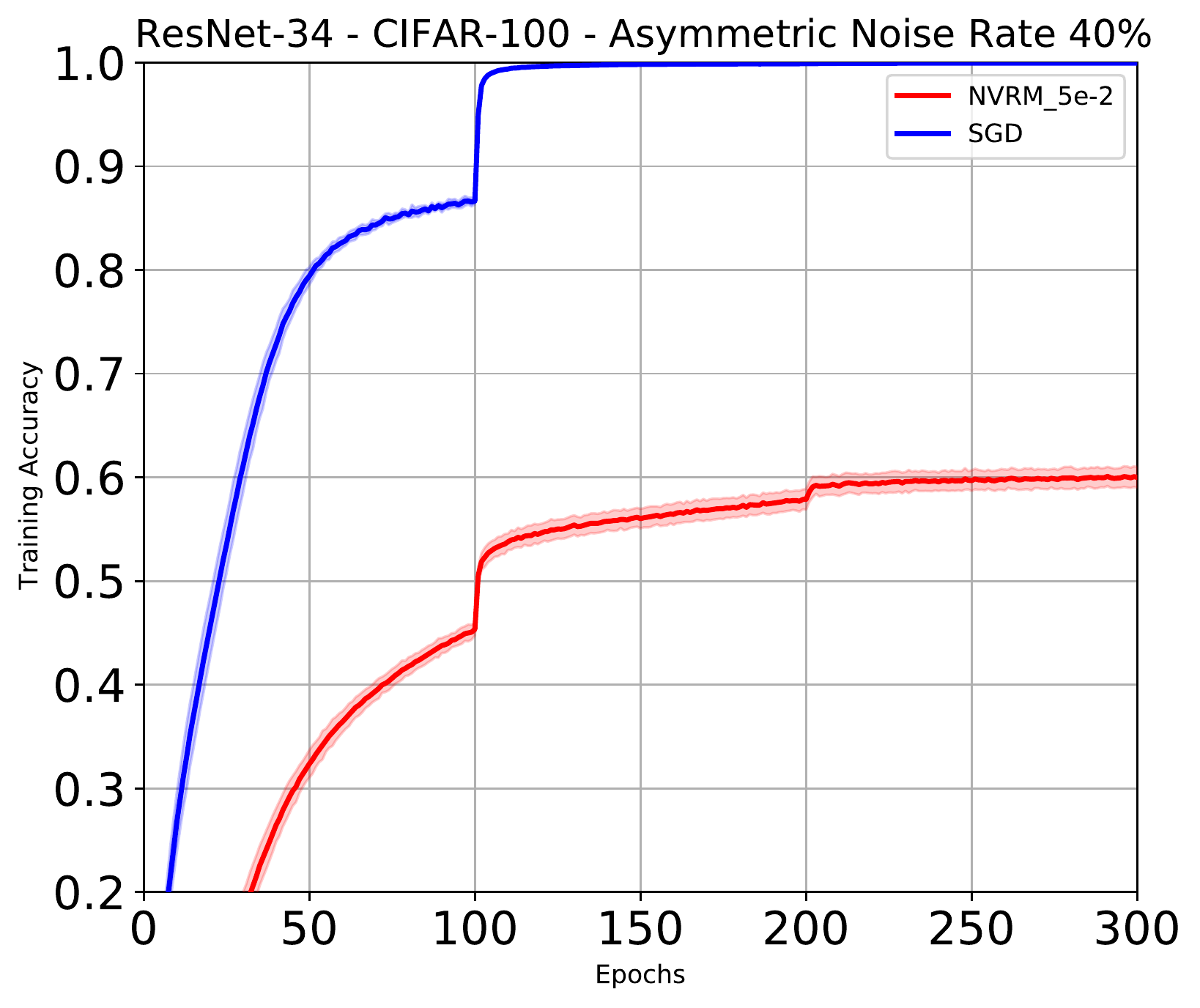}
    \caption{Training accuracy to epochs of ResNet-34 on CIFAR-100 with noisy labels.}
    \label{fig:C100_asymnoise}
  \end{subfigure}
  \caption{Curves of test accuracy to epochs of ResNet-34. NVRM with default $b=0.05$ can significantly relieve memorizing noisy labels. Particularly, NVMR stops learning when the training error is close to the label noise rate. SGD almost memorizes all noisy labels while NVRM almost only learns clean labels. The four columns are for asymmetric label noise rate $10\%$, $20\%$, $30\%$, and $40\%$ respectively.}
  \label{fig:resnet_asym}
\end{figure}

\paragraph{4. Robustness to catastrophic forgetting.} Model: Three-layer Fully-Connected Network (FCN). Dataset: Permuted MNIST \citep{lecun1998mnist}. Continual learning setting: FCN continually learns 5 tasks, and we made different random pixels permutation for each task. We evaluated the accuracy of the base task (the first task) and the mean accuracy of all learned tasks after each task. Figure \ref{fig:cfscale} shows that NVRM forgets the knowledge learned from previous task much slower than standard Empirical Risk Minimization. The empirical results demonstrate that the models learned under NVRM framework are significantly more robust to catastrophic forgetting. In Figure \ref{fig:cfewc}, we also verified that NVRM can enhance a popular neuroscience-inspired continual learning method, Elastic Weight Consolidation (EWC) \citep{kirkpatrick2017overcoming}. We present the incremental class learning task in Appendix \ref{app:exp}.

\begin{figure}
    \centering
        \begin{minipage}[t]{0.49\textwidth}
        \centering
    \includegraphics[width=0.47\linewidth]{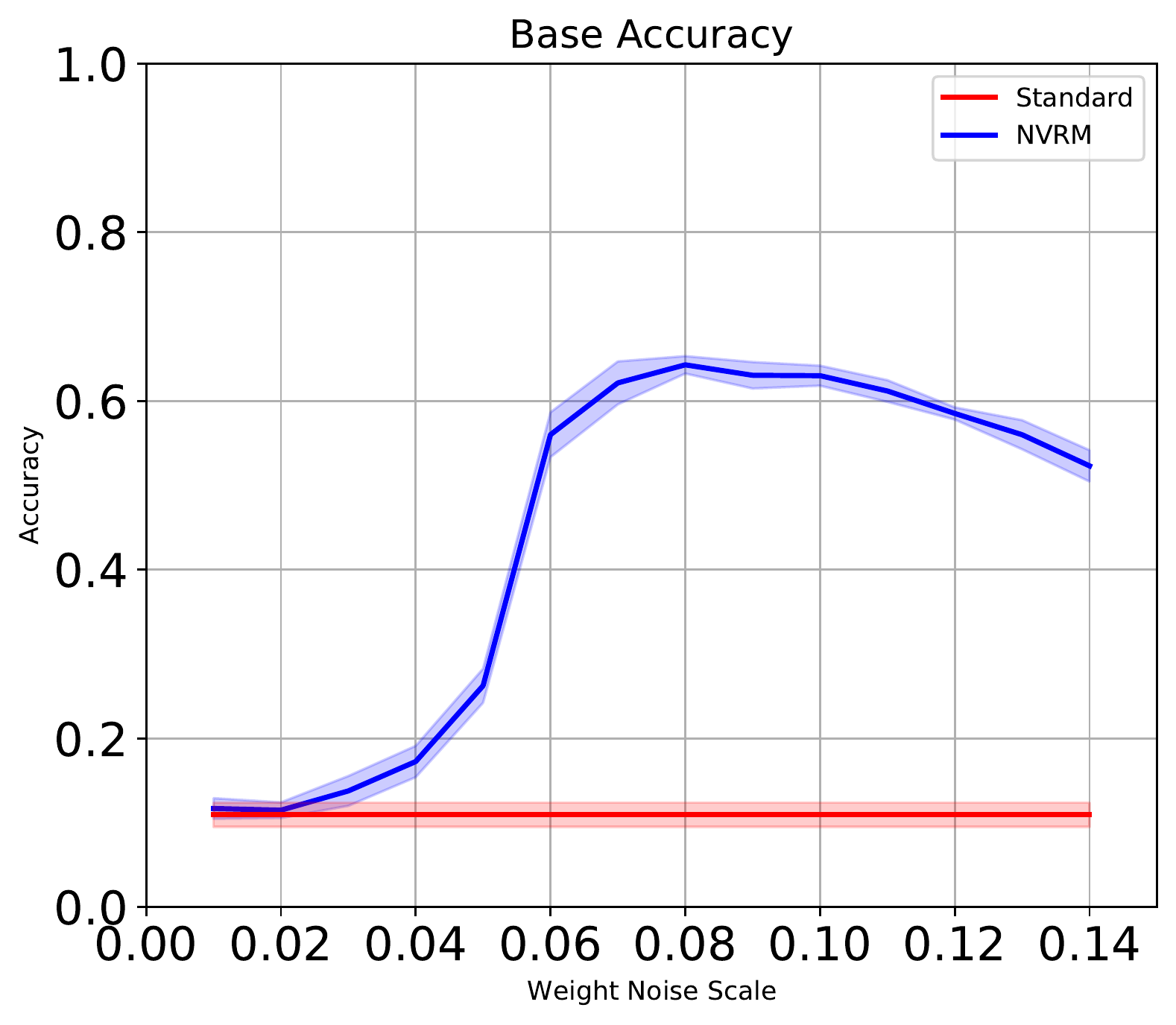}
    \includegraphics[width=0.47\linewidth]{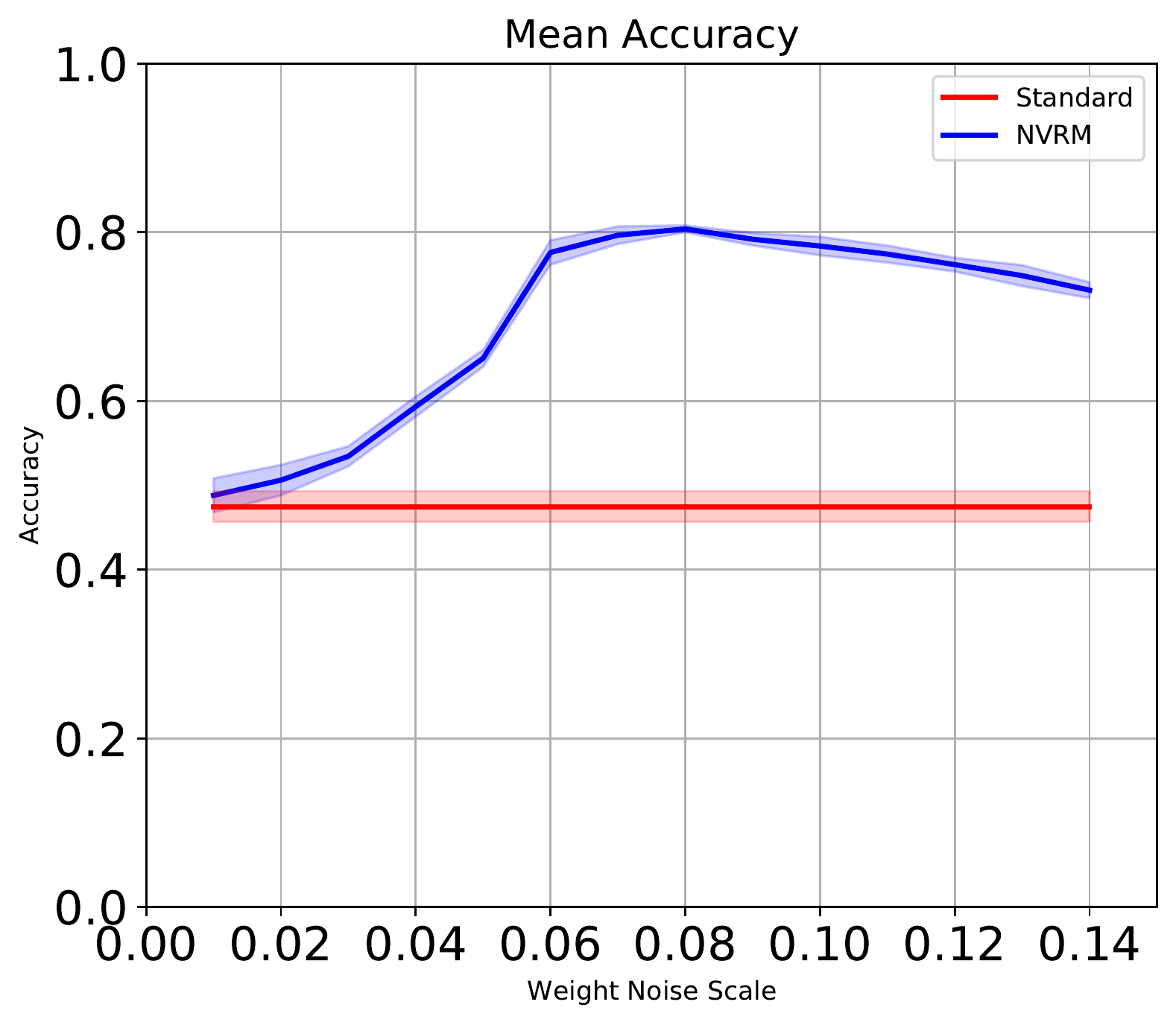}
      \caption{NVRM prevents catastrophic forgetting effectively with various variability scales $b$ (weight noise scales). Left: the accuracy of the first task after continually learning five tasks. Right: the mean accuracy of all five tasks after continually learning five tasks.}
  \label{fig:cfscale}
    \end{minipage} \hfill
     \begin{minipage}[t]{0.49\textwidth}
        \centering
        \includegraphics[width=0.49\linewidth]{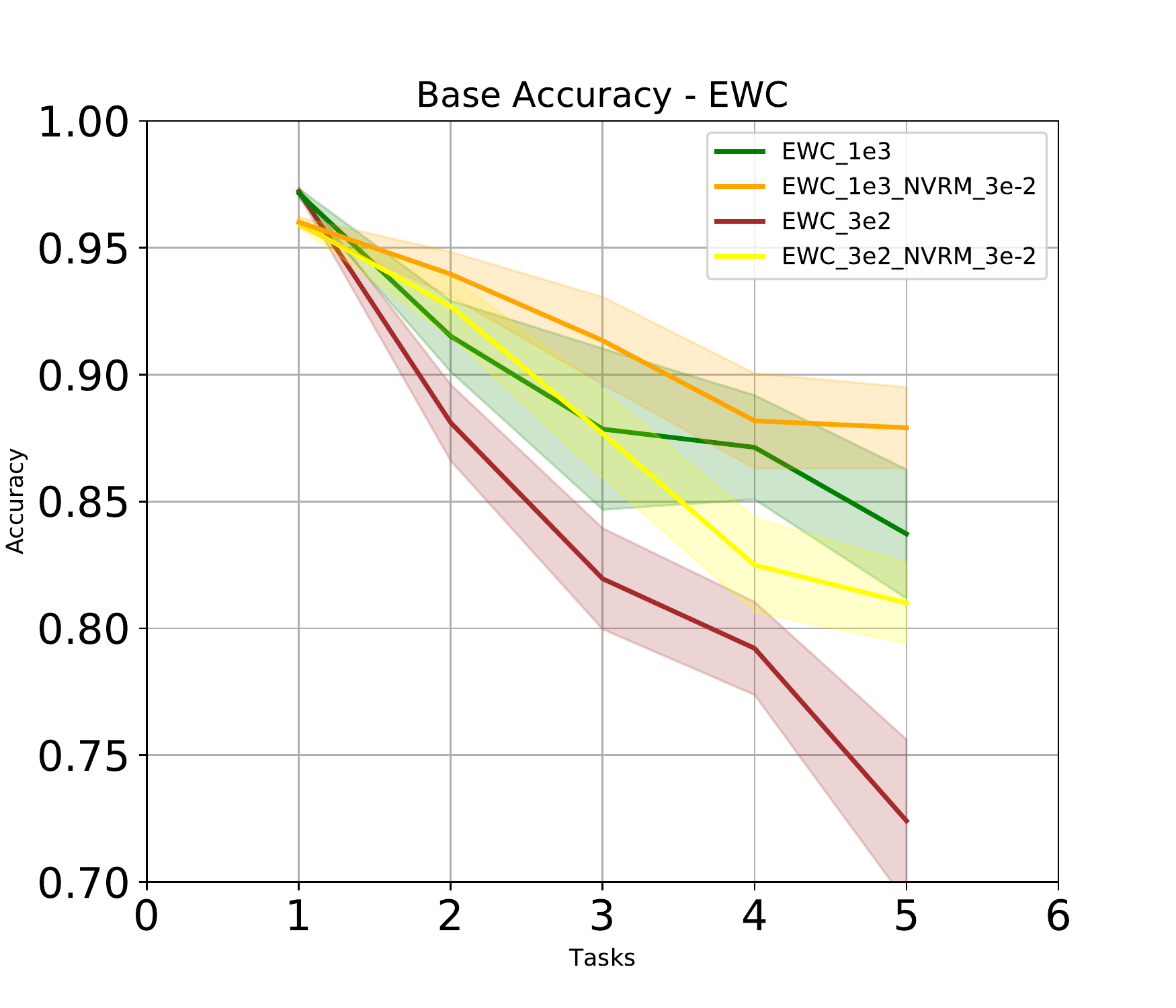}
        \includegraphics[width=0.49\linewidth]{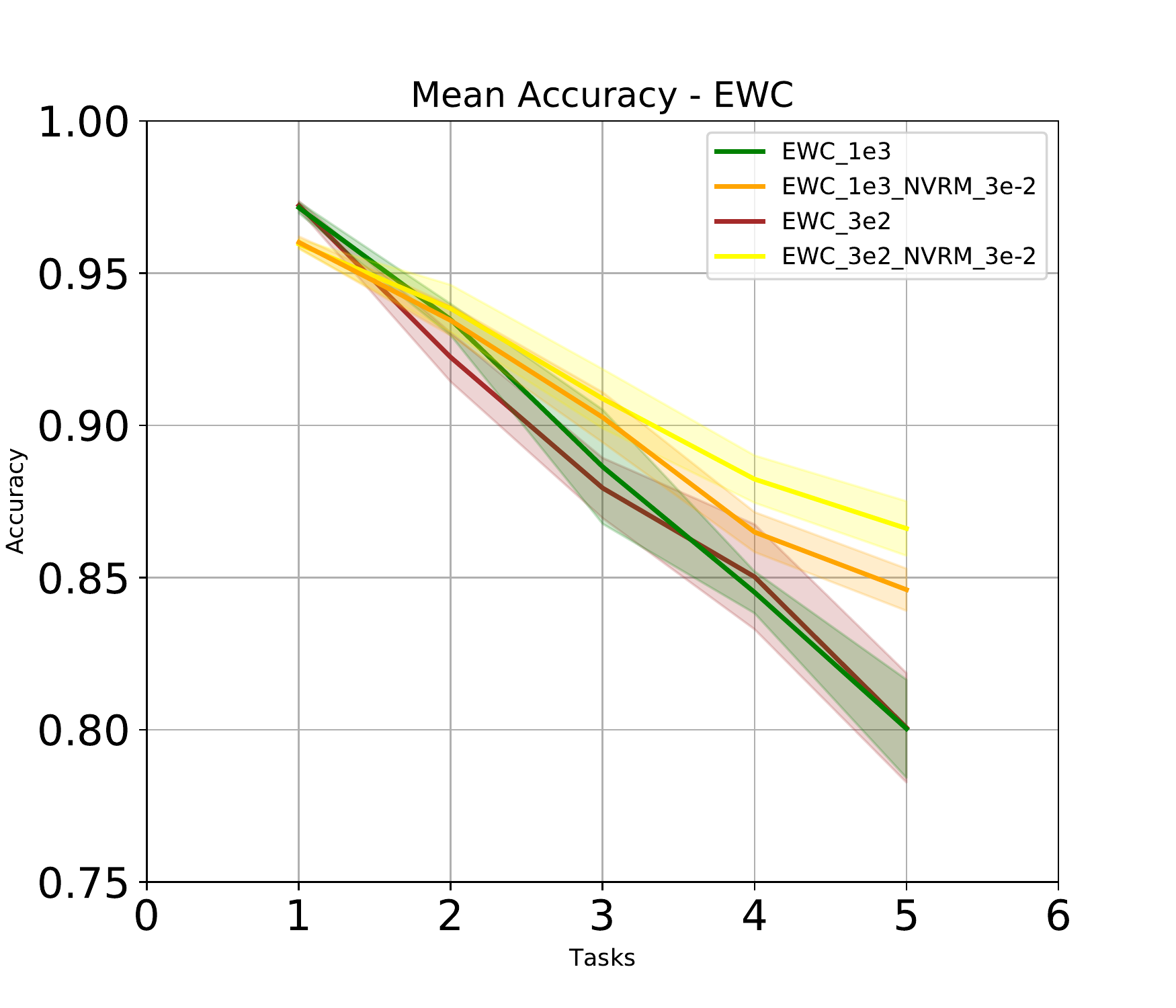}
        \caption{Curves of test accuracy to the number of tasks in continually learning Permuted MNIST. Letf: the accuracy of the base task with EWC. Right: the mean accuracy of all learned tasks with EWC. The importance hyperparameter of EWC is set to $300$ and $1000$. NVRM enhances EWC effectively.}
  \label{fig:cfewc}
    \end{minipage}
\end{figure}

\paragraph{5. Is the de-noising step really helpful?} We empirically compared the NVRM approach with PSGD, which uses a conventional noise injection method, on label noise memorization. We display the test errors of training ResNet34 on CIFAR-10 with $40\%$ asymmetric label noise under various variability scales/weight noise scales in Figure \ref{fig:psgdnoise}. The results demonstrate that, surprisingly, PSGD may prevent memorizing noisy labels much better than SGD, which has not been reported by existing papers yet. However, NVRM can still outperform PSGD significantly for learning with noisy labels. Thus, the denoising step in NVRM is not theoretically reasonable but also empirically powerful.

\begin{figure}
    \centering
        \begin{minipage}[t]{0.49\textwidth}
        \centering
 \includegraphics[width=0.7\linewidth]{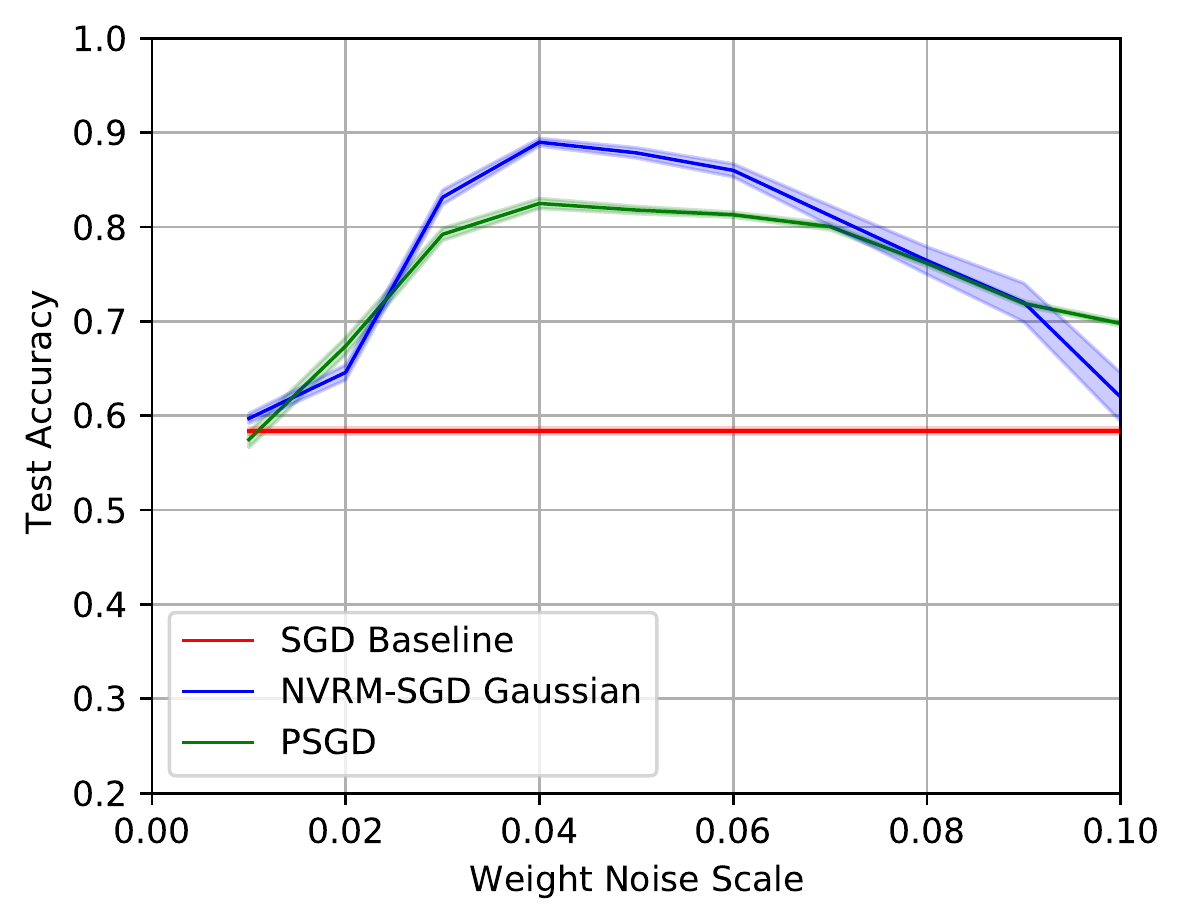}
  \caption{The de-noising step is helpful for preventing overfitting. Dataset: CIAF-10 with $40\%$ label noise. While we are the first to report that PSGD may prevent memorizing noisy labels much better than SGD, NVRM can still outperform PSGD significantly by nearly seven points. }
  \label{fig:psgdnoise}
    \end{minipage} \hfill
     \begin{minipage}[t]{0.49\textwidth}
        \centering
        \includegraphics[width=0.7\linewidth]{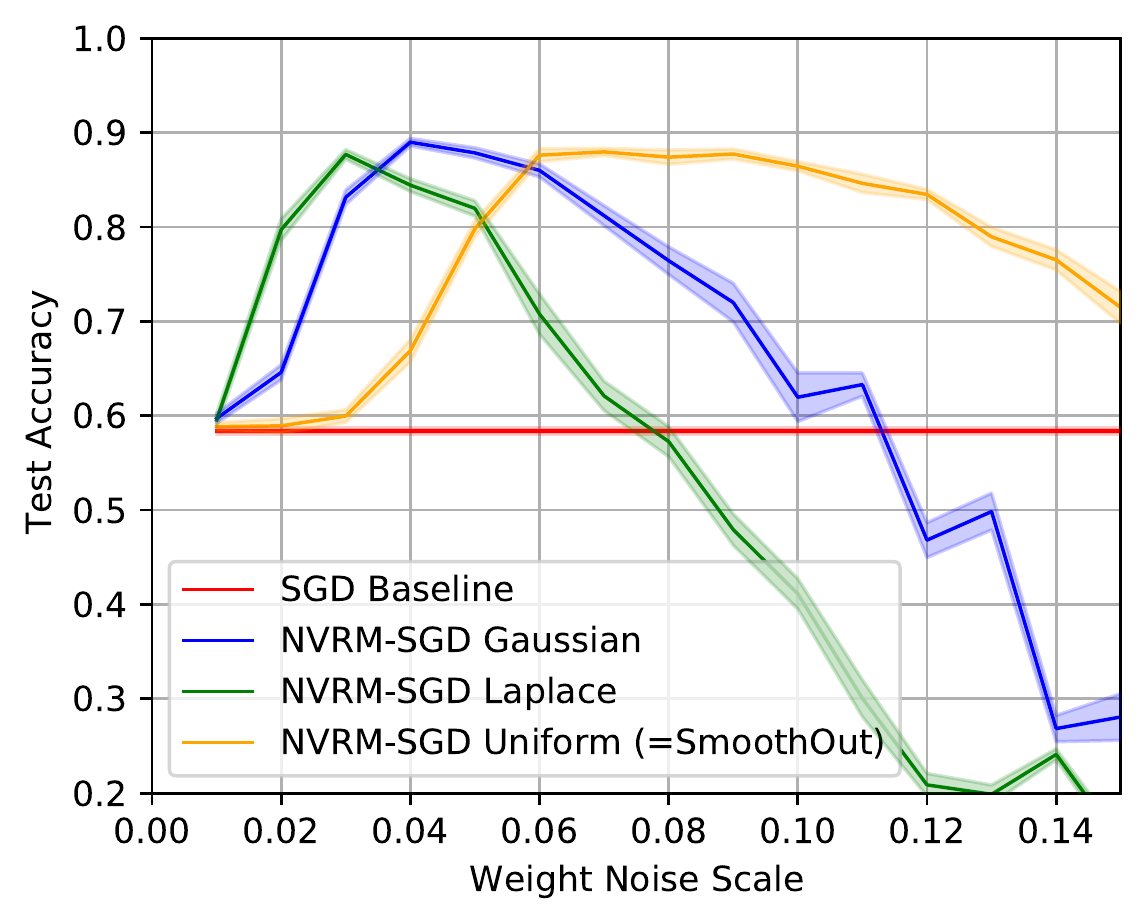}
        \caption{Choices of noise types for NVRM. Dataset: CIAF-10 with $40\%$ label noise. With a proper variability scale (weight noise scale) $b$, NVRM Gaussian outperforms NVRM Uniform and NVRM Laplace by nearly one point, while NVRM Uniform is more robust to the variability scale.}
  \label{fig:noisetype}
    \end{minipage}
\end{figure}

\paragraph{6. Choices of noise types.} As we mentioned above, we do not have to let $\epsilon$ be Gaussian. We consider the noise type as a hyperparamter, and empirically compare three common noise types, including Gaussian noise, Laplace noise, and uniform noise, on CIAF-10 with noisy labels. Because the tasks of learning with noisy labels can well reflect the ability to prevent overfitting. We display the test errors of training ResNet34 on CIFAR-10 under various variability scales $b$ with $40\%$ asymmetric label noise in Figure \ref{fig:noisetype}. The result demonstrates that, with a wide range of the variability hyperparameter $b$, NVRM with three noise types all can achieve remarkable improvements over the baseline SGD. This is not surprising because any of these noise may theoretically regularize the mutual information. Note that NVRM Uniform is identical to the original SmoothOut, which applies Uniform Smoothing to SGD. The original paper of SmoothOut \citep{wen2018smoothout} argued that uniform noise may be slightly better than Gaussian noise on clean datasets. However, existing work \citep{duchi2012randomized,wen2018smoothout} did not discover the ability of Randomized Smoothing or SmoothOut to learn with noisy label. Figure \ref{fig:noisetype} suggests that the conclusion of noise types is richer than \citet{wen2018smoothout} expected, and there is no ``free lunch''. We discovered that, the optimal test performance of NVRM Gaussian is better than NVRM Uniform and NVRM Laplace by nearly one point, while NVRM Uniform is more robust to the variability scale. It indicates that Gaussian noise and uniform noise have different advantages. 

\section{Conclusion}
\label{sec:conclusion}

A well-known term in neuroscience, named neural variability, suggests that the human brain response to the same stimulus exhibits substantial variability, and significantly contributes to balancing the accuracy and plasticity/flexibility in motor learning in natural neural networks. Inspired by this mechanism, this paper introduced ANV for balancing the accuracy and plasticity/flexibility in artificial neural networks. We proved that ANV acts as an implicit regularizer to control the mutual information between the training data and the learned model, which further secures preventing the learned model from overfitting and catastrophic forgetting. These two abilities are theoretically related to the robustness to weight perturbations. The proposed NVRM framework is an efficient approach to achieving ANV for artificial neural networks. The empirical results demonstrate that our method can (1) enhance the robustness to weight perturbation, (2) improve generalizability, (3) relieve the memorization of noisy labels, and (4) mitigate catastrophic forgetting. Particularly, NVRM, an optimization approach, may handle memorization of noisy labels well at negligible computational and coding costs. One line code of importing a neural variable optimizer is all you need to achieve ANV for your models.

\section*{Acknowledgement}
MS was supported by the International Research Center for Neurointelligence (WPI-IRCN) at The University of Tokyo Institutes for Advanced Study.

\bibliography{deeplearning}

\appendix

\section{Proofs}
\label{app:proofs}

\subsection{Proof of Theorem \ref{pr:pacgen}}
\begin{proof}
We consider a distribution $Q_{\mathrm{nv}}$ over predictors with weights of the form $\theta+\epsilon$, where $\theta$ is drawn from the distribution $Q$ and $\epsilon \sim \mathcal{N}(0, b^{2}I)$ is a random variable indicating weight perturbation. Suppose the model $\mathcal{M}(\theta)$ achieves $(b,\delta(\theta))$-neural variability, which follows the notation of the expected sharpness used by \citet{neyshabur2017exploring}. We start our theoretical analysis based on Equation 7 of \citet{neyshabur2017exploring}. We can bound the expected risk over the distribution $Q_{\mathrm{nv}}$ as:
 \begin{align}
L(Q_{\mathrm{nv}}) \leq&  \hat{L}(Q) + [\hat{L}(Q_{\mathrm{NV}}) -  \hat{L}(Q) ] + 4\sqrt{\frac{1}{m} [\KL(Q_{\mathrm{nv}}\| P) + \ln(\frac{2m}{\Delta})]}, \\
& =  \hat{L}(Q) +  4\sqrt{\frac{1}{m} [\KL(Q_{\mathrm{nv}}\| P) + \ln(\frac{2m}{\Delta})]} + \mathbb{E}_{\theta\sim Q}[\delta(\theta)].
\end{align}
We emphasis that this bound holds for any distribution $Q$ (any method of choosing $\theta$ dependent on the training dataset) and any prior $P$. We use a very special distribution $Q$:$Pr(\theta=\theta^{\star})=1$. Thus we can bound the expected risk over the distribution $Q_{\mathrm{nv}}$ as
 \begin{align}
L(Q_{\mathrm{nv}}) \leq \hat{L}(\theta^{\star}) +  4\sqrt{\frac{1}{m} [\KL(\theta^{\star} + \epsilon \| P) + \ln(\frac{2m}{\Delta})]} + \delta(\theta^{\star}).
\end{align}
The Kullback-Leibler divergence of the two Gaussians can be written as
\begin{align}
\KL(\theta^{\star} + \epsilon\| P) = \sum_{i=1}^{N} \left[  \log\left(\frac{\sigma}{b}\right) + \frac{b^{2} + \theta_{i}^{\star 2}}{2\sigma^{2}}  - \frac{1}{2}\right], 
\end{align}
where $N$ is the number of the model weights. Finally, we have
\begin{align}
L(Q_{\mathrm{nv}}) \leq \hat{L}(\theta^{\star}) +  4\sqrt{\frac{1}{m} \left[ \sum_{i=1}^{N} \left[  \log\left(\frac{\sigma}{b}\right) + \frac{b^{2} + \theta_{i}^{\star 2}}{2\sigma^{2}}  - \frac{1}{2}\right] + \ln(\frac{2m}{\Delta})\right]} + \delta. 
\end{align}

\end{proof}

\section{The Mutual-Information Generalization Bound}
\label{app:mibound}

We formulate a mutual information theoretical foundation of $(b, \delta)$-neural variability, which is more related to the neuroscience mechanism of penalizing the information carried by nerve impulses \citep{stein2005neuronal}. 

It is known that the information in the model weights relates to overfitting \citep{hinton1993keeping} and flat minima \citep{hochreiter1997flat}. According to Lemma \ref{pr:geninfo}, if the mutual information of the parameters and data decreases, the upper bound of the expected generalization gap will also decrease. 

 \begin{lemma}[\citep{xu2017information}]
 \label{pr:geninfo}
 Suppose $L(\theta, (x,y))$ is the loss function of the model $\mathcal{M}(\theta)$, such that $L(\theta, (x,y))$ is $\sigma$-sub-gaussian random variable for each $\theta$. Let the training dataset $S= \{(x^{(i)},y^{(i)})\}_{i=1}^{m} $ and the test sample $\bar{S} = (\bar{x}, \bar{y})$ be sampled from the data distribution $\mathcal{S}$ independently, and $\theta$ be the model weights learned from the algorithm $\mathcal{A}(\theta|S)$. Then the expected generalization gap meets the following property:
\[  \mathbb{E}\left[ L( \theta, (\bar{x}, \bar{y})) - L(\theta, S) \right]   \leq \sqrt{\frac{2\sigma^{2}}{m} I(\theta; S)}, \]
where $I(\theta; S)$ denotes the mutual information between the parameters $\theta$ and the training dataset $S$. 
\end{lemma}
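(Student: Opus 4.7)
The lemma is a standard result from the Xu--Raginsky mutual information framework, and the cleanest path is through the Donsker--Varadhan variational representation of KL divergence applied to the joint versus product distributions of $(\theta, S)$. The overall idea: under the product measure $P_\theta \otimes P_S$ the generalization gap has mean zero and a controllable moment generating function (because individual losses are sub-gaussian and $L(\theta,S)$ is an average of $m$ of them), while passing to the true joint $P_{\theta,S}$ costs exactly $\KL(P_{\theta,S} \| P_\theta \otimes P_S) = I(\theta;S)$. Trading these two terms off in $\lambda$ yields the $\sqrt{2\sigma^2 I(\theta;S)/m}$ bound.

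\textbf{Step 1: reduce to an expectation over the joint.} First, I would rewrite the quantity of interest as
\[
\mathbb{E}\bigl[L(\theta,(\bar x,\bar y)) - L(\theta,S)\bigr]
= \mathbb{E}_{(\theta,S)\sim P_{\theta,S}}\bigl[\,L(\theta) - \hat L(\theta,S)\,\bigr],
\]
using that $\bar S$ is independent of $\theta$, so $\mathbb{E}[L(\theta,(\bar x,\bar y))]$ equals the expected population risk $\mathbb{E}_\theta[L(\theta)]$, with $L(\theta) := \mathbb{E}_{(x,y)\sim\mathcal{S}}[L(\theta,(x,y))]$. Define $g(\theta,S) := L(\theta) - L(\theta,S)$ so the target is $\mathbb{E}_{P_{\theta,S}}[g]$.

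\textbf{Step 2: apply Donsker--Varadhan.} For any $\lambda>0$, the variational representation of KL divergence gives
\[
\lambda\,\mathbb{E}_{P_{\theta,S}}[g(\theta,S)]
\;\le\; \KL\!\left(P_{\theta,S}\,\big\|\,P_\theta\otimes P_S\right)
\;+\; \log \mathbb{E}_{P_\theta\otimes P_S}\bigl[\exp(\lambda\, g(\theta,S))\bigr].
\]
By definition $\KL(P_{\theta,S}\|P_\theta\otimes P_S) = I(\theta;S)$, so the first term on the right is exactly the mutual information appearing in the statement.

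\textbf{Step 3: bound the MGF under the product measure.} Under $P_\theta\otimes P_S$, the weights $\theta$ are drawn independently of the sample, so conditional on $\theta$ the variables $L(\theta,(x^{(i)},y^{(i)}))$ are i.i.d.\ with mean $L(\theta)$ and are $\sigma$-sub-gaussian by hypothesis. Therefore $g(\theta,S) = \tfrac{1}{m}\sum_{i=1}^m [L(\theta) - L(\theta,(x^{(i)},y^{(i)}))]$ is a centered average of $m$ independent $\sigma$-sub-gaussians conditional on $\theta$, hence $\sigma/\sqrt{m}$-sub-gaussian. This yields $\mathbb{E}_{P_\theta\otimes P_S}[\exp(\lambda g)\mid\theta] \le \exp(\lambda^2\sigma^2/(2m))$, and after taking the outer expectation and logarithm,
\[
\lambda\,\mathbb{E}_{P_{\theta,S}}[g] \;\le\; I(\theta;S) + \frac{\lambda^2 \sigma^2}{2m}.
\]

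\textbf{Step 4: optimize over $\lambda$.} Minimizing the right-hand side divided by $\lambda$ over $\lambda>0$ gives $\lambda^\star = \sqrt{2m\,I(\theta;S)/\sigma^2}$, and substituting yields the desired
\[
\mathbb{E}\bigl[L(\theta,(\bar x,\bar y)) - L(\theta,S)\bigr] \;\le\; \sqrt{\frac{2\sigma^2}{m}\,I(\theta;S)}.
\]

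\textbf{Main obstacle.} The substantive (though still routine) step is Step 3: one must recognize that the sub-gaussian parameter $\sigma$ for a single sample tightens to $\sigma/\sqrt{m}$ for the empirical average only because $\theta$ and $S$ are \emph{independent} under the product measure $\nu$, which is precisely what the Donsker--Varadhan change of measure was introduced to exploit. If this independence were not available (as under the true joint), the MGF would not factorize over samples and one could not extract the $1/m$ decay. Everything else is the standard Herbst-style argument of applying DV with a sub-gaussian test function and optimizing $\lambda$.
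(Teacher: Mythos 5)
The paper does not prove this lemma itself; it is imported verbatim from \citet{xu2017information}, so there is no internal proof to compare against. Your argument --- Donsker--Varadhan applied to $P_{\theta,S}$ versus $P_\theta\otimes P_S$, the $\sigma/\sqrt{m}$-sub-gaussian MGF bound under the product measure, and optimization over $\lambda$ --- is correct and is precisely the standard proof of that result, so nothing is missing.
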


 \begin{theorem}
 \label{pr:nvgen}
Suppose the conditions of Lemma \ref{pr:geninfo} hold, and the model $\mathcal{M}(\theta)$ achieves $(b, \delta)$-NV on the training dataset $S$. Then the expected generalization gap of the model $\mathcal{M}(\theta)$ satisfies:
\[  \mathbb{E}\left[ L( \theta + \epsilon , (\bar{x}, \bar{y})) - L(\theta, S)  \right]   \leq \sqrt{\frac{2\sigma^{2}}{m} I(\theta + \epsilon; S)} + \delta , \]
where $\epsilon \sim \mathcal{N}(0,b^{2}I)$ is Gaussian noise, and $\delta$ only depends on the training loss landscape.
 \end{theorem}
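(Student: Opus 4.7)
The plan is to apply Lemma \ref{pr:geninfo} not to the raw learned weights $\theta$, but to the perturbed weights $\hat{\theta}=\theta+\epsilon$, and then use the $(b,\delta)$-NV condition to absorb the mismatch between $L(\hat\theta,S)$ and $L(\theta,S)$ into an additive $\delta$ term. Concretely, I would view the mapping $S \mapsto \hat{\theta}$ as a randomized algorithm: first run the original algorithm $\mathcal{A}(\theta|S)$, then add independent Gaussian noise $\epsilon\sim\mathcal{N}(0,b^2 I)$. Since the sub-Gaussian hypothesis on $L(\cdot,(x,y))$ is stated for each fixed parameter vector, it applies equally well to $\hat{\theta}$, so Lemma \ref{pr:geninfo} yields
\[
\mathbb{E}\bigl[L(\hat{\theta},(\bar x,\bar y)) - L(\hat{\theta},S)\bigr] \;\leq\; \sqrt{\frac{2\sigma^{2}}{m}\, I(\hat{\theta};S)}.
\]

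Next, I would decompose the target generalization gap by adding and subtracting $L(\hat{\theta},S)$:
\[
\mathbb{E}\bigl[L(\theta+\epsilon,(\bar x,\bar y)) - L(\theta,S)\bigr]
= \mathbb{E}\bigl[L(\hat\theta,(\bar x,\bar y)) - L(\hat\theta,S)\bigr] + \mathbb{E}\bigl[L(\hat\theta,S) - L(\theta,S)\bigr].
\]
The first term on the right is bounded above by the mutual-information quantity from the previous step. For the second term, the $(b,\delta)$-NV hypothesis gives, for every fixed $S$ and $\theta$, $|\mathbb{E}_{\epsilon}[L(\theta+\epsilon,S)] - L(\theta,S)|\leq \delta$; taking the outer expectation over the joint law of $(S,\theta)$ preserves this bound, so the second term is at most $\delta$. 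Summing the two estimates delivers the claimed inequality.

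The routine parts are the add-and-subtract decomposition and the straightforward invocation of Lemma \ref{pr:geninfo}. The main obstacle I expect is a subtle one about measurability and the order of expectations: I must be careful that the $(b,\delta)$-NV bound, which is stated pointwise in $\theta$ and $S$ with a constant $\delta$, can be lifted to a bound on $\mathbb{E}_{S,\theta}[L(\theta+\epsilon,S)-L(\theta,S)]$ without adding any slack. If instead $\delta$ were allowed to depend on $\theta$ (as happens implicitly in the PAC-Bayes argument of Theorem \ref{pr:pacgen}), I would need to replace the constant $\delta$ on the right-hand side by $\mathbb{E}_{S,\theta}[\delta(\theta)]$; the theorem statement sidesteps this by asserting a uniform $\delta$, so I will simply note that assumption at the point of use. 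A secondary care point is to confirm that the data-processing inequality $I(\hat\theta;S)\leq I(\theta;S)$ is not actually needed here — the bound is stated directly in terms of $I(\theta+\epsilon;S)$, which is exactly what the lemma produces, so no further manipulation is required.
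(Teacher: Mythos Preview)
Your proposal is correct and follows essentially the same route as the paper: apply Lemma~\ref{pr:geninfo} to the perturbed weights $\hat\theta=\theta+\epsilon$, then use the $(b,\delta)$-NV condition to bound $\mathbb{E}[L(\hat\theta,S)-L(\theta,S)]$ by $\delta$ and add the two inequalities. Your side remarks about the uniformity of $\delta$ and the non-use of the data-processing inequality are accurate and go slightly beyond what the paper spells out.
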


\begin{proof}
Given the model $\mathcal{M}(\theta)$, we can easily obtain a new model $\mathcal{M}(\theta+\epsilon)$ close to $\mathcal{M}(\theta)$ by injecting a Gaussian noise $\epsilon \sim \mathcal{N}(0, b^{2}I)$. By Lemma \ref{pr:geninfo}, we have the expected generalization gap of $\mathcal{M}(\theta+\epsilon)$ meets
\begin{align}
\mathbb{E}\left[ L( \theta+\epsilon, (\bar{x}, \bar{y})) - \frac{1}{m} \sum_{i=1}^{m} L(\theta+\epsilon, (x^{(i)}, y^{(i)})) \right]   \leq \sqrt{\frac{2\sigma^{2}}{m} I(\theta+\epsilon; S)}.
\end{align}
Based on the definition of $(b,\delta)$-NV, we have 
\begin{align}
\mathbb{E}\left[ \frac{1}{m} \sum_{i=1}^{m} L(\theta+\epsilon, (x^{(i)}, y^{(i)})) -  \frac{1}{m} \sum_{i=1}^{m} L(\theta, (x^{(i)}, y^{(i)})) \right] \leq \delta.
\end{align}
Thus, we obtain 
\begin{align}
\mathbb{E}\left[ L( \theta + \epsilon, (\bar{x}, \bar{y})) - \frac{1}{m} \sum_{i=1}^{m} L(\theta, (x^{(i)}, y^{(i)})) \right]   \leq \sqrt{\frac{2\sigma^{2}}{m} I(\theta + \epsilon; S)} + \delta.
\end{align}

\end{proof}
 
Obviously, the bound monotonically decreases with the variability scale $b$ given $\delta$. The bound is tighter than the bound in Lemma \ref{pr:geninfo} when the model has good ANV, which means $b$ is large given a small $\delta$. A large $b$ can even penalize the mutual information to nearly zero. Therefore, strong ANV brings a tighter generalization bound.

\section{Implementation Details}
\label{app:exp}

We introduce the details of each experiment in this section. 
In Experiment 1, we evaluated NVRM's robustness to weight perturbation. 
In Experiment 2, we evaluated the generalizability of NVRM. 
In Experiment 3, we evaluated NVRM's robustness to noisy labels. 
In Experiment 4, we evaluated NVRM's robustness to catastrophic forgetting. 
In Experiments 5, we studied the usefulness of the de-noising step.
In Experiments 6, we studied choices of noise types.

Our experiment is conducted on a computing cluster with GPUs of NVIDIA\textsuperscript{\textregistered} Tesla\textsuperscript{\texttrademark} V100 16GB and CPUs of Intel\textsuperscript{\textregistered} Xeon\textsuperscript{\textregistered} Gold 6140 CPU @ 2.30GHz.

\subsection{Robustness, Generalization, and Label Noise}

\textbf{General Settings.} Experiments are conducted based on three popular deep learning networks, VGG-16 \citep{Simonyan15} , MobileNetV2 \citep{sandler2018mobilenetv2} and ResNet-34 \citep{he2016deep}. The detailed architectures are presented in Table \ref{table:arch}. Similarly, all datasets involved in our experiments are generated based on two standard benchmark datasets, CIFAR-10 and CIFAR-100 \citep{krizhevsky2009learning}.\footnote{Please download them from \href{https://www.cs.toronto.edu/kriz/cifar.html}{https://www.cs.toronto.edu/kriz/cifar.html}.} We follow the official version to split training sets and test sets in our experiments. For pre-processing and data augmentation, we performed per-pixel mean subtraction, horizontal random flip and $32 \times 32$ random crops after padding with $4$ pixels on each side. The batch size is set as $128$, and the weight decay factor is set as $0.0001$. We selected the optimal learning rate from $\{0.0001,0.001,0.01,0.1,1,10\}$ and used $0.1$ for SGD/NVRM-SGD. Note that we used the common $L_{2}$ regularization as weight decay which is widely used in most cases, while \citet{loshchilov2018decoupled,xie2020stable} suggested that decoupled weight decay or stable weight decay is better in adaptive gradient methods. We employ SGD and NVRM-SGD to train models, unless we specify it otherwise. For the learning rate schedule, we initialized the learning rate as $0.1$ and divided it by $10$ after every $100$ epochs. All models are trained for $300$ epochs. The momentum factor is set as $0$ for VGG-16 and MobileNetV2 in Experiment 1, and $0.9$ for ResNet-34 in Experiment 2-3.

\textbf{Robustness to weight perturbation.} For Experiment 1, we injected isotropic Gaussian noise of different variances to all the model weights and then evaluate the changes of the test accuracy. Six different noise scales $\{0.01, 0.012, 0.014, 0.016, 0.018, 0.02\}$ are involved in our experiments.

\textbf{Learning with noisy labels.} For Experiment 3, we also generate a group of datasets with label noise. The symmetric label noise is generated by flipping every label to other labels with uniform flip rates $\{ 20\%, 40\%, 60\%, 80\% \}$. The asymmetric label noise by flipping label $i$ to label $i+1$ (except that label 9 is flipped to label 0) with pair-wise flip rates $ \{ 10\%, 20\%, 30\%, 40\% \}$. We employed the code of \citet{han2018co} for generating noisy labels for CIFAR-10 and CIFAR-100.

\textbf{The usefulness of the de-noising step and choices of noise types.} The hyperparameter settings of Experiments 5 and 6 follows Experiment 3, which are performed on learning with noisy labels. In Experiment 6, we let the weight noise $\epsilon$, respectively, obeys $ \mathcal{N}(0, b^{2})$, $ Laplace(0, b)$, $Uniform(-b, b)$ for NVRM Gaussian, NVRM Laplace, and NVRM Uniform. 

\subsection{Catastrophic Forgetting}

\textbf{Permuted MNIST.} For Experiment 4, we used Fully-Connected Network (FCN), which has two hidden layers and 1024 ReLUs per hidden layer. As continual learning tasks usually employ adaptive optimizers, we compared Adam with NVRM-Adam on the popular benchmark task, Permuted MNIST. In Permuted MNIST, we have five continual tasks. For each task, we generated a fixed, random permutation by which the input pixels of all images would be shuffled. Each task was thus of equal difficulty to the original MNIST problem, though a different solution would be required for each. We evaluated the accuracy of the base task (the first task) and the mean accuracy of all learned tasks after each task. 

In the experiment of EWC, we try to validate if NVRM can improve EWC. We validated the performance improvements under two different importance hyperparameters $\lambda \in \{30, 1000\}$. In Experiment 4, the batch size is set as $256$, and the weight decay factor is set as $0.0001$. As continual learning methods usually prefer adaptive optimizers, we employed Adam and NVRM-Adam as the optimizers. For the learning rate schedule, we fixed the learning rate as $0.001$ and applied no learning rate decay. We set the variability scale $b=0.03$ in NVRM-Adam. All models are trained for one epoch per task, as one-epoch training has ensured good test performance on newly learned tasks.

\textbf{Split MNIST.} For Experiment 4, we also supplied the experiment on split MNIST, which is another classical continual learning task. It is called incremental class learning. We train the models on the samples with a specific subset of labels for five continual tasks. We followed the usual setting \citep{zenke2017continual}: $y \in \{0,1\}$, $y \in \{2,3\}$ , $y \in \{4,5\}$ , $y \in \{6,7\}$ , and $y \in \{8,9\}$ for five tasks, respectively. In each task, the model may only learn two new digits and may forget previously learned digits. 

The model is the same as the model architecture for Permuted MNIST, except that we used the five-header output layers for five tasks, respectively. When we trained the models for one task, the headers for other tasks are frozen. The batch size is set as $256$, and the weight decay factor is set as $0$. Again, we employed Adam and NVRM-Adam as the optimizers, and new optimizers are used for each continual task. For the learning rate schedule, we fixed the learning rate as $0.001$ and applied no learning rate decay. We also let the variability scale $b=0.03$ in NVRM-Adam, unless we otherwise specify it.

\begin{table*}[t]
\caption{The detailed architectures of models used in the experiments. ``conv $x$ - $c$" represents a convolution layer with kernel size $x\times x$ and $c$ output channels, and ``fc - $c$" represents a fully-connected layer with $c$ output channels. In the architecture of MobileNetV2, $[\cdot]$ represents a ``bottleneck", and $(\cdot)$ is simply a combination of three convolution layers but can halve both the width and height of the input of the block. The $k$ in $[\cdot]$ or $(\cdot)$ denotes the number of channels of the input of the corresponding block. In the architecture of ResNet-34, $[\cdot]$ represents a ``basic block".
}
\label{table:arch}
\begin{center}
\small
\begin{sc}
\resizebox{0.4\textwidth}{!}{%
\begin{tabular}{ccc}
\toprule
VGG-16 & MobileNetV2 & ResNet-34 \\
\hline
conv3-64 $\times$ 2 & fc-32 & conv3-64 \\

maxpool & $\left[ \begin{matrix} \text{conv1-$k$} \\ \text{conv3-$k$} \\ \text{conv1-16} \end{matrix} \right]$ $\times$ 1 & $\left[ \begin{matrix} \text{conv3-64} \\ \text{conv3-64} \end{matrix} \right]$ $\times$ 3 \\

conv3-128 $\times$ 2 & $\left[\begin{matrix} \text{conv1-$6k$} \\ \text{conv3-$6k$} \\ \text{conv1-24} \end{matrix} \right]$ $\times$ 2 & $\left[\begin{matrix} \text{conv3-128} \\ \text{conv3-128} \end{matrix} \right]$ $\times$ 4 \\

maxpool & $\left(\begin{matrix} \text{conv1-$6k$} \\ \text{conv3-$6k$} \\ \text{conv1-32} \end{matrix} \right)$ $\times$ 3 & $\left[\begin{matrix} \text{conv3-256} \\ \text{conv3-256} \end{matrix} \right]$ $\times$ 6 \\

conv3-256 $\times 3$ & $\left(\begin{matrix} \text{conv1-$6k$} \\ \text{conv3-$6k$} \\ \text{conv1-64} \end{matrix} \right)$ $\times$ 4 & $\left[\begin{matrix} \text{conv3-512} \\ \text{conv3-512} \end{matrix} \right]$ $\times$ 3 \\

maxpool & $\left[\begin{matrix} \text{conv1-$6k$} \\ \text{conv3-$6k$} \\ \text{conv1-96} \end{matrix} \right]$ $\times$ 3 & avgpool \\

conv3-512 $\times$ 3 & $\left( \begin{matrix} \text{conv1-$6k$} \\ \text{conv3-$6k$} \\ \text{conv1-160} \end{matrix} \right)$ $\times$ 3 \\

maxpool & $\left[\begin{matrix} \text{conv1-$6k$} \\ \text{conv3-$6k$} \\ \text{conv1-320} \end{matrix} \right]$ $\times$ 1 & \\

conv3-512 $\times$ 3 & fc-1280 \\

maxpool & & \\

fc-512 $\times$ 2 & & \\

\hline

\multicolumn{3}{c}{fc-10 or fc-100} \\
\bottomrule
\end{tabular} 
}
\end{sc}
\end{center}
\end{table*}

\section{Additional Algorithms}
\label{app:algo}

\begin{algorithm}[H]
 \label{algo:psgd}
 \caption{Perturbed SGD}
  \KwIn{Training data $S$, the noise scale hyperparameter $b$, the number of iterations $T$, learning rate $\eta$, initialized weights $\theta_{0}$}
  \KwOut{The model weights $\theta$}
  \Repeat{stopping criterion is not met}{
    \For{$(x,y) \in S$}
    {
      $ \epsilon_{t} \sim \mathcal{N}(0,b^{2}I)$\;
      $g_{t} = \frac{\partial L(\theta_{t-1} , (x,y))}{\partial \theta}$\;
      $\theta_{t} = \theta_{t-1} - \eta (g_{t} + \epsilon_{t})   $\;
    }
  }
\end{algorithm}

\section{Supplementary Experimental Results}
\label{app:results}

\paragraph{Robustness to weight perturbation.} Figure \ref{fig:sppwn}. The empirical results demonstrate that NVRM can also make VGG-16 and MobileNetV2 more robust to weight perturbations. We also report that, obviously, the architecture of ResNet is much more optimal to achieve strong neural variability than VGG and MobileNet. We leave the network architecture study as a future work.

\paragraph{Improved generalization.} Figure \ref{fig:vgg_mobilenet}. 

\paragraph{Robustness to noisy labels.} Figure \ref{fig:resnet_sym}. SGD almost memorizes all corrupted labels. The results demonstrate that NVRM can also significantly improve the robustness to symmetric label noise. 

\paragraph{Robustness to catastrophic forgetting.} Figures \ref{fig:permuted} and \ref{fig:split}. NVRM also enhances robustness to catastrophic forgetting in the setting of both incremental class learning. The variability hyperparameter $b$ of NVRM is defaulted to be $0.03$. The NVRM curves consistently outperform the counterpart curves.

\paragraph{SGD with large gradient noise cannot relieve noise memorization.} It is well-known that increasing the ratio of the learning rate and the batch size $\frac{\eta}{B}$ may enhance the scale of gradient noise in SGD and help find flatter minima \citep{jastrzkebski2017three,he2019control}. \citet{xie2021positive} reported that amplifying stochastic gradient noise may mitigate label noise. However, our theoretical analysis suggests that, as stochastic gradient noise carries the information about training data, there is no theoretical guarantee that large stochastic gradient noise can work as well as NVRM. We empirically studied SGD with large stochastic gradient noise in Figure \ref{fig:labelnoiselr}. It demonstrate that, SGD with various learning rate finally still memorizes noisy labels, while NVRM-SGD with various learning rates can consistently relieve overfitting noisy labels. Noe that, in Figure \ref{fig:labelnoiselr}, we initialized the learning rates as $\{0.1, 0.3, 1, 3\}$, respectively, and divided the learning rate by $10$ after every $60$ epochs.

\begin{figure}[t!]
  \begin{subfigure}{0.99\linewidth}
    \centering
    \includegraphics[width=0.24\linewidth]{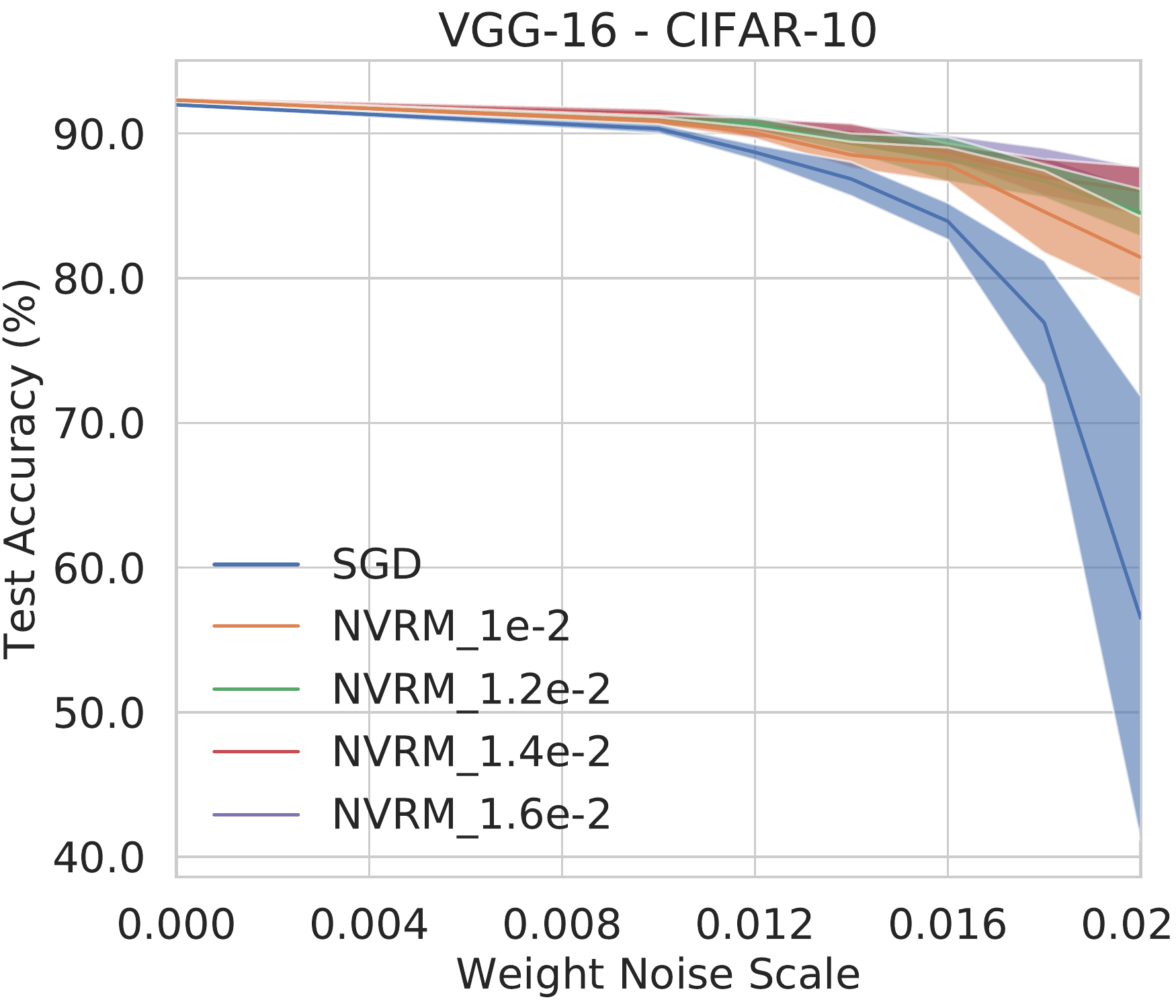}
    \includegraphics[width=0.24\linewidth]{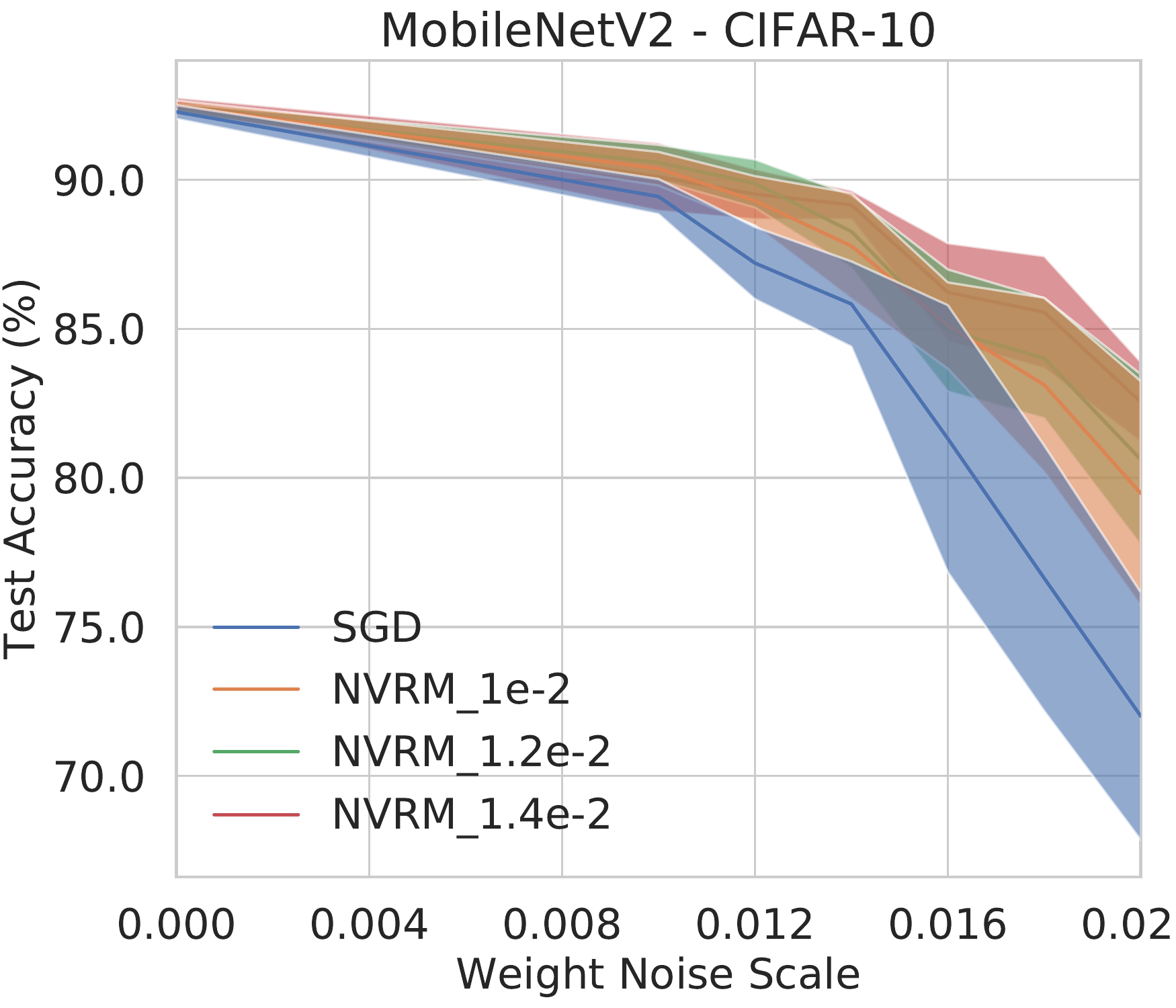}
    \caption{Test accuracy to weight noise scale on CIFAR-10.}
    \label{fig:wn_C10}
  \end{subfigure} 
  
  \begin{subfigure}{0.99\linewidth}
   \centering
    \includegraphics[width=0.24\linewidth]{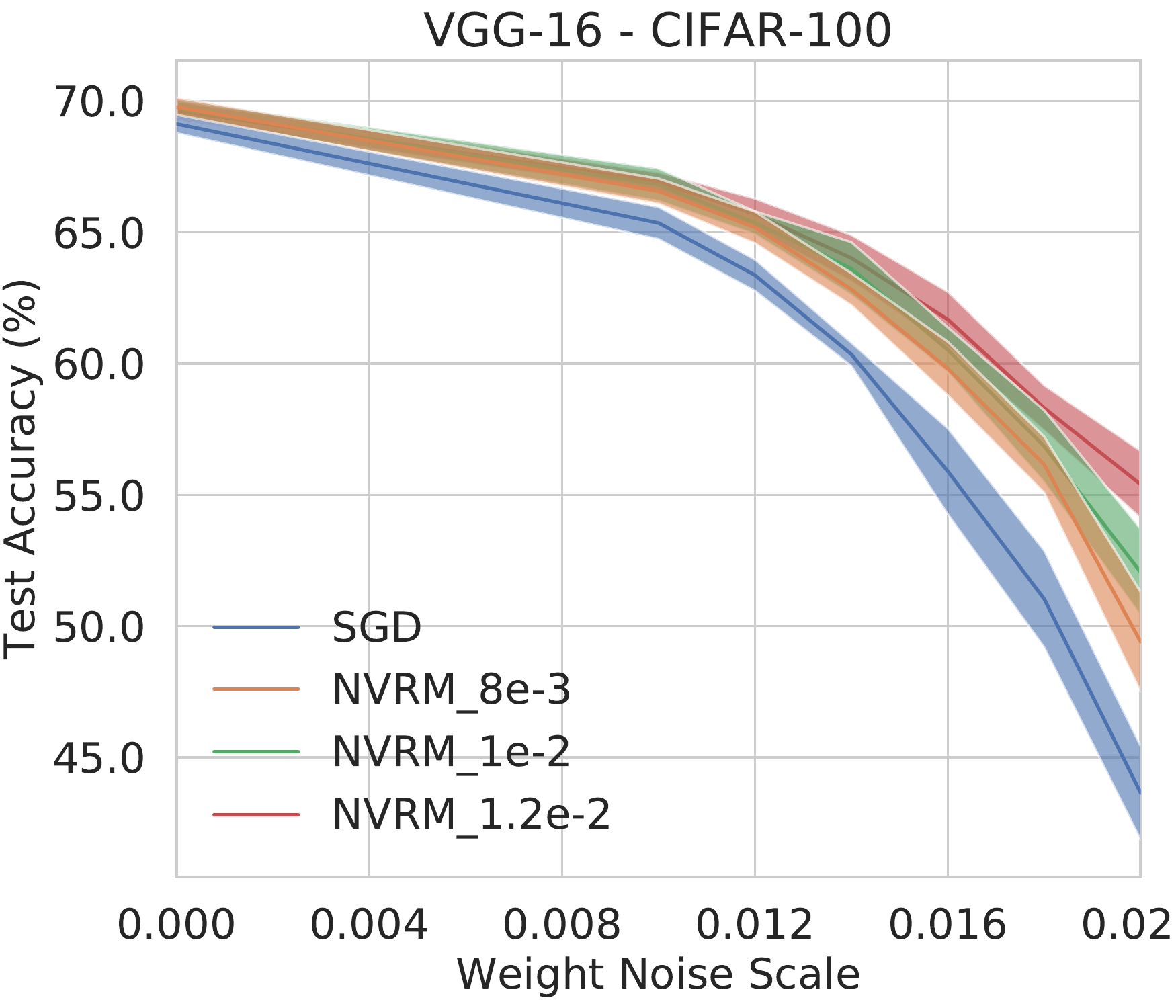}
    \includegraphics[width=0.24\linewidth]{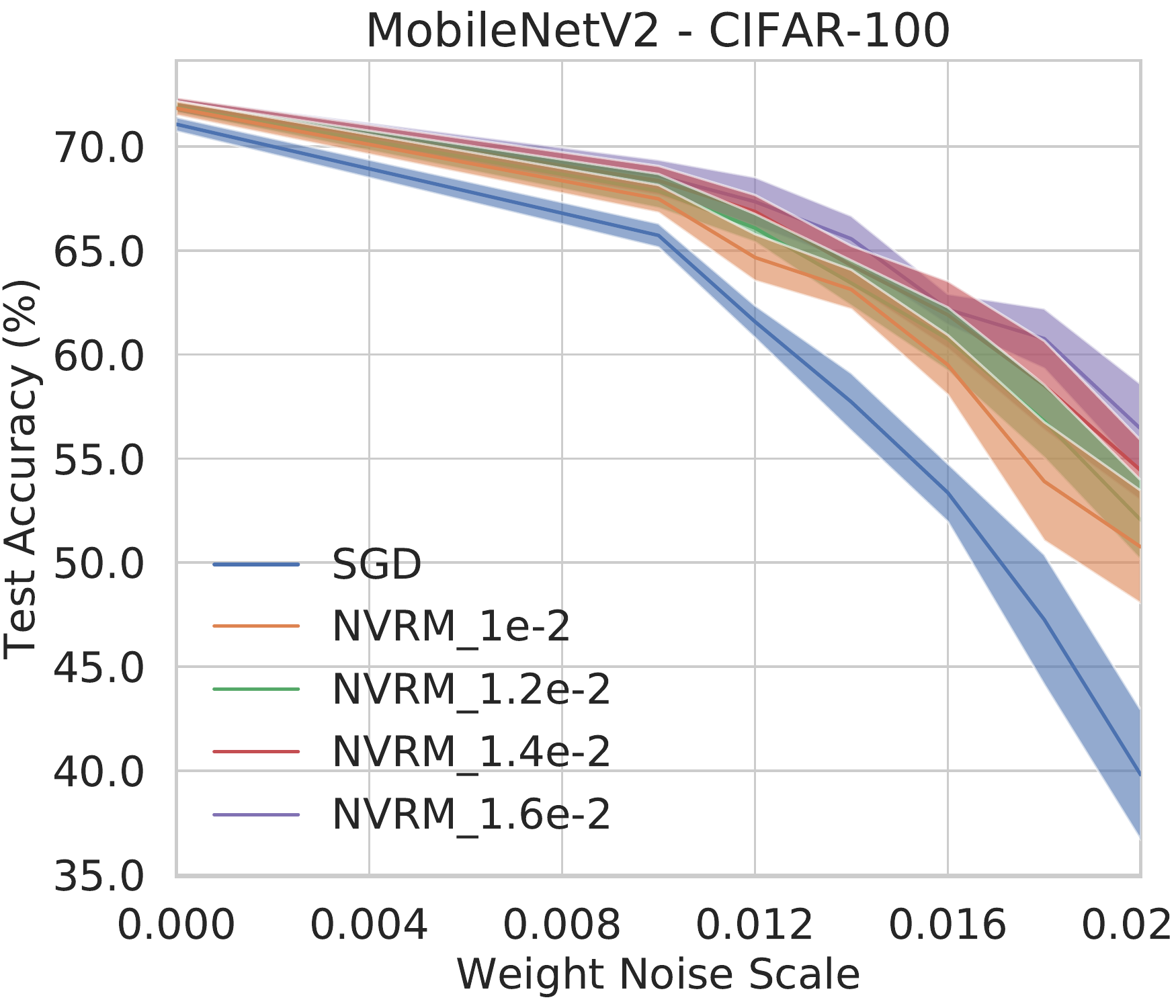}
    \caption{Test accuracy to weight noise scale on CIFAR-100.}
    \label{fig:wn_C100}
  \end{subfigure}
  
  \caption{Curves of test accuracy to weight noise scale. Top Row: CIFAR-10; Bottom Row: CIFAR-100. The three columns are VGG-16 and MobileNetV2 respectively.}
  \label{fig:sppwn}
\end{figure}

\begin{figure}[h]
  \begin{subfigure}{0.97\linewidth}
    \includegraphics[width=0.2425\linewidth]{Pictures/vgg16_bn_cifar10_0_st2_gap.pdf}
    \includegraphics[width=0.2425\linewidth]{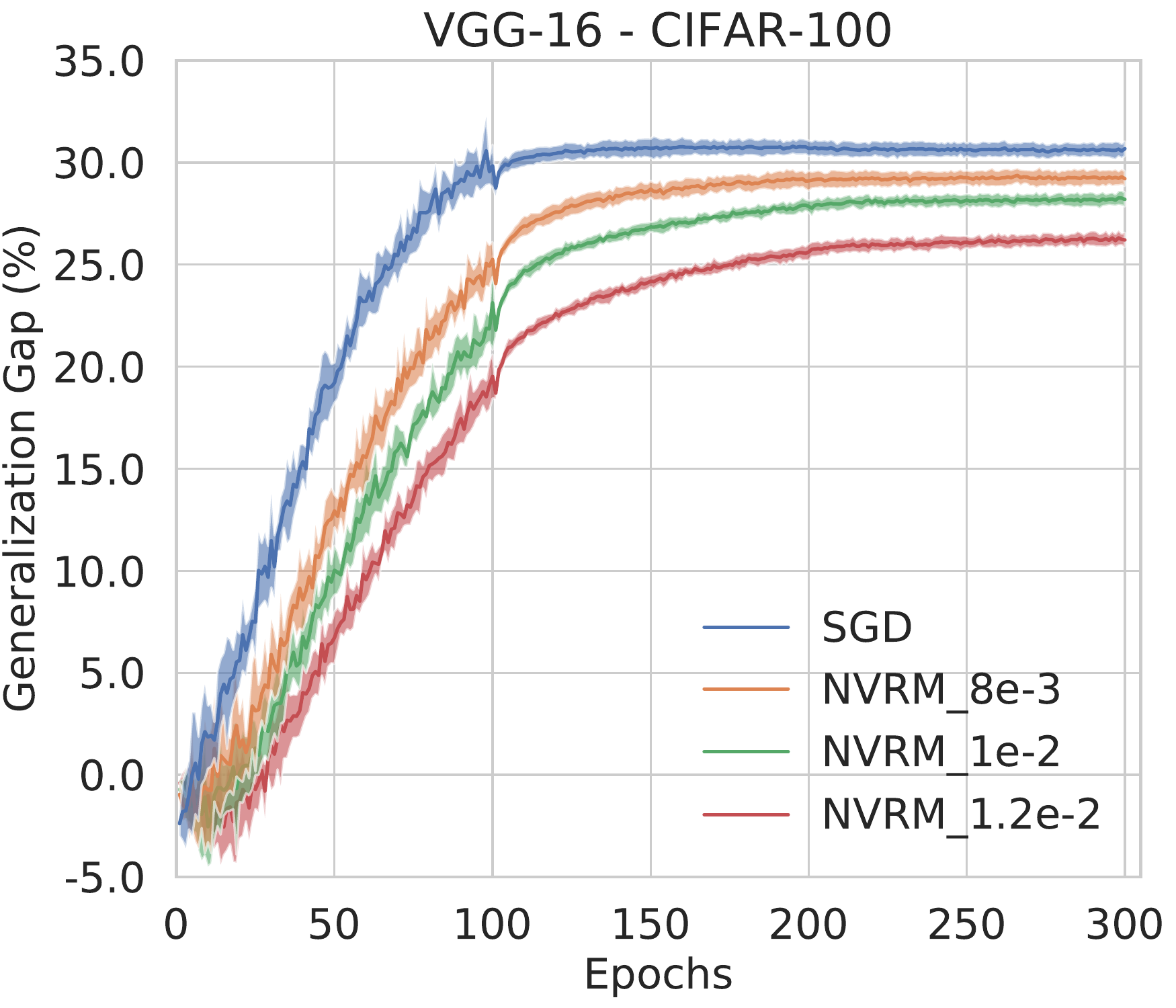}
    \includegraphics[width=0.2425\linewidth]{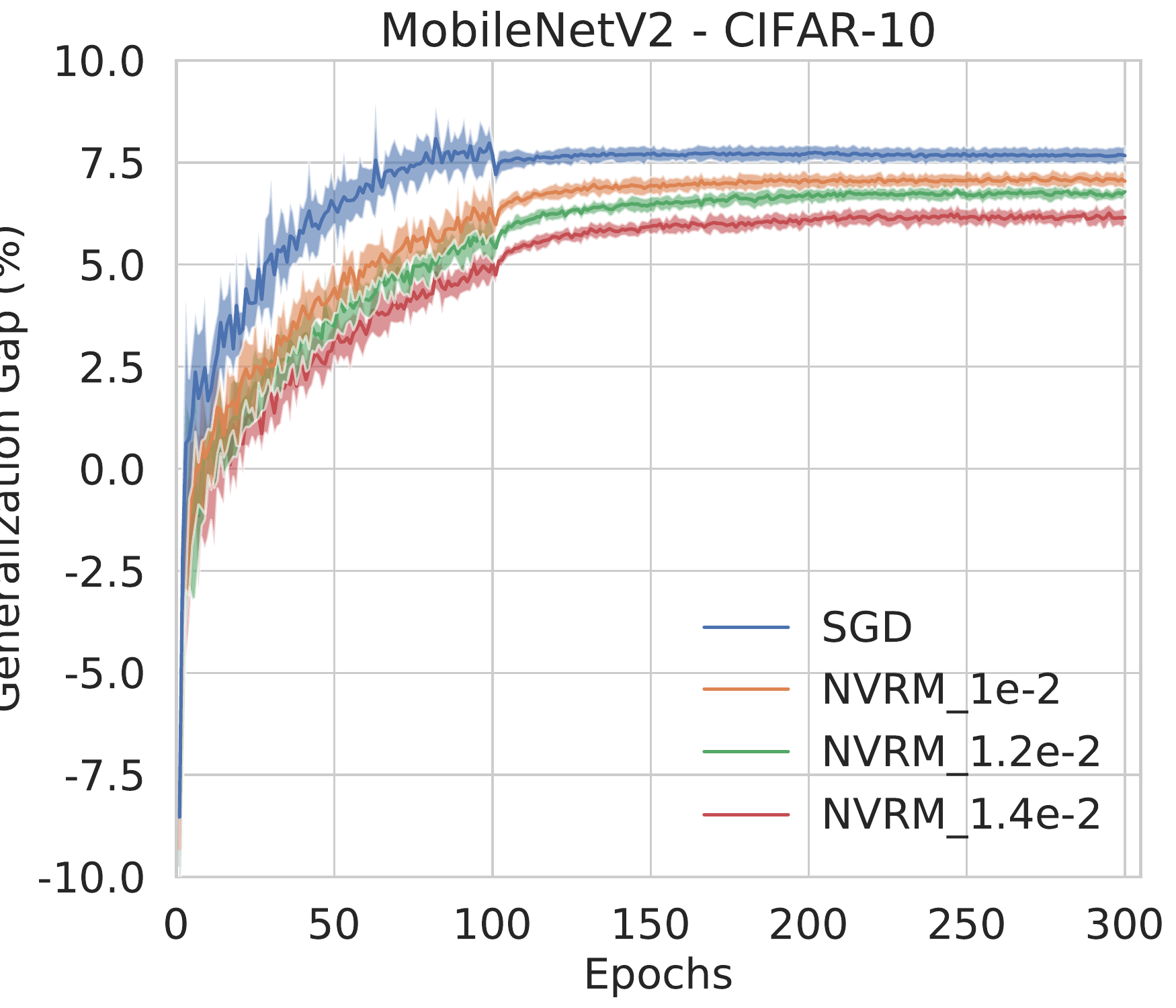}
    \includegraphics[width=0.2425\linewidth]{Pictures/mobilenetv2_cifar100_0_st2_gap.pdf}
    \caption{Generalization gap to epochs.}
    \label{fig:gap}
  \end{subfigure} \\[0.5em]

  \begin{subfigure}{0.97\linewidth}
    \includegraphics[width=0.2425\linewidth]{Pictures/vgg16_bn_cifar10_0_st2_test.pdf}
    \includegraphics[width=0.2425\linewidth]{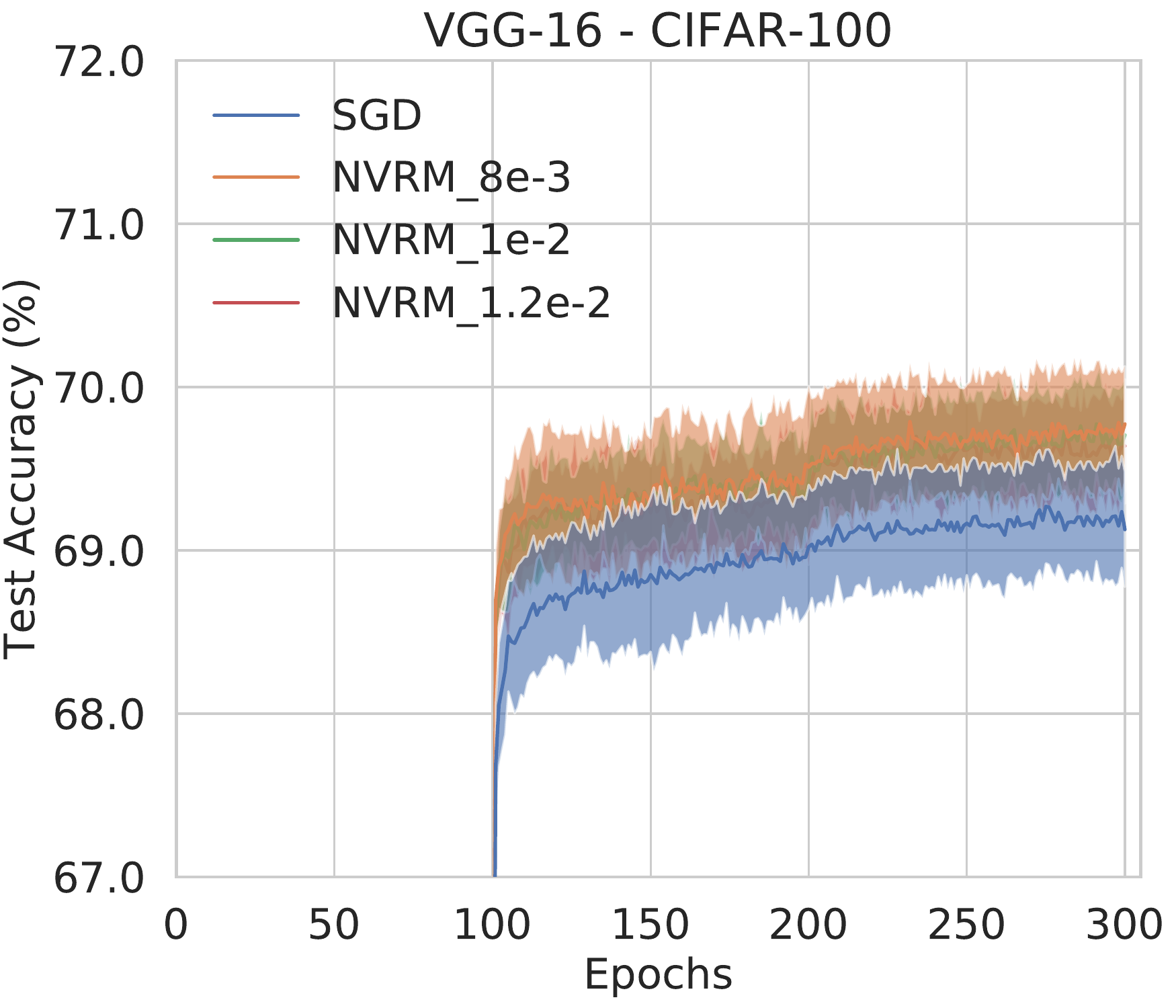}
    \includegraphics[width=0.2425\linewidth]{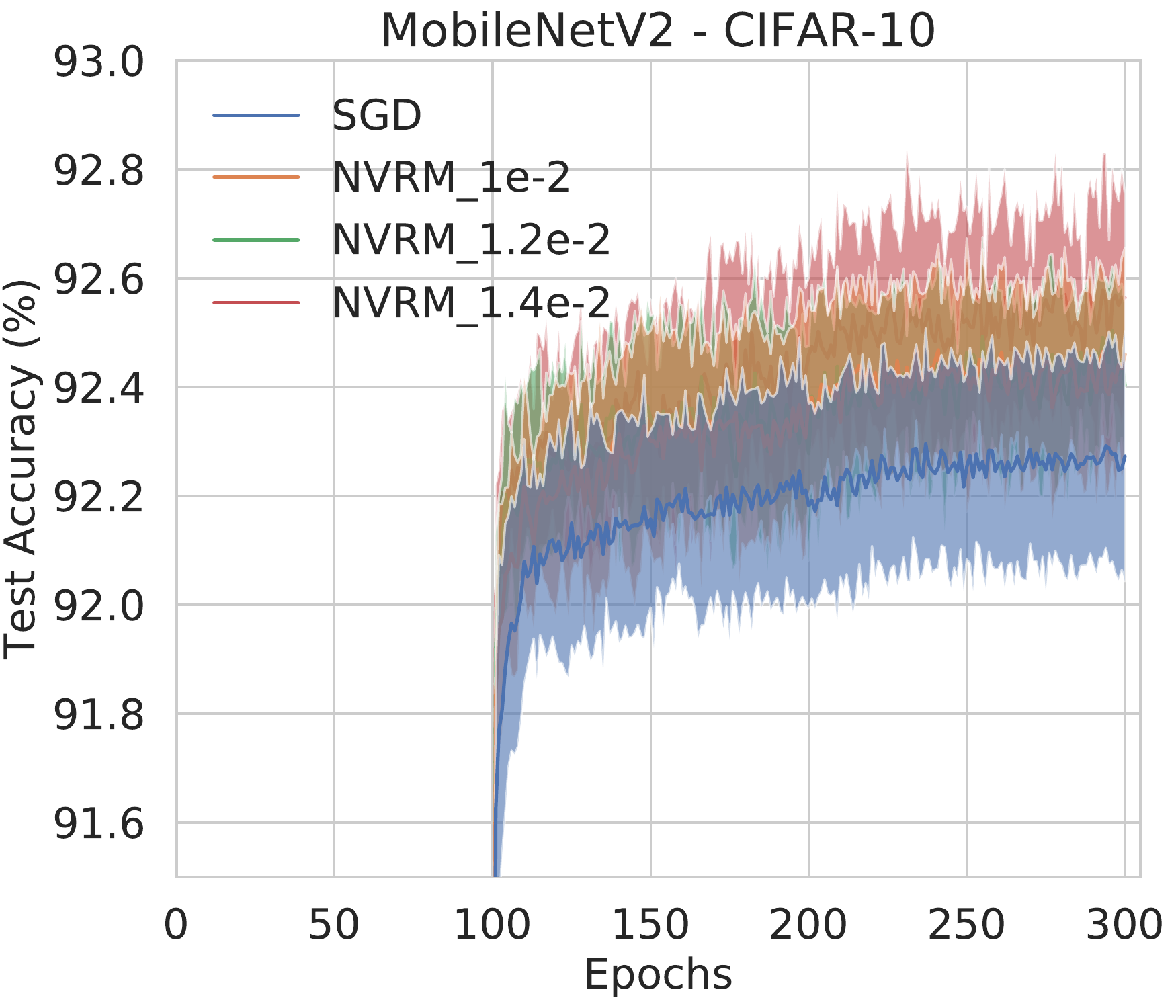}
    \includegraphics[width=0.2425\linewidth]{Pictures/mobilenetv2_cifar100_0_st2_test.pdf}
    \caption{Test accuracy to epochs.}
    \label{fig:test_acc}
  \end{subfigure}
  
  \caption{Curves of generalization gap and test accuracy to epochs. NVRM with various variability scales $b$ can consistently improve generalization. The four columns are, respectively, for (1) VGG-16 on CIFAR-10, (2) VGG-16 on CIFAR-100, (3) MobileNetV2 on CIFAR-10, (4) MobileNetV2 on CIFAR-100.}
  \label{fig:vgg_mobilenet}
\end{figure}

\begin{figure}[t]
  \begin{subfigure}{0.97\linewidth}
    \includegraphics[width=0.2425\linewidth]{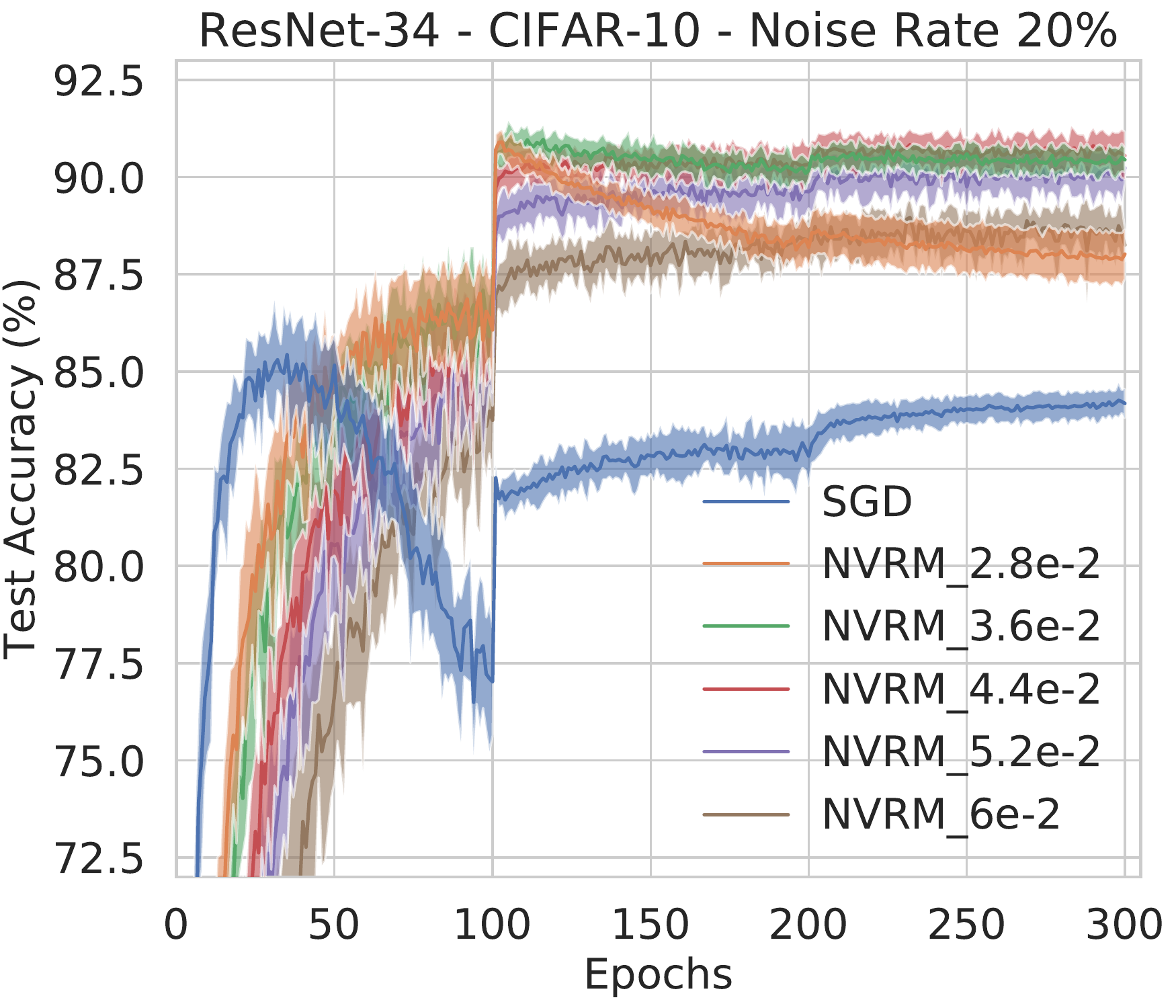}
    \includegraphics[width=0.2425\linewidth]{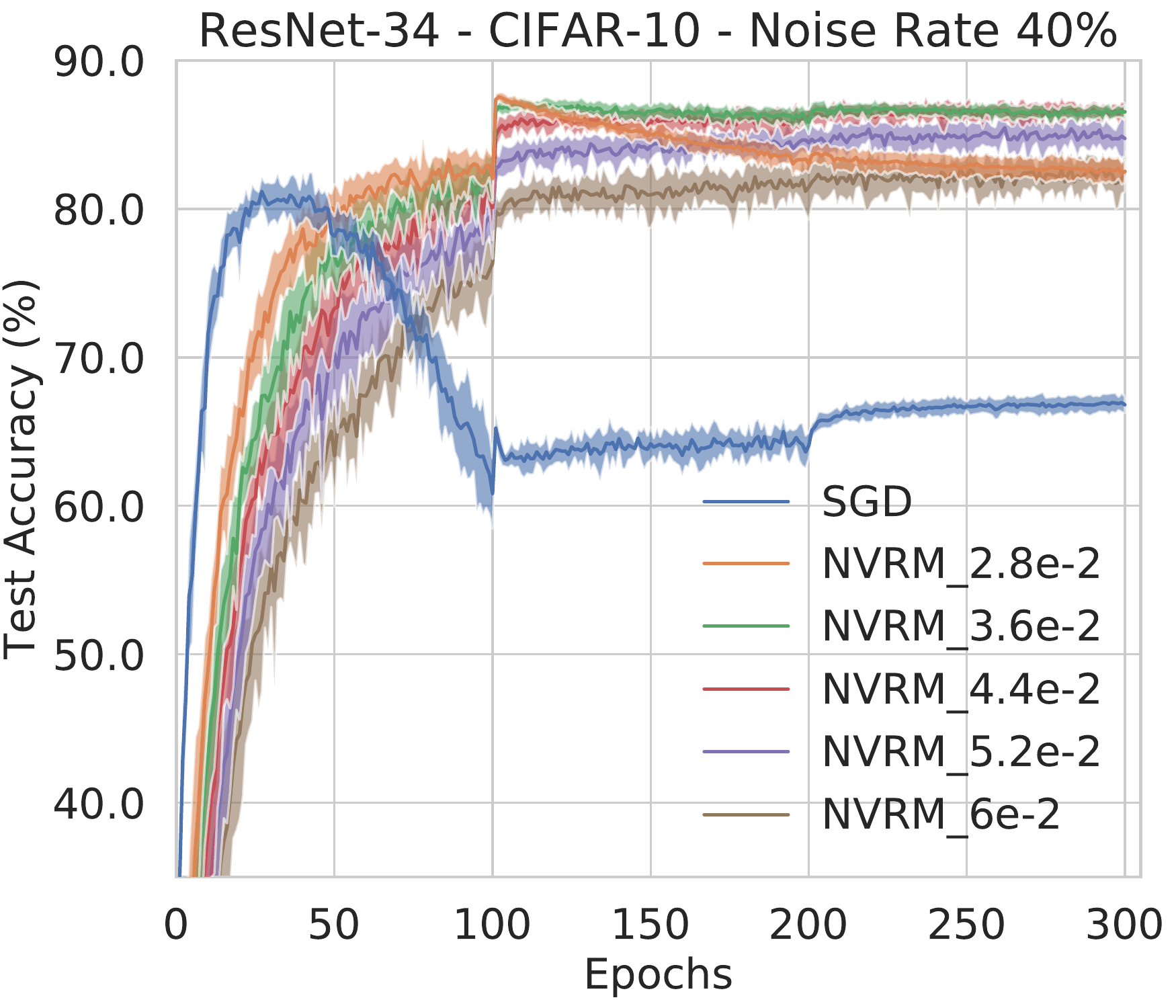}
    \includegraphics[width=0.2425\linewidth]{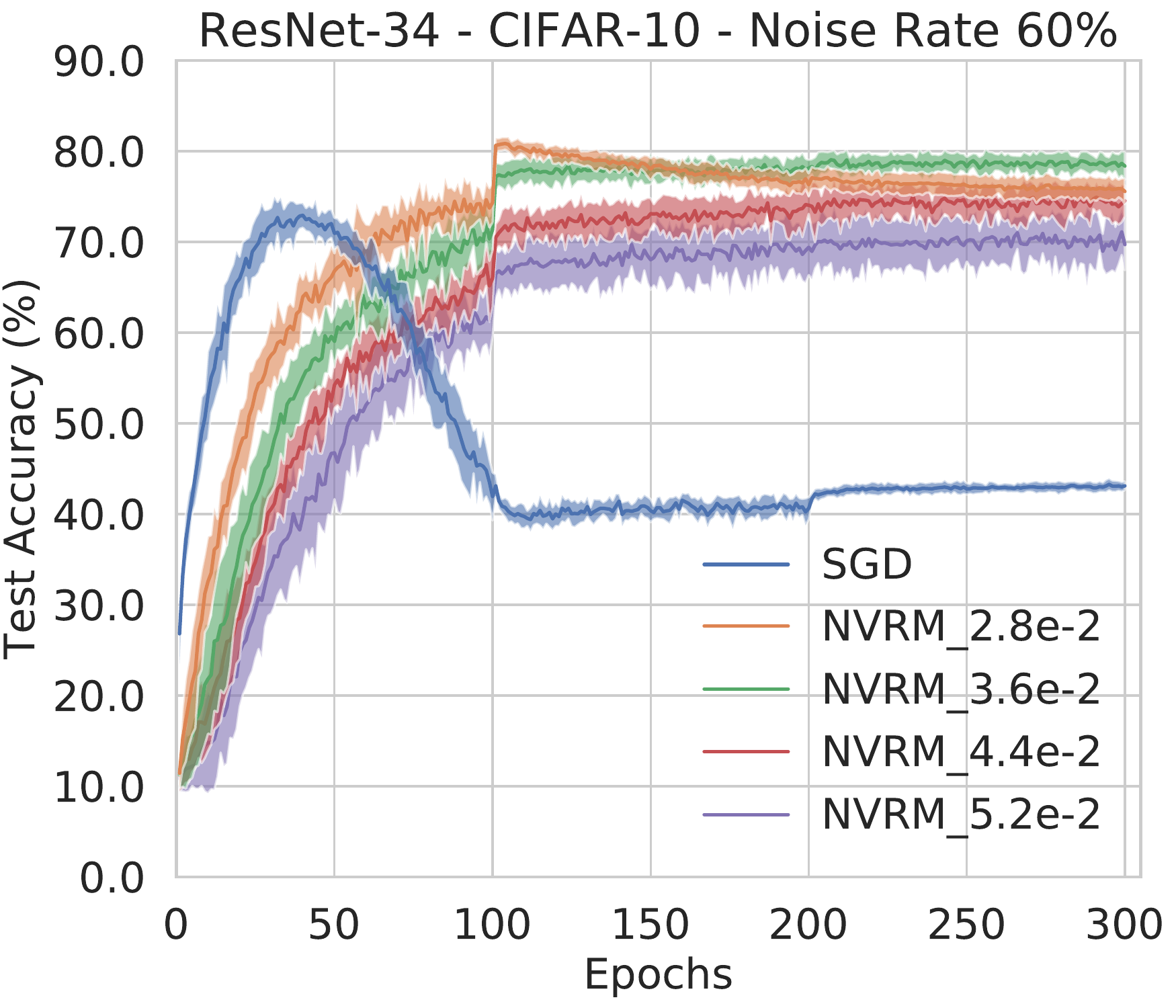}
    \includegraphics[width=0.2425\linewidth]{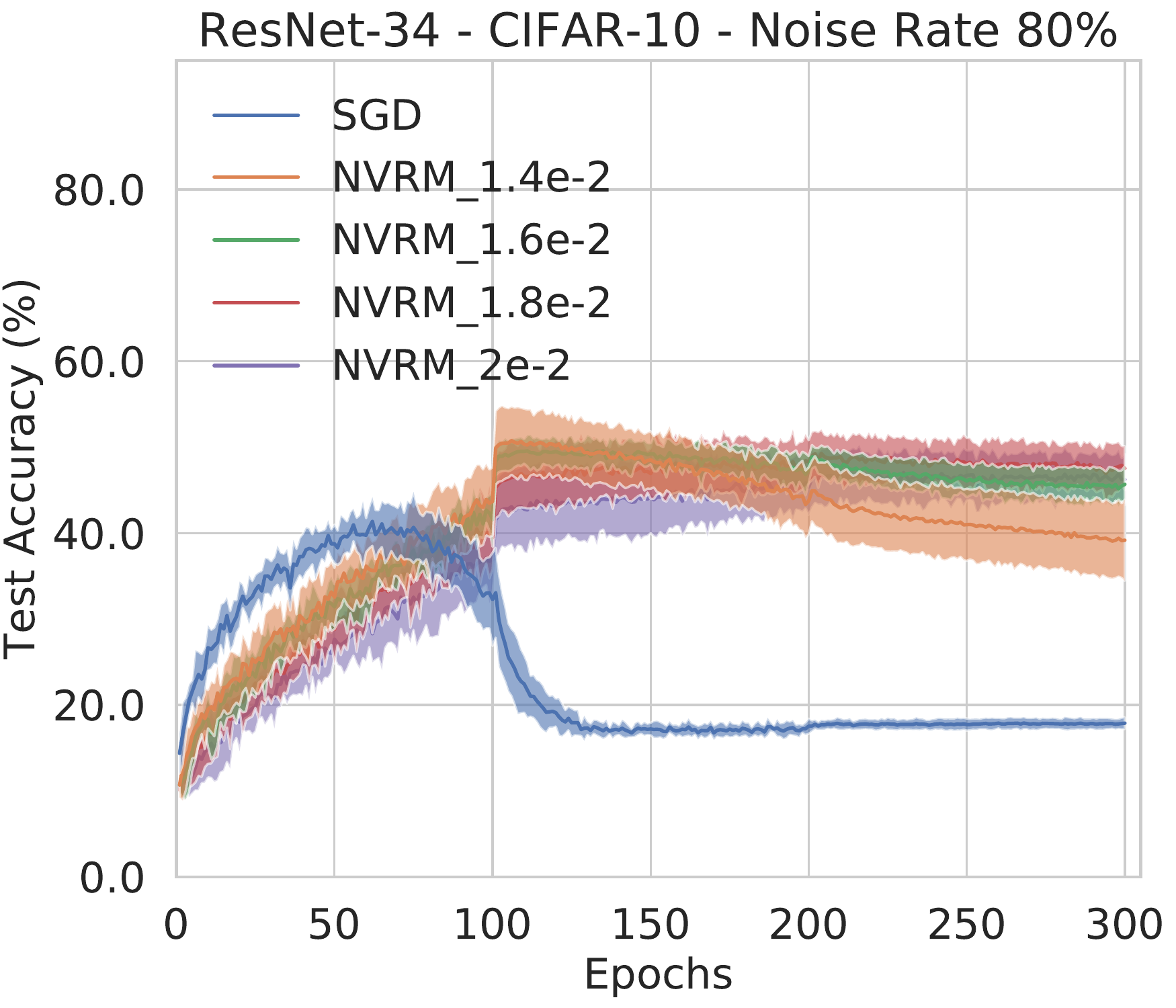}
    \caption{Test accuracy to epochs of ResNet-34 on CIFAR-10.}
    \label{fig:C10_noise}
  \end{subfigure} \\[0.5em]

  \begin{subfigure}{0.97\linewidth}
    \includegraphics[width=0.2425\linewidth]{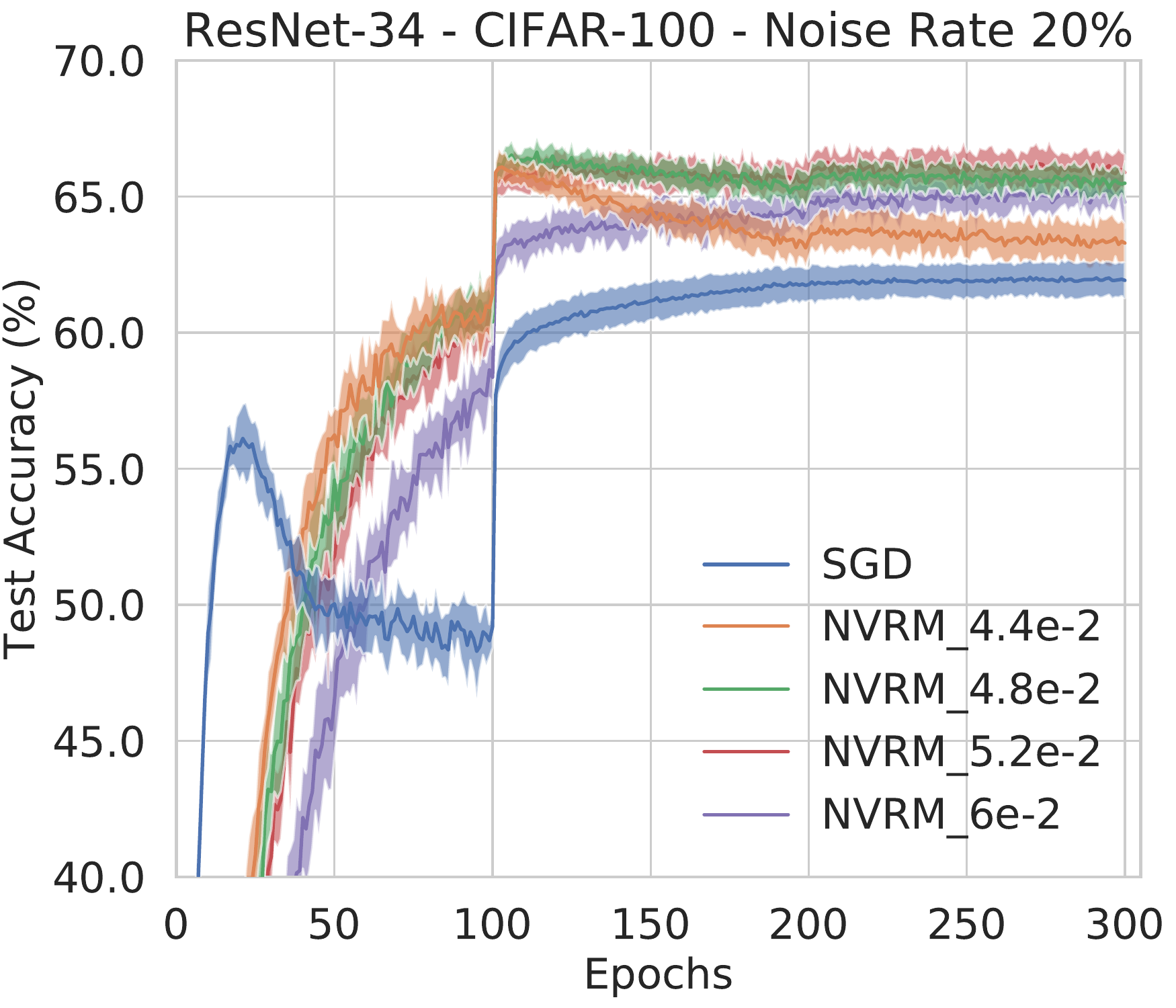}
    \includegraphics[width=0.2425\linewidth]{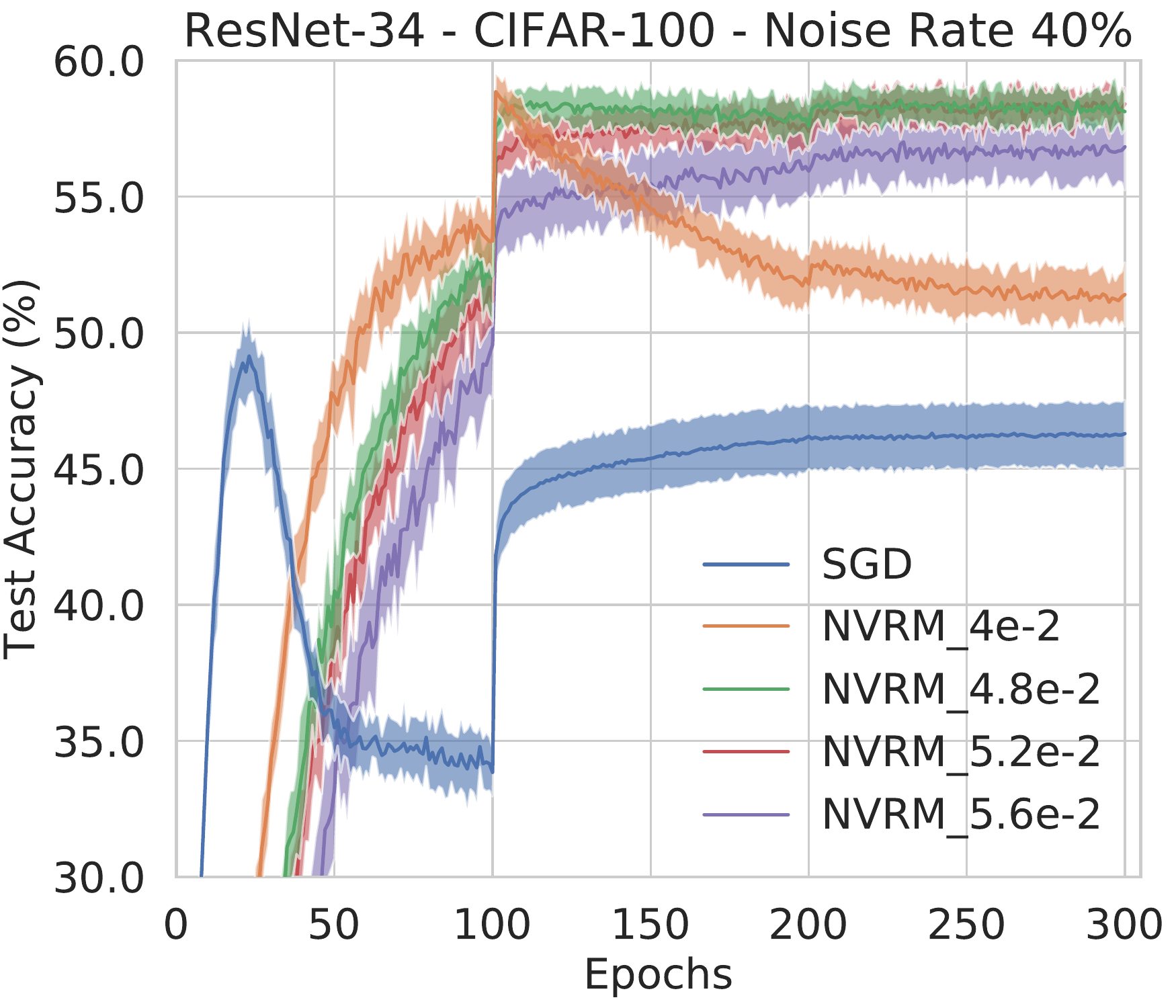}
    \includegraphics[width=0.2425\linewidth]{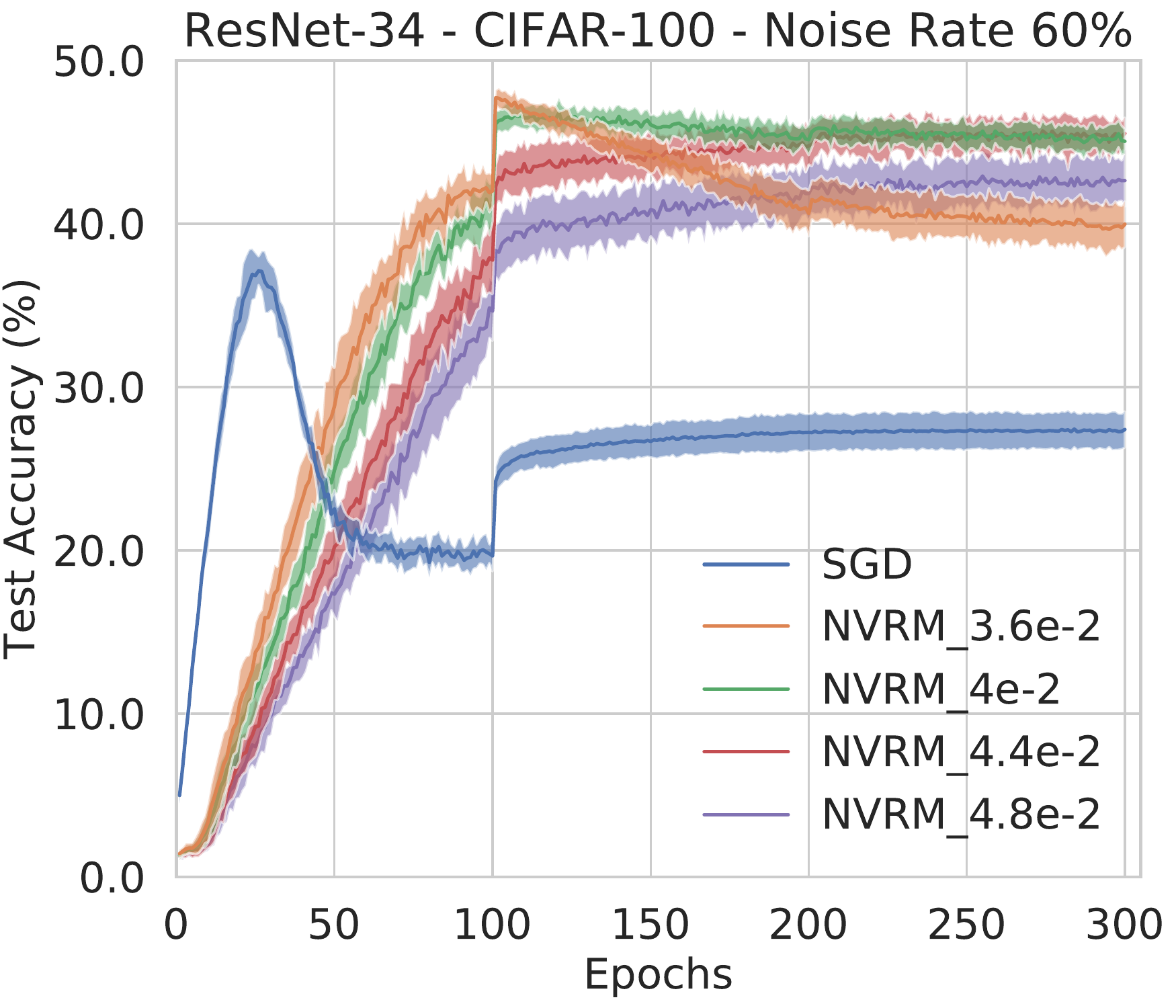}
    \includegraphics[width=0.2425\linewidth]{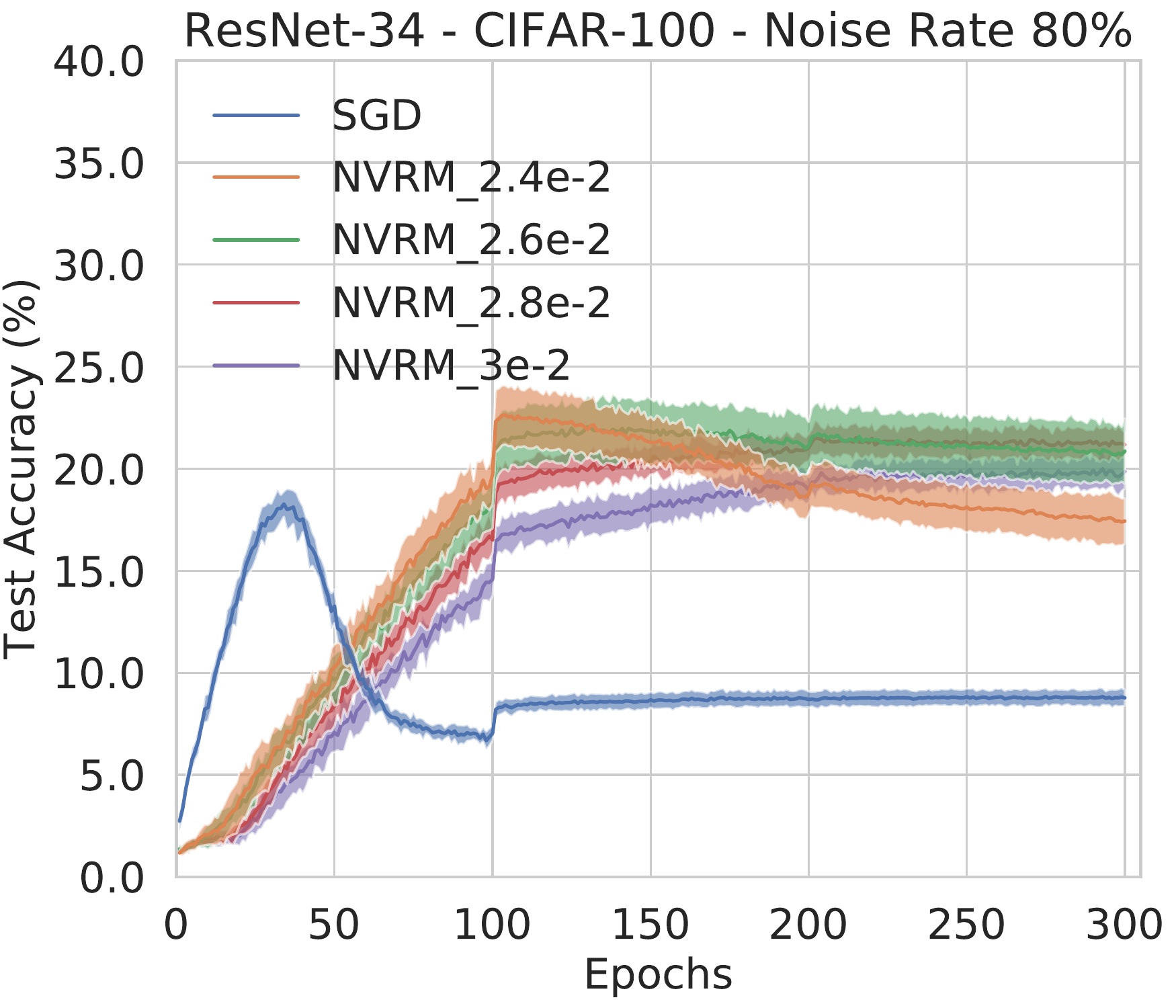}
    \caption{Test accuracy to epochs of ResNet-34 on CIFAR-100.}
    \label{fig:C100_noise}
  \end{subfigure}
  
  \caption{Curves of test accuracy to epochs of ResNet-34. The first row is on CIFAR-10, and the second is on CIFAR-100. The four columns are, respectively, for label noise rate $20\%$, $40\%$, $60\%$ and $80\%$. NVRM with various variability scales $b$ can consistently relieve memorizing noisy labels.}
  \label{fig:resnet_sym}
\end{figure}

\begin{figure}[t!]
  \begin{subfigure}{0.99\linewidth}
  \centering
    \includegraphics[width=0.3\linewidth]{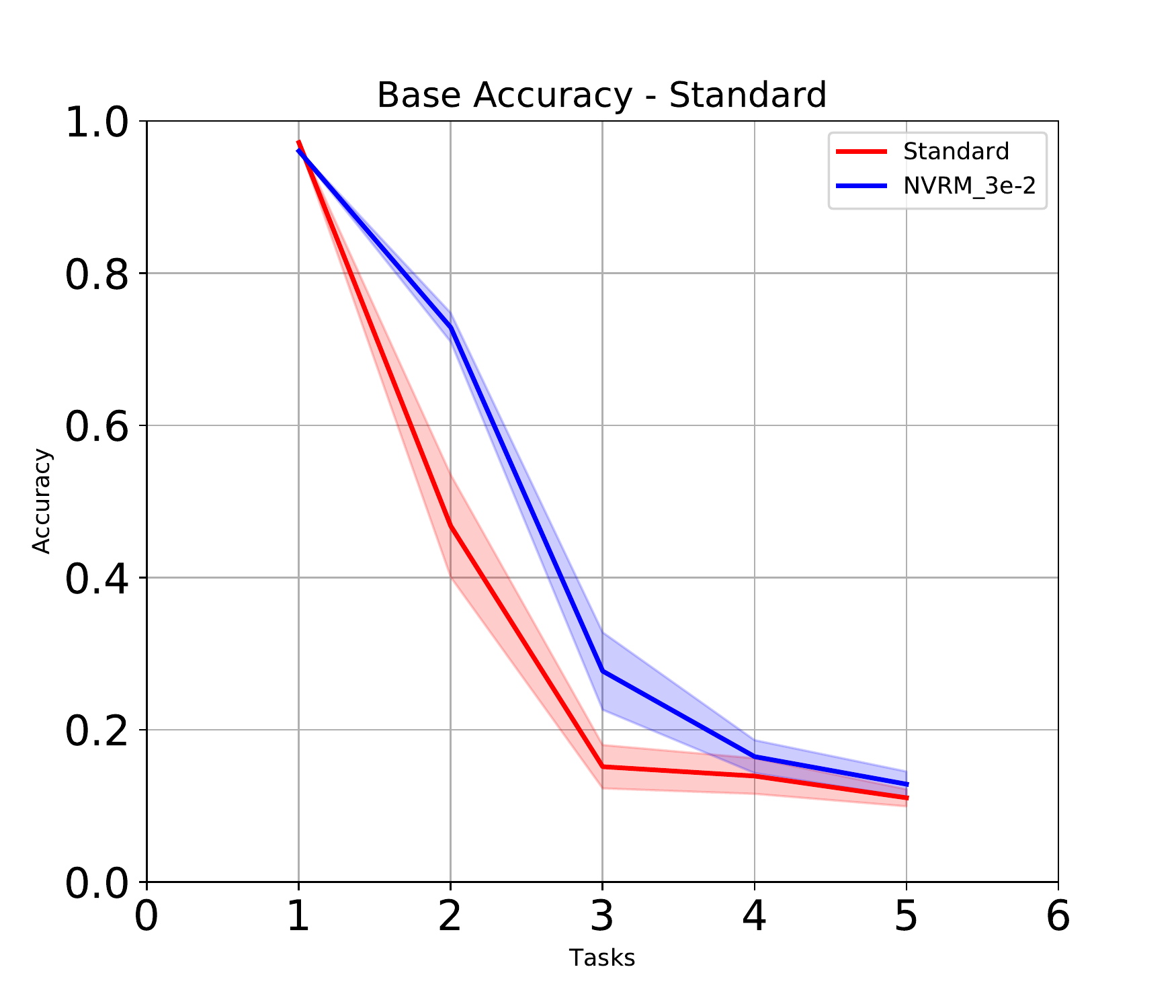}
    \includegraphics[width=0.3\linewidth]{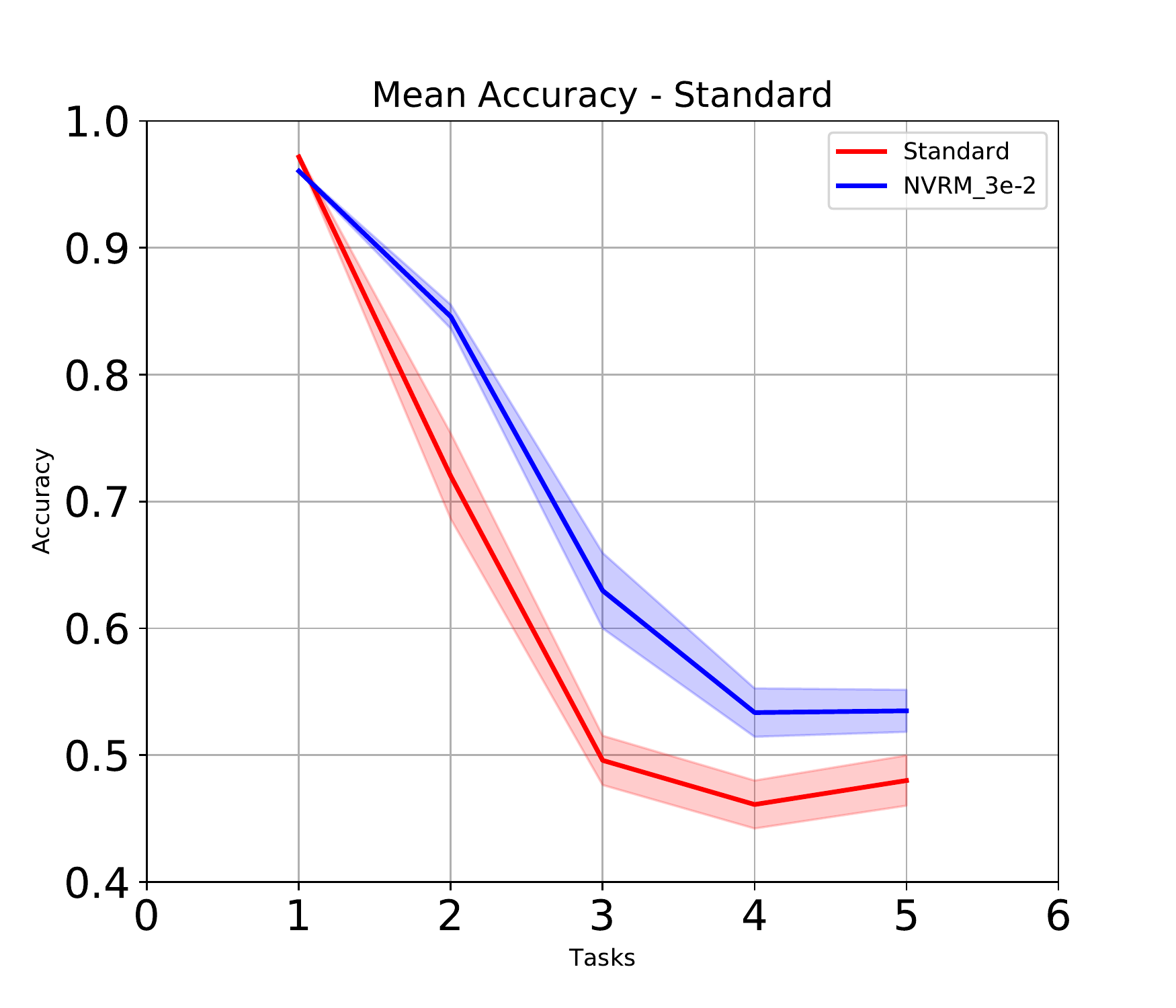}
  \end{subfigure}
  \caption{Curves of test accuracy to the number of tasks in continually learning Permuted MNIST. The two subfigures are respective (1): the accuracy of the base task; (2): the mean accuracy of all learned tasks.}
  \label{fig:permuted}
\end{figure}

\begin{figure}[t!]
  \begin{subfigure}{0.99\linewidth}
  \centering
    \includegraphics[width=0.3\linewidth]{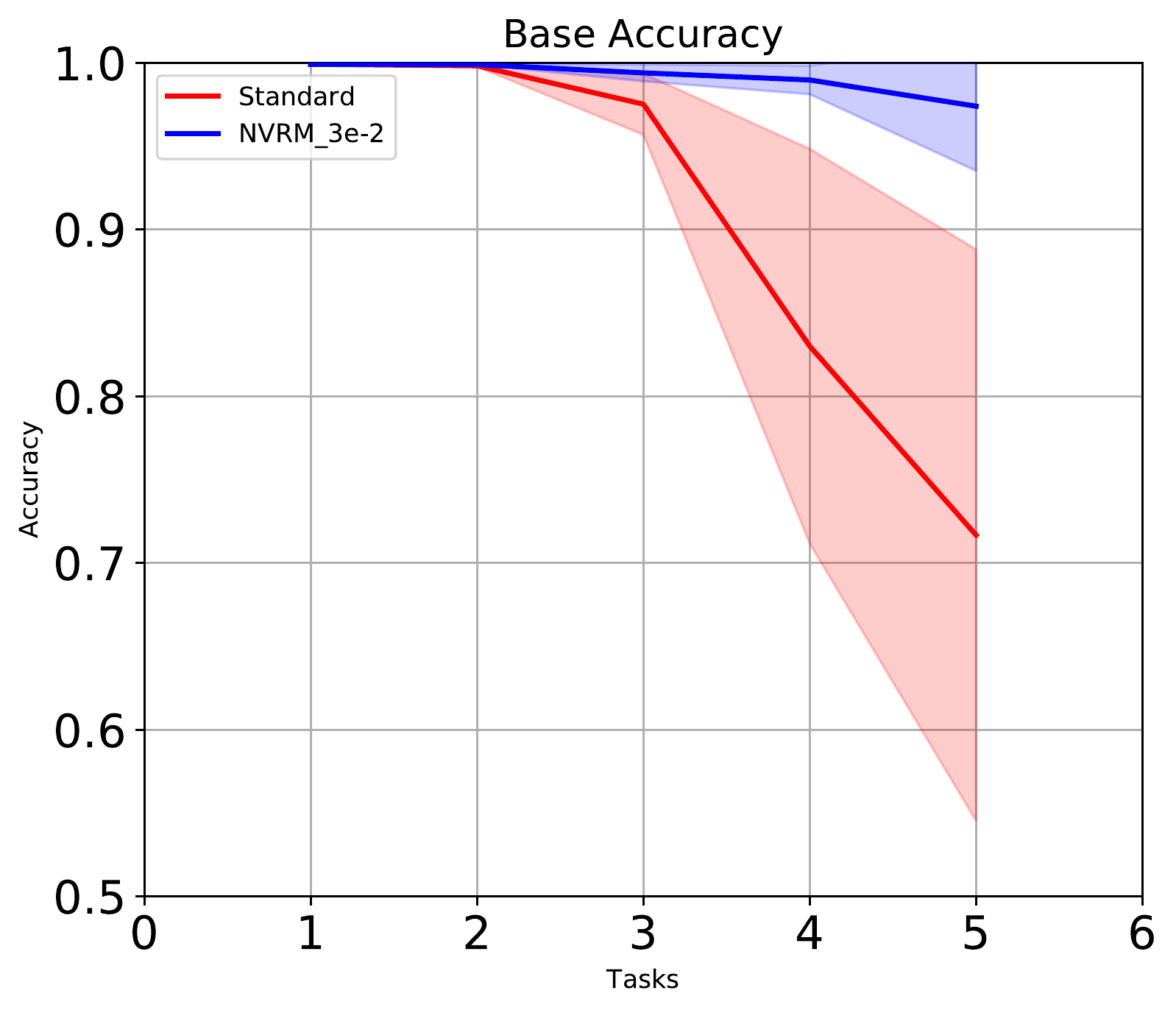}
    \includegraphics[width=0.3\linewidth]{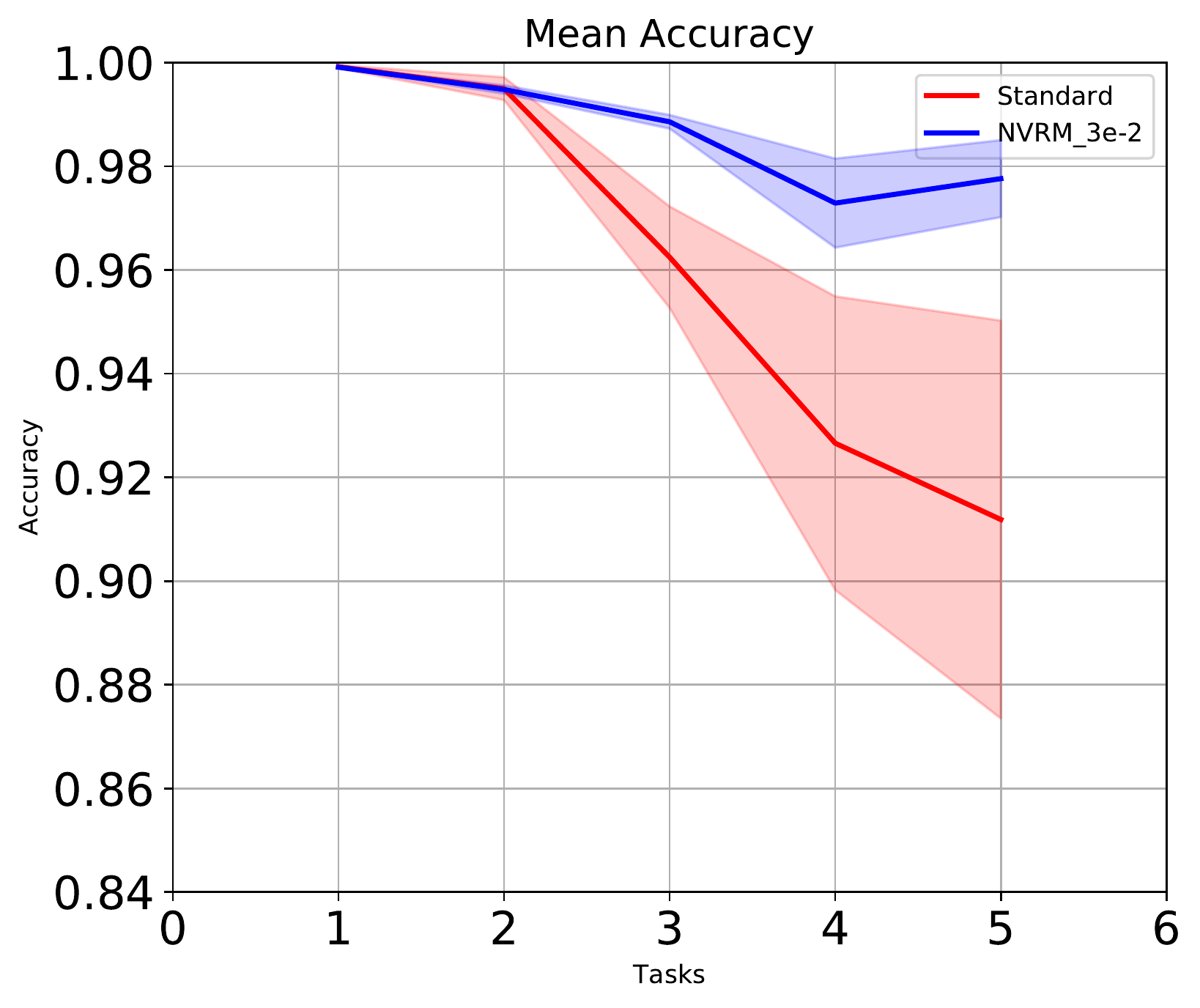}
  \end{subfigure}
  \caption{Curves of test accuracy to the number of tasks in continually learning split MNIST. The two subfigures are respective (1): the accuracy of the base task; (2): the mean accuracy of all learned tasks.}
  \label{fig:split}
\end{figure}

\begin{figure}[t!]
    \begin{subfigure}{0.99\linewidth}
    \centering
    \includegraphics[width=0.32\linewidth]{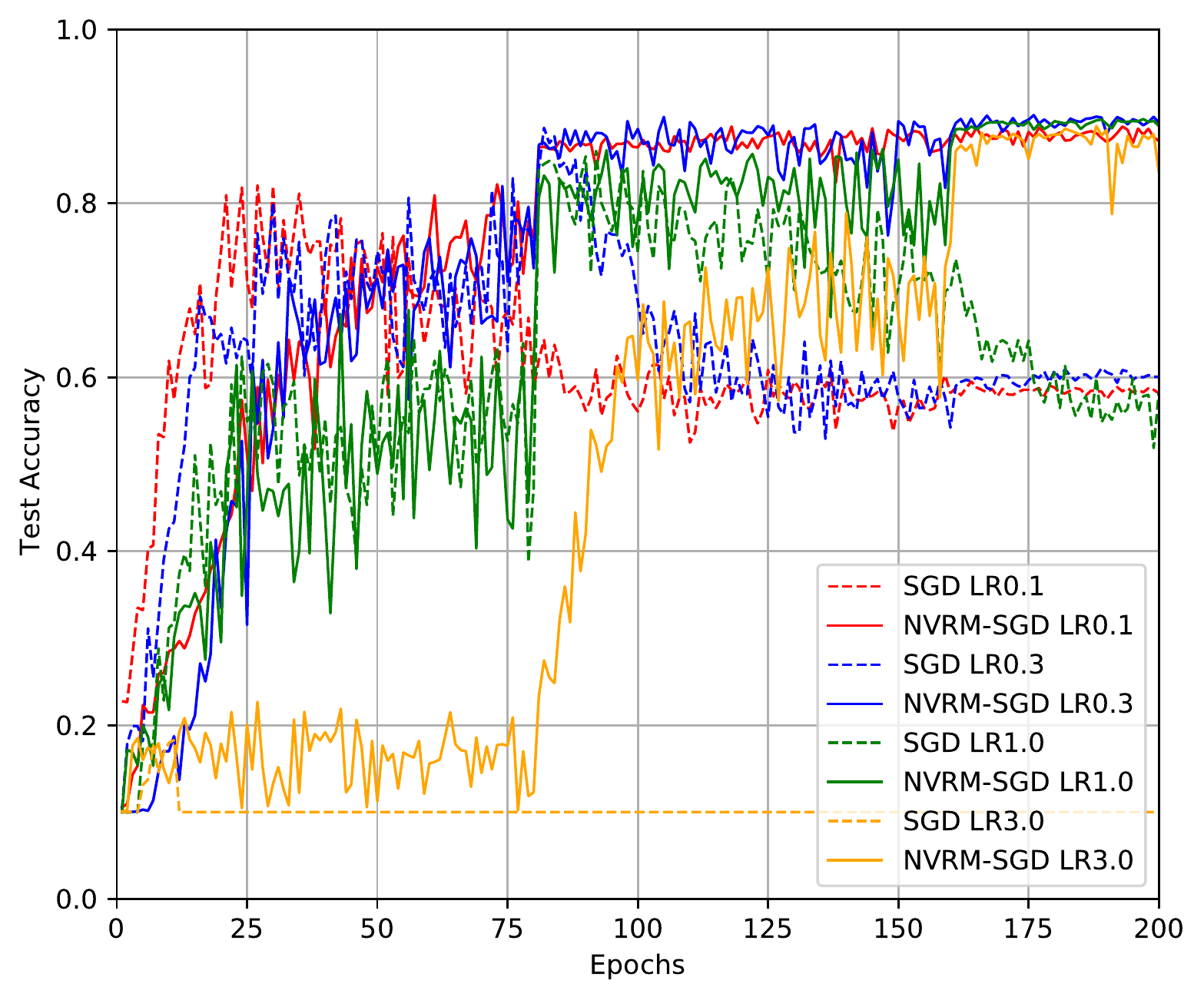}
    \includegraphics[width=0.32\linewidth]{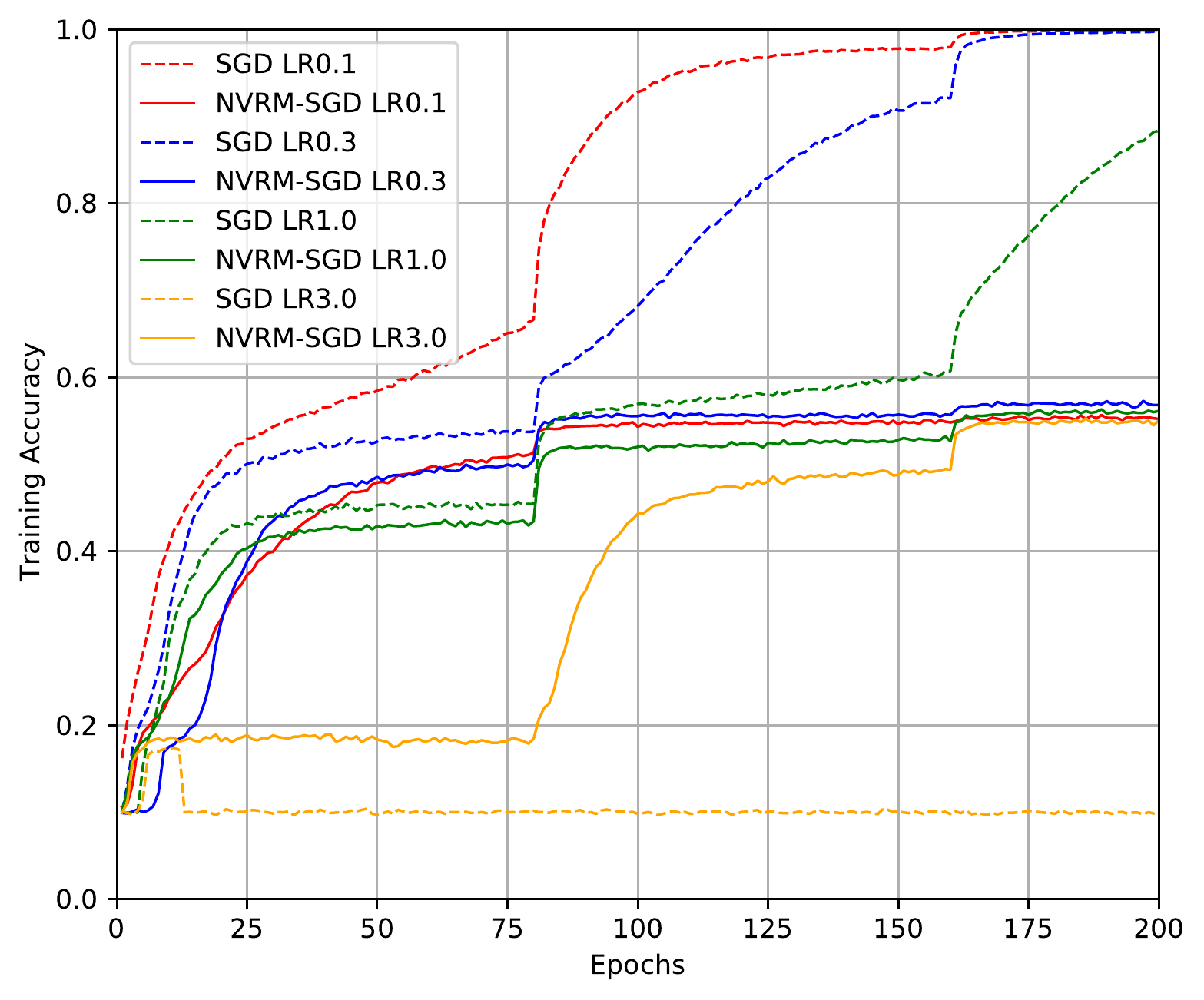}
  \end{subfigure}
  
  \caption{Curves of test accuracy to learning rates. Dataset: CIAF-10 with $40\%$ label noise. While SGD with larger stochastic gradient noise memorizes noisy labels more slowly, it still memorize nearly all noisy labels at the final phase of training. In contrast, NVRM-SGD with various learning rates can consistently relieve overfitting noisy labels. }
  \label{fig:labelnoiselr}
\end{figure}

\end{document}